%% file: main.tex
\documentclass[a4paper]{article} 
\usepackage[margin=1in]{geometry}

\usepackage{microtype}
\usepackage{graphicx}
\usepackage{subcaption}
\usepackage{booktabs} 
\usepackage{here}
\usepackage{authblk}

\usepackage{amsmath,amsthm,amsfonts,amssymb}
\usepackage{mathtools}
\usepackage{xcolor}
\usepackage[sort]{natbib}
\usepackage{algorithm} 
\usepackage{algpseudocode} 
\usepackage{url}
\usepackage{multirow} 

\usepackage[T1]{fontenc}
\usepackage[utf8]{inputenc}
\usepackage{bbm}

\usepackage{makecell}
\usepackage{thmtools}
\usepackage{thm-restate}
\usepackage[hypertexnames=false]{hyperref}

\input{macros}

\title{Improved Regret Analysis \\for Parallel Gaussian Process Bandit Optimization}

\date{}

\author[1]{Shion Takeno}
\author[2]{Shogo Iwazaki}

\affil[1]{Nagoya University}
\affil[2]{MI-6 Ltd.}

\affil[ ]{\texttt{{takeno.s.mllab.nit@gmail.com}}}
\affil[ ]{\texttt{{shogo.iwazaki@gmail.com}}}

\begin{document}
\maketitle

\begin{abstract}
This paper studies the regret analysis for parallel Gaussian process (GP) bandit optimization.
The known regret upper bounds for the widely used GP batched upper confidence bound and GP batched Thompson sampling (GP-BTS) suffer from a multiplicative factor with respect to the batch size $Q$.
To avoid this degradation, existing analyses require a polynomial number of uncertainty sampling (US) for $Q$ at the beginning of optimization.
However, this initial US phase is often ineffective in practice.
This paper shows that the regret upper bound without the multiplicative factor on $Q$ can be achieved without the initial US phase, using GP-BTS as an example.
Furthermore, we show much better regret upper bounds in the noiseless setting than in the noisy setting, as in the sequential GP bandit setting.
\end{abstract}
\section{Introduction}
\label{sec:intro}

Gaussian process (GP) bandit optimization \citep{Chowdhury2017-on,Srinivas2010-Gaussian} has emerged as a powerful framework for black-box optimization of expensive-to-evaluate functions, which is required in various scientific and engineering domains.
For this purpose, GP bandit methods model the unknown objective function by a GP and sequentially select evaluation points.
However, in many practical scenarios, such as high-throughput screening or asynchronous distributed computing, function evaluations can be performed in parallel or subject to delayed feedback \citep{hernandez-lobato2017parallel,Kandasamy2018-Parallelised}.
In such parallel and delayed settings, the algorithm must select new query points without waiting for the results of recent queries, while maintaining query point diversity.

For this problem, batched and parallel variants of GP bandit algorithms have been proposed.
A fundamental theoretical challenge in these settings is the degradation of the cumulative regret bound by a multiplicative factor of the batch size or delay parameter $Q$.
For instance, the regret bounds of GP batched upper confidence bound (GP-BUCB) \citep{chowdhury2019batch,Desautels2014-Parallelizing} and GP batched Thompson Sampling (GP-BTS) \citep{chowdhury2019batch,vakili2021-scalable} suffer from a multiplicative factor on $Q$.
This can lead to substantial degradation, particularly with massive parallelization ($Q > 100$) \citep{hernandez-lobato2017parallel,vakili2021-scalable}.
To avoid this issue, the known analysis by \citet{Desautels2014-Parallelizing} requires an initial phase of uncertainty sampling (US), which degrades the practical optimization performance.

In this paper, we resolve the above problem by an improved regret analysis technique and apply it to GP-BTS as an example.
We summarize our main contributions as follows:
\begin{itemize}
    \item We develop Lemma~\ref{lemInflatedEllipticalPotentialLemmaRevised}, which plays a key role in decoupling the batch size $Q$ as the additional term from the regret upper bound.
    \item We prove that GP-BTS achieves a cumulative regret bound in the noisy setting with only an additive degradation of $Q$, eliminating the need for the practically inefficient initial US phase, as summarized in Table~\ref{tab:summary_noisy}.
    \item We provide the regret upper bounds for GP-BTS in the noiseless setting by leveraging the proof technique of \citep{iwazaki2025gaussian}, demonstrating that the regret upper bounds are much better than in the noisy case, as summarized in Table~\ref{tab:summary_noiseless}.
\end{itemize}

\begin{table}[t]
    \caption{
        Summary of cumulative regret upper bounds for GP-BTS and GP-TS in the noisy setting with respect to the number of function evaluations $T$ and the batch size $Q$.
        %
        %
        Note that $\widetilde{O}$ suppresses polylogarithmic factors for $Q$ and $T$.
        See Theorem~\ref{TheoCumulativeRegretBoundNoisy} for the details and the simple regret bounds.
        }
    \centering
    \begin{tabular}{ccc}
        \toprule
        & SE & Mat\'ern \\ \hline \addlinespace
        GP-BTS (Ours) & $O\left( Q \ln^{d+1} Q \! + \! \sqrt{T \ln^{2d + 3} T} \right)$ & $\widetilde{O}\left( Q^{\frac{2 \nu + d}{2 \nu}} \! + T^{\frac{2\nu + 3d}{4\nu + 2d}} \right)$ \\ 
        \makecell{GP-BTS\\(Based on \citep{vakili2021-scalable})} & $O\left( \sqrt{Q T \ln^{d + 1} \frac{T}{Q}  \ln^{d + 2} T } \right)$ & $\widetilde{O}\left( Q^{\frac{\nu}{2\nu + d}} T^{\frac{2\nu + 3d}{4\nu + 2d}} \right)$ \\ 
        sequential GP-TS \citep{Chowdhury2017-on} & $O\left( \sqrt{T \ln^{2d + 3} T} \right)$ & $\widetilde{O}\left(T^{\frac{2\nu + 3d}{4\nu + 2d}} \right)$ \\
        \bottomrule
    \end{tabular}
    \label{tab:summary_noisy}
\end{table}

\begin{table}[t]
    \caption{
        Summary of cumulative regret upper bounds of GP-BTS, GP-UCB-based batched method (GP-UCB-batch), and sequential GP-UCB in the noiseless setting with respect to the number of function evaluations $T$ and the batch size $Q$.
        Note that $\widetilde{O}$ suppresses polylogarithmic factors for $Q$ and $T$.
        %
        %
        %
        %
        See Theorem~\ref{thm:CumulativeRegretUpperBoundNoiseless} for the details.
        }
    \centering
    \setlength{\tabcolsep}{1.5pt}
    \makebox[\textwidth][c]{
        \begin{tabular}{lcccc}
            \toprule
            & & \multicolumn{3}{c}{Mat\'ern} \\
            \cmidrule(lr){3-5}
            Method & SE & $d > \nu$ & $d = \nu$ & $d < \nu$ \\
            \midrule
            GP-BTS (Ours) 
            & $O(Q\ln^{\frac{1}{2}} T)$ 
            & $\widetilde{O}\left(Q + Q^{\frac{\nu}{d}} T^{\frac{d-\nu}{d}} \right)$ 
            & $O\left(Q \ln^{\frac{5}{2}} T \right)$ 
            & $O\left(Q \ln^{\frac{1}{2}} T\right)$ \\
            \addlinespace
            GP-UCB-batch \citep{lyu2019efficient}
            & $O\left( \! \sqrt{Q T \ln^{2d + 3} T} \right)$ 
            & \multicolumn{3}{c}{$\widetilde{O}\left( Q^{\frac{1}{2}} T^{\frac{2\nu + 3d}{4\nu + 2d}} \right)$} \\
            \addlinespace
            sequential GP-UCB \citep{iwazaki2025gaussian}
            & $O(1)$ 
            & $\widetilde{O}\left( T^{\frac{d-\nu}{d}} \right)$ 
            & $O\left( \ln^2 T \right)$ 
            & $O(1)$ \\
            \bottomrule
        \end{tabular}
        }
    \label{tab:summary_noiseless}
\end{table}

\subsection{Related work}
\label{sec:related}

\paragraph{Sequential GP bandits.}
In the sequential setting, the theoretical properties of GP bandits have been extensively studied under the frequentist assumption that the objective function belongs to a reproducing kernel Hilbert space (RKHS).
In particular, GP-UCB \citep{Chowdhury2017-on,Srinivas2010-Gaussian} and GP-TS \citep{Chowdhury2017-on} achieve the sublinear cumulative regret bounds.
Several algorithms, such as phased elimination (PE) and SupKernelUCB, achieve near-optimal regret bounds but are often impractical or computationally complex \citep{calandriello2019gaussian,iwazaki2025-improvedGPbandit,janz2020-bandit,li2022gaussian,valko2013finite,vakili2021-optimal}.
In particular, although PE can readily yield a near-optimal regret bound in the parallel setting, we focus on the analysis of GP-BTS due to PE's practical ineffectiveness.

\paragraph{Parallel GP bandits.}
The regret bounds of parallel GP bandits have been explored through various approaches.
GP-BUCB \citep{Desautels2014-Parallelizing,chowdhury2019batch} and GP-BTS \citep{chowdhury2019batch,vakili2021-scalable} suffer from the multiplicative factor on $Q$ in the regret upper bound.
Although \citet{Contal2013-Parallel} claimed that GP-UCB with pure exploration (GP-UCB-PE) achieves the cumulative regret upper bound without the multiplicative factor on $Q$, its proof leaves some technical ambiguities, \footnote{In the proof of Lemma~6 in \citep{Contal2013-Parallel}, the unjustified inequality $\hat{\sigma}_t(\*x^*) \leq \hat{\sigma}_t(\*x_t^k)$ seems to be used.} making the exact regret behavior unclear.
Similarly, under Bayesian assumptions, several regret upper bounds, which also suffer from a multiplicative factor on $Q$, have been shown \citep{Desautels2014-Parallelizing,Kandasamy2018-Parallelised,nava2022diversified,sugiura2026randomized}.
Our work resolves this fundamental bottleneck by achieving an additive factor on $Q$ without any initial US phase.

\paragraph{Stochastically delayed GP bandits.}
Recently, \citet{verma2022bayesian} and \citet{vakili2023delayed} have studied GP bandits for the stochastic delayed-feedback setting, based on a tailor-made confidence interval for this setting.
Although we focus on the deterministically delayed-feedback setting, an application of our analysis to this setting can be an important future direction.

\paragraph{GP bandits with noiseless feedback.}
The regret analyses are also conducted on the noiseless setting~\citep{bull2011convergence,lyu2019efficient,vakili2022open,salgia2024random,flynn2025-tighter,iwazaki2025-improvedGPbandit,iwazaki2025gaussian,kim2025enhancing}.
In particular, although \citet{lyu2019efficient} have studied the parallelization of GP bandit, their regret upper bounds have almost the same order as in the noisy case.
Recently, near-optimal cumulative regret bounds for the sequential GP-UCB have been derived by \citet{iwazaki2025gaussian}.
We obtain much better regret bounds than \citep{lyu2019efficient} by leveraging the proof of \citep{iwazaki2025gaussian}.

\paragraph{Regret lower bound.}
Several studies have provided algorithm-independent regret lower bounds for the GP bandit problem \citep{cai2021on,li2024regret,ma2026batched,scarlett2017lower}.
For both noisy and noiseless settings, our regret analysis is not tight, since PE and US-based algorithms can achieve the near-optimal regret bound that matches the regret lower bound therein (see \citep{iwazaki2025gaussian,iwazaki2025-improvedGPbandit}).
On the other hand, since PE and US-based algorithms are often practically ineffective, we tackle tightening the regret upper bounds of the GP-TS-based parallelization method, which is widely used \citep{chowdhury2019batch,hernandez-lobato2017parallel,Kandasamy2018-Parallelised,nava2022diversified,vakili2021-scalable}.

\section{Preliminaries}
\label{sec:prelimary}

This section presents the problem statement, surrogate models, and assumptions.

\subsection{Problem statement}

We consider the $d$-dimensional maximization problem of $f: \cX \rightarrow \RR$ with an input domain $\cX \subset \RR^d$:
\begin{align*}
    \*x^* \in \argmax_{\*x \in \cX} f(\*x),
\end{align*}
where the evaluations of the most recent $Q$ queries are not yet observed.
Since evaluating the objective function $f$ is assumed to be expensive, we sequentially choose $\*x_t$ for all iterations $t$ and obtain a (possibly noisy) observation $y_t$, by which we aim to optimize with a smaller number of observations.
Hence, the dataset that we have at the $t$-th iteration is $\cD_{t-Q-1}= \cbr{(\*x_i, y_i)}_{i=1}^{t-Q-1}$ since we do not have the observations at $\*x_{t-Q}, \dots, \*x_{t-1}$.
Thus, setting $Q=0$ recovers the usual sequential optimization problem.
This setting includes optimization problems with delayed or parallel feedback, a typical example of which is an asynchronous parallelization setup with $Q + 1$ workers.
Note that although we consider this delayed setting for simplicity, our analysis can apply to other settings, such as the batch setting where $Q + 1$ queries are issued simultaneously.

\paragraph{Regret.}
We evaluate the performance by the cumulative and simple regrets \citep{Srinivas2010-Gaussian,Russo2014-learning} defined below:
\begin{align*}
    R_T = \sum_{t = 1}^T f(\*x^*) - f(\*x_t),
    \quad
    r_T = f(\*x^*) - f(\hat{\*x}_T),
\end{align*}
where $\hat{\*x}_T$ is some recommendation input by the algorithm at the end of the $T$-th iteration.
As detailed in Section~\ref{sec:analysis}, we will analyze the expectation of the regret, where the expectation is taken with respect to all the randomness arising from the algorithm and the noise sequence (if it exists).

\subsection{Gaussian process regression (GPR) model}
\label{sec:GPR}

We use a GPR model \citep{Rasmussen2005-Gaussian} as the surrogate model, as with \citep{Chowdhury2017-on,janz2020-bandit,Srinivas2010-Gaussian}.
Suppose that we have the training dataset $\cD_t = \{ (\*x_i, y_i) \}_{i=1}^t$, where $y_t = f(\*x_t) + \epsilon_t$ with i.i.d. noise $\epsilon_t \sim \cN(0, \lambda^2)$.
Under the assumption that $f$ follows a GP denoted as $\cG \cP (0, k)$, where $k: \cX \times \cX \rightarrow \RR$ is a kernel function, the posterior distribution $p(f \mid \cD_t)$ is a GP again.
That is, $f \mid \cD_t \sim \cG\cP(\mu_{t}, k_{t} )$, whose posterior mean and covariance functions are given below:
\begin{equation}
    \begin{split}
        \mu_{t}(\*x) &= \*k_{t}(\*x)^\top \bigl(\*K_t + \lambda^2 \*I_{t} \bigr)^{-1} \*y_{t}, \\
        k_{t} (\*x, \*x^\prime) &= k(\*x, \*x^\prime) - \*k_{t}(\*x)^\top \bigl(\*K_t + \lambda^2 \*I_{t} \bigr)^{-1} \*k_{t}(\*x^\prime),
    \end{split}
    \label{eq:GP}
\end{equation}
where $\*k_{t}(\*x) = \bigl( k(\*x, \*x_1), \dots, k(\*x, \*x_{t}) \bigr)^\top \in \RR^{t}$, $\*K_t \in \RR^{t \times t}$ is the kernel matrix whose $(i, j)$-element is $k(\*x_i, \*x_j)$, $\*I_{t} \in \RR^{t \times t}$ is the identity matrix, and $\*y_{t} = (y_1, \dots, y_t)^\top \in \RR^{t}$.
For the sake of notational simplicity, we define 
$\sigma_t^2(\*x) = k_t(\*x, \*x)$ and
$\mu_{-i}(\*x) = 0$, 
$\sigma_{-i}^2 (\*x) = k(\*x, \*x)$,
and $k_{-i} (\*x, \*x^\prime) = k(\*x, \*x^\prime)$
for all $i = \{0, \dots, Q \}$ and $\*x, \*x^\prime \in \cX$. 
When $\lambda = 0$ and $\*K_t$ becomes not invertible though $\*K_{t-1}$ is invertible, we do not add $(\*x_t, y_t)$ to the training dataset, that is, update $\cD_t \gets \cD_{t-1}$ as in \citep{iwazaki2025gaussian}.

\paragraph{Maximum information gain (MIG).}
The complexity of GP bandits is commonly quantified by the MIG~\citep{iwazaki2025improved,iwazaki2026tighter,Srinivas2010-Gaussian,vakili2021-information} defined below:
\begin{definition}
    Let $f \sim \cG \cP (0, k)$ over a compact input domain $\cX \subset [0, r]^d$.
    %
    %
    Then, MIG $\gamma_T$ is defined as follows:
    \begin{align*}
        \gamma_T(\lambda^2) 
        = \frac{1}{2} \sup_{\*x_1, \dots, \*x_T \in \cX}  \ln\left( {\rm det} \left( \*I_T + \lambda^{-2} \*K_T \right) \right),
    \end{align*}
    where $\*K_T \in \RR^{T \times T}$ is the kernel matrix whose $(i, j)$-element is $k(\*x_i, \*x_j)$.
    \label{def:MIG}
\end{definition}
The MIG of several commonly used kernel functions is known to be sublinear regarding $T$.
For example, $\gamma_T (\lambda^2) = O\bigl( \bigl(\ln (T / \lambda^2)\bigr)^{d+1} \bigr)$ for SE kernels $k_{\rm SE} (\*x, \*x^\prime) = \exp\left( - \| \*x - \*x^\prime \|_2^2 / (2 \ell^2) \right)$, and $\gamma_T (\lambda^2) = O\bigl( (T / \lambda^2)^{\frac{d}{2\nu + d}} \ln^{\frac{2\nu}{2\nu + d}} (T / \lambda^2) \bigr) = \widetilde{O}\bigl( (T / \lambda^2)^{\frac{d}{2\nu + d}} \bigr)$ for Mat\'{e}rn-$\nu$ kernels $k_{\rm Mat}(\*x, \*x^\prime) = \frac{2^{1 - \nu}}{\Gamma(\nu)} \left( \frac{\sqrt{2\nu} \| \*x - \*x^\prime \|_2 }{\ell} \right)^{\nu} J_{\nu} \left( \frac{\sqrt{2\nu} \| \*x - \*x^\prime \|_2 }{\ell} \right) $, where $\ell, \nu > 0$ are the lengthscale and smoothness parameter, respectively, and $\Gamma(\cdot)$ and $J_{\nu}$ are Gamma and modified Bessel functions, respectively \citep{iwazaki2025improved,iwazaki2026tighter,Srinivas2010-Gaussian,vakili2021-information}.
Note that $\widetilde{O}$ suppresses polylogarithmic factors.
\begin{remark}
    We analyze regret via the widely used MIG bound from \citep{vakili2021-information}.
    However, its flaw for Mat\'ern kernels has been pointed out by \citet{janz_2021} and \citet{iwazaki2025improved}. 
    Theorem 7 in \citet{iwazaki2025improved} provides the rectified MIG bound for the Mat\'ern kernels $\gamma_T = O\bigl( (T / \lambda^2)^{\frac{d}{2\nu + d}} \ln^{\frac{4\nu + d}{2\nu + d}} (T / \lambda^2) \bigr)$, which requires the additional polylogarithmic factor on $T$.
    In addition, a slightly tighter MIG bound for the SE kernels is provided by \citet{iwazaki2026tighter}. 
    By adapting these analyses, our regret upper bounds may change slightly.
\end{remark}
%
%
Hereafter, when we focus only on the order except for $\lambda^2$, we denote $\gamma_t (\lambda^2) = \gamma_t$ for simplicity.

\subsection{Regularity assumptions}
\label{sec:assumptions}

Although we use the GPR model defined in Sec. \ref{sec:GPR}, we assume a condition called the frequentist setting, which is different from the Bayesian assumption of the GPR model.
We assume the following common regularity assumption on $f$ \citep{Chowdhury2017-on,iwazaki2025-improvedGPbandit,janz2020-bandit,Srinivas2010-Gaussian}:
\begin{assumption}
    Let $f$ be an element of RKHS $\cH_k$ specified by the predefined kernel $k$ that satisfies $\forall \*x \in \cX, k(\*x, \*x) \leq 1$, where $\cX \subset [0, r]^d$ is a compact input space.
    Furthermore, the RKHS norm of $f$ is bounded as $\| f \|_{\cH_k} \leq B < \infty$ for some $B > 0$, where $\| \cdot \|_{\cH_k}$ denotes the RKHS norm of $\cH_k$.
    \label{assump:function}
\end{assumption}
This assumption imposes desirable properties on $f$, such as smoothness and boundedness.

We consider both the noisy case \citep{Russo2014-learning,Srinivas2010-Gaussian}, where the observations are contaminated by noise, and the noiseless case \citep{freitas2012exponential,iwazaki2025gaussian}, where we can obtain the function value as $y_t = f(\*x_t)$.
For the noisy case, we assume the following condition on the noise sequence as in \citep{Chowdhury2017-on,janz2020-bandit}:
\begin{assumption}
    We can observe contaminated observations $y_t = f(\*x_t) + \epsilon_t$ for all $t \in \NN$.
    The noise sequence $(\epsilon_t)_{t \in \NN}$ is a sequence of conditionally $R$-sub-Gaussian random variables.
    That is, for some $R > 0$, the moment generating function of $\epsilon_t$ satisfies $\EE [\exp (\eta \epsilon_t) \mid \cF_{t-1}] \leq \exp\bigl( \frac{\eta^2 R^2}{2} \bigr)$ for all $t \in \NN$ and $\eta \in \RR$, where $\cF_{t-1}$ is $\sigma$-algebra generated by the random variables $\{ \*x_i, y_i \}_{i=1}^{t-1}$ and $\*x_t$.
    \label{assump:noisy}
\end{assumption}

Furthermore, we assume the following conditions on smoothness and discretization as in \citep{Chowdhury2017-on,chowdhury2019batch,vakili2021-scalable,vakili2021-optimal,vakili2022improved}:
\begin{assumption}
    The kernel function $k$ satisfies the following condition on the derivatives.
    There exists a constant $L_k$ such that,
    $
        \sup_{\*x \in \cX} \sup_{j \in [d]} \left| \frac{\partial^2 k(\*u, \*v)}{\partial u_j \partial v_j}\bigg|_{\*u=\*v=\*x} \right|^{1/2} \leq L_k.
    $
    Furthermore, there exists a (computable) finite subset $\cX_t \subset \cX$ that satisfies $\max_{\*x \in \cX} \min_{\*x^\prime \in \cX_t} \| \*x - \*x^\prime \|_1 \leq \frac{1}{L_k N_t}$ and $|\cX_t| \leq (L_k drN_t)^d$ for all $t \in [T]$ and $N_t \in \NN$.
    \label{assump:discretization}
\end{assumption}
This assumption is satisfied at least by the SE kernels and the Mat\'ern kernels with $\nu > 1$.
The algorithm of GP-BTS depends on a predefined discretized input set $\cX_t \subset \cX$, as with \citep{Chowdhury2017-on,chowdhury2019batch,vakili2021-scalable}.
(For details, see Section~4.2 in \citep{Chowdhury2017-on}.)

\section{GP-BTS}
\label{sec:GP-BTS}

This section describes the algorithm and the existing analyses of GP-BTS.

\subsection{Algorithm}
\label{sec:algorithm}

We define GP-BTS following \citep{chowdhury2019batch,vakili2021-scalable}, whose pseudocode is shown in Algorithm~\ref{alg:GP-BTS}.
In our problem setup, since the recent $Q$ observations are not yet available, we need to query $\*x_t$ that is diverse relative to the recent $Q$ inputs.
From the nature of GP-TS, which depends on the random posterior sampling, we can determine $\*x_t$ that is randomly distributed, by which we expect that the inputs have diversity.
Thus, GP-BTS chooses the $t$-th input using the currently available dataset $\cD_{t - Q - 1}$ as follows:
\begin{align*}
    \*x_t = \argmax_{\*x \in \cX_t} g_t(\*x),
\end{align*}
where $g_t \sim \cG \cP(\mu_{t - Q - 1}, \beta_t k_{t - Q - 1})$ is the posterior sample path with the inflated posterior variance by the confidence width parameter $\beta_t$ to apply the proof technique by \citet{Chowdhury2017-on}.
Furthermore, we will theoretically specify $\beta_t$ as $\beta_t = \Theta(\gamma_t)$, as in \citep{vakili2021-scalable}, though GP-BTS in \citep{chowdhury2019batch} and GP-BUCB \citep{Desautels2014-Parallelizing,chowdhury2019batch} set $\beta_t = \Theta\bigl(\exp (\gamma_Q) \gamma_t\bigr)$.

\begin{algorithm}[!t]
    \caption{GP-BTS}\label{alg:GP-BTS}
    \begin{algorithmic}[1]
        \Require Domain $\cX$, kernel $k$, confidence width parameters $\{ \beta_t \}_{t \in \NN}$, discretized inputs $\{ \cX_t \}_{t \in \NN}$
        \State $\cD_{0} \gets \emptyset$
        \For{$t = 1, \dots, T$}
            \State If $t > Q + 1$, update $\mu_{t-Q-1} (\cdot)$ and $\sigma_{t-Q-1}^2 (\cdot)$ by Eq.~\eqref{eq:GP}
            \State Compute $\*x_t = \argmax_{\*x \in \cX_t} g_t(\*x)$ where $g_t \sim \cG \cP(\mu_{t - Q - 1}, \beta_t k_{t - Q - 1})$
            \State If $t \geq Q + 1$, observe $y_{t - Q} = f(\*x_{t - Q}) + \epsilon_{t - Q}$
        \EndFor
        \State \Return some recommendation $\hat{\*x}_T$
    \end{algorithmic}
\end{algorithm}

\subsection{Existing regret analysis}
\label{sec:existing_analysis}

Although \citet{vakili2021-scalable} focused on the GP-BTS with sparse GP approximation, their analysis can apply to our problem setup, where we can compute an exact GP posterior and an exact posterior sample path.
By adapting the analysis by \citet{vakili2021-scalable}, GP-BTS attains the following cumulative regret bound (a detailed and modified version of this claim is shown in Lemma~\ref{lem:GeneralRegretUpperBound}):
\begin{align}
    \EE\sbr{R_T}
    \lesssim \sqrt{\beta_T \ln T} \EE\sbr{\sum_{t=1}^T \sigma_{t-Q-1}(\*x_t)},
\end{align}
where $\lesssim$ ignores the constant factors except for $T$ and $Q$, and the expectation is taken with respect to the randomness of the algorithm and the noise sequence.
Since $\sigma_{t-Q-1}(\*x_t) \geq \sigma_{t-1}(\*x_t)$ holds, the above regret upper bound is larger than the regret bound for the sequential case, which depends on the quantity $\sum_{t=1}^T \sigma_{t-1}(\*x_t)$.
To obtain the upper bound for $\sum_{t=1}^T \sigma_{t-Q-1}(\*x_t)$, \citet{vakili2021-scalable} apply the following lemma:\footnote{Note that, due to the difference in the notation, $BT$ in \citep{vakili2021-scalable} is equivalent to $T$ in this paper.}
\begin{lemma}[Adapted from Lemma 3 in \citep{vakili2021-scalable}]
    \label{lem:VakiliLem3}
    For any input sequence $\*x_1, \dots, \*x_T \in \cX$, the following inequality holds:
    \begin{align*}
        \sum_{t=1}^T \sigma_{t-Q-1}(\*x_t)
        &\leq \sqrt{C_1 Q T \gamma_{T/Q}},
    \end{align*}
    where $C_1 = 2 / \ln(1 + \lambda^{-2})$.
\end{lemma}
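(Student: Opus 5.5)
Split the time indices by residue class modulo $Q+1$. For $j \in \{1, \dots, Q+1\}$, let $\cT_j = \{t \in [T] : t \equiv j \pmod{Q+1}\}$, so that $|\cT_j| \le \lceil T/(Q+1) \rceil$. Write its elements in increasing order as $t_{j,1} < t_{j,2} < \cdots$. Consecutive elements differ by exactly $Q+1$, so $t_{j,m} - Q - 1 = t_{j,m-1}$. The dataset $\cD_{t_{j,m}-Q-1}$ therefore contains all inputs $\*x_{t_{j,1}}, \dots, \*x_{t_{j,m-1}}$, plus other inputs. Since conditioning on more points only decreases posterior variance (Schur complement monotonicity, which holds for $\lambda>0$), we get $\sigma_{t_{j,m}-Q-1}(\*x_{t_{j,m}}) \le \tilde\sigma^{(j)}_{m-1}(\*x_{t_{j,m}})$. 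Here $\tilde\sigma^{(j)}_{m-1}$ is the posterior standard deviation computed only from the subsequence $\*x_{t_{j,1}},\dots,\*x_{t_{j,m-1}}$. The first element of each class uses the prior, so its term is at most $1$, and the convention $\sigma_{-i}=k$ is consistent with this.

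Within each class we then have a genuinely sequential sum. Apply the standard elliptical-potential argument of \citet{Srinivas2010-Gaussian}. Since $\sigma^2 \le 1$ and $x \le \frac{\lambda^{-2}}{\ln(1+\lambda^{-2})}\ln(1+x)$ for $x\in[0,\lambda^{-2}]$, we have
\begin{align*}
\sum_{m} \bigl(\tilde\sigma^{(j)}_{m-1}(\*x_{t_{j,m}})\bigr)^2 \le \frac{1}{\ln(1+\lambda^{-2})}\sum_m \ln\bigl(1+\lambda^{-2}(\tilde\sigma^{(j)}_{m-1})^2\bigr) \le C_1 \gamma_{|\cT_j|}.
\end{align*}
The last step uses the chain rule for $\tfrac12\ln\det(\*I+\lambda^{-2}\*K)$ together with the definition of MIG. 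Cauchy--Schwarz then gives $\sum_{t\in\cT_j}\sigma_{t-Q-1}(\*x_t) \le \sqrt{|\cT_j|\, C_1 \gamma_{|\cT_j|}}$.

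Finally, sum over $j$ and apply Cauchy--Schwarz across the $Q+1$ classes:
\begin{align*}
\sum_{j}\sqrt{|\cT_j| C_1\gamma_{|\cT_j|}} \le \sqrt{(Q+1)\, C_1 \gamma_{\lceil T/(Q+1)\rceil}\sum_j |\cT_j|} = \sqrt{C_1 (Q+1) T \gamma_{\lceil T/(Q+1)\rceil}}.
\end{align*}
This uses monotonicity of $\gamma$. The main obstacle is matching the exact form $\sqrt{C_1 Q T\gamma_{T/Q}}$ stated in the lemma. One way is to note that $Q+1\le 2Q$ and $\lceil T/(Q+1)\rceil \le T/Q$ for $T\ge Q+1$ and $Q\ge1$, then absorb the constant. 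Another is to adopt the batch convention of \citet{vakili2021-scalable}, where $Q$ denotes the number of parallel streams, in which case the residue classes have exactly $Q$ members and size $T/Q$. The statement is "adapted", so I would align the indexing convention with theirs rather than fight the constants.
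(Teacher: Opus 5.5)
Your argument is correct and takes the same route as the source. The paper gives no proof beyond citing Lemma~3 of \citet{vakili2021-scalable}, which is exactly this per-stream elliptical-potential bound followed by Cauchy--Schwarz across streams. The paper's own analogous step, Eq.~\eqref{eq:EllipticalPotentialCountLargeVar} in the noiseless appendix, uses the same residue-class decomposition. Your split modulo $Q+1$ is the one that matches the $\sigma_{t-Q-1}$ indexing. The appendix splits modulo $Q$, which works only under Vakili's convention that $Q$ counts parallel streams, because under the delay indexing the previous class member $t-Q$ is not in $\cD_{t-Q-1}$.

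What you should drop is your first reconciliation route.
\begin{itemize}
\item The claim $\lceil T/(Q+1)\rceil\le T/Q$ for $T\ge Q+1$ is false: for $Q=3$, $T=5$ the left side is $2>5/3$. It only holds once $T\ge Q^2$.
\item Using $Q+1\le 2Q$ turns $C_1$ into $2C_1$, and that cannot be absorbed when the constant is stated explicitly.
\end{itemize}
In fact no repair of this kind can succeed. Under the paper's literal delay-$Q$ indexing, the bound with exactly $C_1$ fails for small $T$. At $T=Q+1$ every summand is a prior standard deviation, so for a kernel with $k(\*x,\*x)=1$ the left side can equal $Q+1$. The right side then depends on how $\gamma_{T/Q}$ is read:
\begin{itemize}
\item Read as $\gamma_{\lfloor T/Q\rfloor}$ with $Q\ge 2$, it is $\sqrt{Q(Q+1)C_1\gamma_1}=\sqrt{Q(Q+1)}<Q+1$.
\item Read as $\gamma_{\lceil T/Q\rceil}=\gamma_2$, it is still below $Q+1$ whenever $C_1\gamma_2<1+1/Q$. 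This happens, for example, with small noise on a domain that is small relative to the lengthscale.
\end{itemize}
So your second route, reading $Q$ as the number of parallel streams, is the only correct interpretation and not merely a convenience. Under it your argument gives the bound exactly, with $\gamma_{\lceil T/Q\rceil}$. In this paper's own notation, what you actually proved, $\sqrt{C_1(Q+1)T\gamma_{\lceil T/(Q+1)\rceil}}$, is the correct form of the lemma.
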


Finally, \citet{vakili2021-scalable} have derived the following cumulative regret upper bound of GP-BTS:
\begin{align}
    \label{eq:CumualativeRegretBound_vakili}
    \EE\left[ R_T \right] = O\left( \sqrt{Q \gamma_{T / Q} \gamma_T T \ln T} \right).
\end{align}
Compared with the $O\rbr{\gamma_T \sqrt{T \ln T}}$ bound of the sequential GP-TS \citep{Chowdhury2017-on}, $\sqrt{\gamma_T}$ is replaced with $\sqrt{Q \gamma_{T / Q}}$.
Since we focus on the case where $\gamma_T$ is sublinear, this replacement is a degradation with respect to $Q$.
%
%
In the next section, we show that this degradation can be avoided without the impractical initial US phase.

\section{Improved regret analysis}
\label{sec:analysis}

%
%
First, we derive Lemma~\ref{lem:GeneralRegretUpperBound}, which shows the instantaneous regret upper bound of GP-BTS, modified slightly from \citep{vakili2021-scalable} to apply Lemmas~\ref{lemInflatedEllipticalPotentialLemmaRevised} and \ref{lemPosteriorSTDUpperBoundNoiseless_informal}.
Then, we show the regret upper bounds for GP-BTS in noisy and noiseless settings.

We can obtain the following instantaneous regret bound for GP-BTS:
\begin{restatable}[Modified from Theorem 1 in \citep{vakili2021-scalable}]{lemma}{lemGeneralRegretUpperBound} \label{lem:GeneralRegretUpperBound}
    Suppose Assumptions~\ref{assump:function} and \ref{assump:discretization} hold.
    Set $p_t^{(g)} \in (0, 1)$, $N_t \in \NN$ and 
    (i) $\lambda > 0$, $p_t^{(f)} \in (0, 1)$ and $\beta_t^{1/2}(\delta) = B + \frac{R}{\lambda} \sqrt{2\bigl(\gamma_t + \ln (1 / \delta)\bigr)}$ for the noisy setting where Assumption~\ref{assump:noisy} holds or
    (ii) $\lambda = p_t^{(f)} = 0$ and $\beta_t^{1/2}(\delta) = B$ for the noiseless setting where $y_t = f(\*x_t)$.
    Then, if Algorithm~\ref{alg:GP-BTS} runs, for all $t \in [T]$, the following inequality holds:
    \begin{align*}
        \EE[f(\*x^*) - f(\*x_t)] 
        \leq 
        \frac{2 + p}{p} \EE\left[ \min \left\{2B, c_t \sigma_{t-Q-1} (\*x_t)\right\} \right]
        + \frac{B}{N_t} 
        + 2B p_t^{(f)}
        + 2B p_t^{(g)}
    \end{align*}
    where 
    $p = 1 / (4 e \sqrt{\pi})$,
    $\zeta_t (\delta) = 2 \ln\bigl( 2 (L_k drN_t)^d / \bigl(p \delta \bigr) \bigr)$,
    $c_t = \beta_t^{1/2}\bigl(p_t^{(f)}\bigr) \left(2 \zeta_t^{1/2}\bigl(p_t^{(g)}\bigr) + 1 \right)$,
    and the expectation is taken with respect to $\{ \epsilon_t \}_{t \in [T]}$ and $\{ g_t \}_{t \in [T]}$ for the noisy setting and $\{ g_t \}_{t \in [T]}$ for the noiseless setting.
\end{restatable}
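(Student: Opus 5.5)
The plan is to adapt the Chowdhury--Gopalan ``saturated arms'' argument for GP-TS, replacing the current posterior $\cD_{t-1}$ by the delayed posterior $\cD_{t-Q-1}$ throughout. Conditioned on $\cD_{t-Q-1}$ (that is, on the history up to time $t-Q-1$), the sample $g_t$ is drawn independently of everything else. Because $\cX_t$ is measurable with respect to $t$ only, every argument of Chowdhury et al.\ can be run with $\mu_{t-Q-1},\sigma_{t-Q-1}$ in place of $\mu_{t-1},\sigma_{t-1}$.

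We introduce two events. The first is $E^f_t$: $|f(\*x)-\mu_{t-Q-1}(\*x)|\le \beta_t^{1/2}\sigma_{t-Q-1}(\*x)$ for all $\*x\in\cX$. In the noisy case this holds with probability at least $1-p_t^{(f)}$ by the Chowdhury--Gopalan concentration with $\beta_t(p_t^{(f)})$, using $\gamma_{t-Q-1}\le\gamma_t$. In the noiseless case it holds deterministically by the RKHS interpolation bound $|f-\mu_t|\le B\sigma_t$. The second is $E^g_t$: $|g_t(\*x)-\mu_{t-Q-1}(\*x)|\le \beta_t^{1/2}\zeta_t^{1/2}\sigma_{t-Q-1}(\*x)$ for all $\*x\in\cX_t$. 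Gaussian tails plus a union bound over $|\cX_t|\le(L_kdrN_t)^d$ show this holds with conditional probability at least $1-p\,p_t^{(g)}/2$; the factor $p$ and the $2$ are absorbed in $\zeta_t$. On the complement of either event we bound the regret by $2B$, because $|f|\le B$. This produces the terms $2Bp_t^{(f)}+2Bp_t^{(g)}$. The discretization error $f(\*x^*)-f([\*x^*]_t)\le B/N_t$ follows from Assumption~\ref{assump:discretization} together with the Lipschitz property $|f(\*x)-f(\*x')|\le BL_k\|\*x-\*x'\|_1$ of RKHS functions.

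Next we set $c_t\sigma_{t-Q-1}$ as the saturation threshold and define the unsaturated set $S_t=\{\*x\in\cX_t: f([\*x^*]_t)-f(\*x)\le c_t\sigma_{t-Q-1}(\*x)\}$, which contains $[\*x^*]_t$. Anti-concentration gives $\Pr(g_t([\*x^*]_t)>f([\*x^*]_t)\mid\cD_{t-Q-1})\ge p$ on $E^f_t$. Combined with $E^g_t$, this shows that $\*x_t\in S_t$ with conditional probability at least $p-p\,p_t^{(g)}/2\ge p/2$. Let $\bar{\*x}_t=\argmin_{\*x\in S_t}\sigma_{t-Q-1}(\*x)$; it is $\cD_{t-Q-1}$-measurable. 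We then get $\EE[\sigma_{t-Q-1}(\*x_t)\mid\cD_{t-Q-1}]\ge (p/2)\sigma_{t-Q-1}(\bar{\*x}_t)$. On $E^f_t\cap E^g_t$, the standard chain
\[
f([\*x^*]_t)-f(\*x_t)\le c_t\sigma_{t-Q-1}(\bar{\*x}_t)+c_t\sigma_{t-Q-1}(\*x_t)
\]
holds, using $g_t(\*x_t)\ge g_t(\bar{\*x}_t)$. Taking conditional expectations then yields the factor $\frac{2}{p}+1=\frac{2+p}{p}$.

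The main obstacle is to insert the $\min\{2B,\cdot\}$ inside the expectation, which is needed later for Lemma~\ref{lemInflatedEllipticalPotentialLemmaRevised}. To handle it, we note that the regret is always at most $2B$, so every bound above can be replaced by its minimum with $2B$. It therefore suffices to run the argument with $h(\*x)=\min\{2B,c_t\sigma_{t-Q-1}(\*x)\}$ in place of $c_t\sigma_{t-Q-1}$: the saturated-set definition is unchanged, since any $\*x$ with $c_t\sigma>2B$ is automatically unsaturated. The key step becomes $\EE[h(\*x_t)\mid\cD_{t-Q-1}]\ge(p/2)h(\bar{\*x}_t)$, where $\bar{\*x}_t$ minimizes $h$ (equivalently $\sigma_{t-Q-1}$, since $h$ is monotone in it) over $S_t$. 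The remaining care is in bookkeeping the failure probabilities conditionally, so that the $2Bp_t^{(f)}$ term stays outside the expectation, and in verifying that with $\lambda=0$ the rank-deficiency update rule does not break the noiseless interpolation bound.
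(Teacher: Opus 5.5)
Your plan follows the paper's proof. It runs the Chowdhury--Gopalan saturated-set argument on the delayed posterior $\cD_{t-Q-1}$, with the events $E^{(f)}_t$ and $E^{(g)}_t$, where $\zeta_t$ is calibrated so that $E^{(g)}_t$ fails with conditional probability at most $p\,p_t^{(g)}/2$. It takes $\bar{\*x}_t$ as the $\sigma_{t-Q-1}$-minimizer over the unsaturated set and uses the bound $\EE[\min\{2B,c_t\sigma_{t-Q-1}(\*x_t)\}\mid\cD_{t-Q-1}]\ge\frac{p}{2}\min\{2B,c_t\sigma_{t-Q-1}(\bar{\*x}_t)\}$ on $E^{(f)}_t$. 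The split $\min\{2B,u+v\}\le\min\{2B,u\}+\min\{2B,v\}$ then produces $\frac{2}{p}+1$. Two steps, however, do not hold as you wrote them.

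First, the claim $\Pr(g_t([\*x^*]_t)>f([\*x^*]_t)\mid\cD_{t-Q-1})\ge p$ on $E^{(f)}_t$ fails when $\sigma_{t-Q-1}([\*x^*]_t)=0$. This does occur in the noiseless case (ii), e.g.\ once $[\*x^*]_t$ is among $\*x_1,\dots,\*x_{t-Q-1}$. Then $g_t([\*x^*]_t)=\mu_{t-Q-1}([\*x^*]_t)=f([\*x^*]_t)$ almost surely, so the probability is $0$ and your bound $\Pr(\*x_t\in S_t\mid\cD_{t-Q-1})\ge p/2$ no longer follows. The paper instead uses the non-strict event $g_t([\*x^*]_t)\ge f([\*x^*]_t)$. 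This has conditional probability $1$ when $\sigma_{t-Q-1}([\*x^*]_t)=0$ and at least $p$ otherwise. It is enough because on $E^{(f)}_t\cap E^{(g)}_t$ every saturated $\*x$ satisfies $g_t(\*x)<f([\*x^*]_t)$ strictly, so a saturated point still cannot be the maximizer.

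Second, with your threshold $c_t$ in the definition of $S_t$, the chain does not give $c_t\sigma_{t-Q-1}(\bar{\*x}_t)+c_t\sigma_{t-Q-1}(\*x_t)$. Write $b_t=\beta_t^{1/2}(\zeta_t^{1/2}+1)$. Unsaturation of $\bar{\*x}_t$ contributes $c_t\sigma_{t-Q-1}(\bar{\*x}_t)$. The terms $f(\bar{\*x}_t)-g_t(\bar{\*x}_t)$ and $g_t(\*x_t)-f(\*x_t)$ contribute $b_t\sigma_{t-Q-1}(\bar{\*x}_t)$ and $b_t\sigma_{t-Q-1}(\*x_t)$. So the coefficient of $\sigma_{t-Q-1}(\bar{\*x}_t)$ is $c_t+b_t$, not $c_t$.

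The threshold should be the smallest one for which the exclusion step works, namely $b_t$, as in the paper. The coefficient is then $2b_t=c_t+\beta_t^{1/2}$. The paper's displayed chain also loses this $\beta_t^{1/2}$, because it uses $\beta_t^{1/2}\zeta_t^{1/2}$ instead of $b_t$ in the unsaturation step.

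In either case only an absolute constant changes. Since $c_t+b_t\le 2c_t$ and $\min\{2B,as\}\le a\min\{2B,s\}$ for $a\ge1$, $s\ge0$, you obtain the bound with $\frac{4+p}{p}$ in place of $\frac{2+p}{p}$. This does not affect any of the order-level results built on this lemma.
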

See Appendix~\ref{sec:proof_GeneralRegretUpperBoundNoisy} for the proof.

\paragraph{Comparison from \citep{vakili2021-scalable}.}
Our modification in Lemma~\ref{lem:GeneralRegretUpperBound} compared with \citep{vakili2021-scalable} is useful to bound the regrets over some index set $\cT$.
The difference is a replacement of $\min \cbr{2B, c_t \EE\left[\sigma_{t-Q-1} (\*x_t) \right]}$ with $\EE\left[ \min \left\{2B, c_t \sigma_{t-Q-1} (\*x_t)\right\} \right]$.
If we use the upper bound $\min \cbr{2B, c_t \EE\left[\sigma_{t-Q-1} (\*x_t) \right]}$ naively, the regret upper bound incurs the term $c_T |\cT| = O(\sqrt{\gamma_T \ln T} |\cT|)$, where $\cT$ is the set of iterations defined in Lemma~\ref{lemInflatedEllipticalPotentialLemmaRevised}.
We can utilize Lemma~\ref{lem:GeneralRegretUpperBound} to obtain $\sum_{t \in \cT} f(\*x^*) - f(\*x_t) = O(B |\cT|)$.

\subsection{Regret analysis for noisy feedback} 
\label{sec:NoisyAnalysis}

The degradation from the sequential setting arises mainly from Lemma~\ref{lem:VakiliLem3}.
To avoid this, we obtain the following lemma similar to the elliptical potential count lemma (Lemma D.9 in \citep{flynn2025-tighter}, Lemma 3.3 in \citep{iwazaki2025-improvedGPbandit}, Lemma 6 in \citep{iwazaki2025gaussian}):
\begin{restatable}{lemma}{lemInflatedEllipticalPotentialLemmaRevised}
    \label{lemInflatedEllipticalPotentialLemmaRevised}
    For any $T \in \NN$, $\lambda > 0$, and any input sequence $\{\*x_t \}_{t \in [T]}$, the following hold:
    \begin{align*}
        |\cT| \leq 6 Q \gamma_{|\cT|} (\lambda^2) + Q \leq 6 Q \gamma_{T} (\lambda^2) + Q, 
        \text{ and }
        \sum_{t \in \cT^c} \sigma_{t - Q - 1}^2(\*x_t) \leq 8 \lambda^2 \gamma_T,
    \end{align*}
    where $\cT = \{t \in [T] \mid \sigma_{t - Q - 1}^2(\*x_t) \geq \lambda^2 / Q \}$ and $\cT^c$ is its complement set.
\end{restatable}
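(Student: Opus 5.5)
The plan is to avoid comparing $\sigma_{t-Q-1}$ with $\sigma_{t-1}$ pointwise. Instead I would charge the delayed variances to increments of the log-determinant potential, computed over blocks of consecutive iterations. Partition $[T]$ into consecutive blocks $B_1, B_2, \dots$ of length $Q+1$, with the last block possibly shorter. Let $s_j$ be the last index of $B_j$, and set $s_0 = s_{-1} = 0$.

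For $t \in B_j$ we have $t - Q - 1 \geq s_{j-2}$. So $\cD_{s_{j-2}} \subseteq \cD_{t-Q-1}$, and by monotonicity of the posterior variance in the conditioning set,
\begin{align*}
\sigma_{t-Q-1}^2(\*x_t) \leq \sigma_{s_{j-2}}^2(\*x_t).
\end{align*}
Now consider the double block $W_j = B_{j-1} \cup B_j$. Let $\*M_j$ be the posterior covariance matrix $k_{s_{j-2}}$ on the inputs indexed by $W_j$, scaled by $\lambda^{-2}$. Consider the information gained by the observations in $W_j$ given $\cD_{s_{j-2}}$, namely $\frac12 \ln\det(\*I + \*M_j)$. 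This is the chain-rule increment of the potential $\frac12\ln\det(\*I+\lambda^{-2}\*K)$. Taking only odd $j$ (or only even $j$), the double blocks are disjoint and cover a subsequence of $[T]$. Hence
\begin{align*}
\sum_{j\ \mathrm{odd}} \ln\det(\*I+\*M_j) \leq 2\gamma_T,
\end{align*}
and the same holds over even $j$. Therefore $\sum_j \ln\det(\*I+\*M_j) \leq 4\gamma_T$.

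The key elementary inequality is $\det(\*I + \*M) \geq 1 + \mathrm{tr}(\*M') \geq 1 + \mathrm{tr}(\*M'')$ for PSD $\*M$. Here $\*M'$ is any principal submatrix of $\*M$, and $\*M''$ is a principal submatrix of $\*M'$. The first step uses Cauchy interlacing together with $\prod_i(1+\mu_i) \geq 1+\sum_i \mu_i$; the second is because the diagonal entries are nonnegative. Apply this to the principal submatrix of $\*M_j$ indexed by $B_j$, and then restrict the trace to the indices $t \in B_j \cap \cT$ or $t \in B_j \cap \cT^c$. The diagonal entries are $\lambda^{-2}\sigma_{s_{j-2}}^2(\*x_t) \geq \lambda^{-2}\sigma_{t-Q-1}^2(\*x_t)$.

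For the count claim, let $m_j = |B_j \cap \cT|$. Then $\ln\det(\*I+\*M_j) \geq \ln(1 + m_j/Q)$. Since $m_j \leq Q+1$, the argument $x = m_j/Q$ lies in $[0, 2]$, and there $\ln(1+x) \geq x \ln 3 / 2$. Summing over $j$ gives $\sum_j m_j \lesssim Q\gamma_T$.

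To get the sharper constants $6Q\gamma_{|\cT|} + Q$, I would run the same block argument on the subsequence of iterations in $\cT$ only. The potential is then bounded by $\gamma_{|\cT|}$. The additive $Q$ absorbs the boundary effect of a possibly incomplete block and the small-$x$ versus large-$x$ regimes of $\ln(1+x)$. The second inequality $\gamma_{|\cT|} \leq \gamma_T$ is monotonicity of the MIG.

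For the sum over $\cT^c$, each $t \in B_j \cap \cT^c$ has $\lambda^{-2}\sigma_{t-Q-1}^2(\*x_t) < 1/Q$. Let $\tau_j$ denote the sum of these entries over $B_j \cap \cT^c$. Then $\tau_j \leq (Q+1)/Q \leq 2$, so $\ln(1+\tau_j) \geq \tau_j/2$. This yields
\begin{align*}
\sum_{t\in\cT^c}\lambda^{-2}\sigma_{t-Q-1}^2(\*x_t) = \sum_j \tau_j \leq 2\sum_j \ln\det(\*I + \*M_j) \leq 8\gamma_T,
\end{align*}
which is exactly the stated bound $8\lambda^2\gamma_T$.

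The main obstacle is the bookkeeping of the conditioning sets. One must check that $\cD_{s_{j-2}} \subseteq \cD_{t-Q-1}$ for every $t \in B_j$, which is why double blocks are needed rather than single blocks. One must also check that the chain rule lets the disjoint double-block increments be summed against a single $\gamma$. Getting the precise constant $6$ (rather than $8$) in the count bound may require a slightly tighter treatment, such as the subsequence-only potential with a finer lower bound on $\ln(1+x)$. I would settle that last.
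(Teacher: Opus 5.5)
Your bound on the $\cT^c$ sum is correct and gives exactly the constant $8$, via $\tau_j\le (Q+1)/Q\le 2$ and $\ln(1+x)\ge x/2$ on $[0,2]$.

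\textbf{The gap is in the count claim.} With blocks of length $Q+1$, the best your chain $\ln\det(\*I+\*M_j)\ge\ln(1+m_j/Q)$ can give is $m_j\le \frac{Q+1}{\ln(2+1/Q)}\ln\det(\*I+\*M_j)$. So even after restricting to $\cT$-points you only get $|\cT|\le \frac{4(Q+1)}{\ln(2+1/Q)}\gamma_{|\cT|}$. For $Q=1$ this is $\frac{8}{\ln 3}\gamma_{|\cT|}\approx 7.28\,\gamma_{|\cT|}$, which does not imply $|\cT|\le 6\gamma_{|\cT|}+1$ once $\gamma_{|\cT|}\ge 1$.

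Your proposed repair cannot close this. The excess over $6Q\gamma$ is multiplicative in $\gamma$, so a fixed additive $Q$ cannot absorb it. Restricting to $\cT$-points turns $\gamma_T$ into $\gamma_{|\cT|}$ but does not change the constant.

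The missing observation is that length $Q+1$ wastes one step of slack. Use blocks $B_j=\{(j-1)Q+1,\dots,jQ\}$ of length $Q$. The smallest $t\in B_j$ has $t-Q-1=(j-2)Q=s_{j-2}$, so $\cD_{s_{j-2}}\subseteq\cD_{t-Q-1}$ still holds on all of $B_j$. Now $m_j\le Q$, so $m_j/Q\in[0,1]$, where $\ln(1+x)\ge x\ln 2$. This gives $|\cT|\le\frac{4}{\ln 2}Q\gamma_{|\cT|}<6Q\gamma_{|\cT|}$, which is stronger than the claim.

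To get $\gamma_{|\cT|}$ rather than $\gamma_T$, keep the blocks in original time but build $\*M_j$ on $W_j\cap\cT$, conditioning only on the $\cT$-points in $[s_{j-2}]$. The diagonal entries only grow, and each parity class of the chain rule telescopes to at most $\gamma_{|\cT|}$. The same block choice gives $\tau_j<1$ and hence $\sum_{t\in\cT^c}\sigma^2_{t-Q-1}(\*x_t)\le\frac{4}{\ln 2}\lambda^2\gamma_T$.

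\textbf{Comparison with the paper.} With that fix, your proof is a valid alternative to the paper's, which does not block time.
\begin{itemize}
\item For the count, the paper writes $|\widetilde{\cT}|=\sum_{t\in\widetilde{\cT}}h(Q\lambda^{-2}\sigma^2_{t-Q-1}(\*x_t))$ with $h=\min\{1,\cdot\}$ and telescopes across the $Q$ unobserved steps. It bounds each one-step drop by $\ln(1+\lambda^{-2}\sigma^2_{t-i-1}(\*x_{t-i}))$ using the rank-one update, and sums $Q$ shifted potentials to get $4Q\gamma+2Q\gamma$. The rounds $t\le Q$ account for the additive $Q$.
\item For the $\cT^c$ sum, the paper invokes the Desautels-type ratio bound (Lemma~\ref{lemPoteriorVarianceInequality}). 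It compares $\sigma^2_{t-Q-1}(\*x_t)$ with a posterior that also includes the recent $\cT^c$ points, losing a factor $2$ because each of those has variance below $\lambda^2/Q$.
\end{itemize}
You instead compare with a staler posterior $\sigma_{s_{j-2}}$ and need only $\det(\*I+\*M)\ge 1+\mathrm{tr}(\*M')$ plus a two-parity chain rule. This is more elementary, handles both claims with one inequality, and needs no special treatment of the first $Q$ rounds. The price is the factor $4$ from the overlapping double blocks.
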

This lemma can be derived by combining a proof similar to the elliptical potential count lemma with the posterior variance inequality shown in Lemma~\ref{lemPoteriorVarianceInequality}.
See Appendix~\ref{sec:proof_lemInflatedEllipticalPotentialLemma} for the detailed proof.
%
This lemma implies that, for most iterations $\cT^c$, where $|\cT^c| = T - |\cT| \geq T - 6 Q \gamma_{|\cT|} (\lambda^2) - Q$, we can apply the usual MIG upper bound without degradation with respect to $Q$.
Thus, we can see that, for $t \in \cT^c$, we can obtain a tighter upper bound compared with Lemma~\ref{lem:VakiliLem3}.

By combining Lemmas~\ref{lem:GeneralRegretUpperBound} and \ref{lemInflatedEllipticalPotentialLemmaRevised}, we can obtain the following regret upper bounds:
\begin{restatable}{theorem}{TheoCumulativeRegretBoundNoisy}
    \label{TheoCumulativeRegretBoundNoisy}
    Suppose Assumptions~\ref{assump:function}, \ref{assump:noisy}, and \ref{assump:discretization} hold.
    Set
    $1 / N_t = p_t^{(f)} = p_t^{(g)} = 1 / t^2$,
    $\hat{\*x}_T = \argmax_{\*x \in \cX} \bigl\{ \mu_{T} (\*x) - \beta_T^{(1/2)}(p_T^{(g)}) \sigma_T (\*x) \bigr\}$,
    and other variables as in Lemma~\ref{lem:GeneralRegretUpperBound}.
    Then, the following inequality holds:
    \begin{align*}
        \EE[R_T]
        &= O\left(\overline{T}^{(Q)} + \gamma_T \sqrt{T \ln T} \right), 
        \quad
        \EE[r_T]
        = O\left(\frac{\overline{T}^{(Q)} + \gamma_T \sqrt{T \ln T}}{T} \right),
    \end{align*}
    where $\overline{T}^{(Q)} = \max \{t \in \NN \mid t \leq 6 Q \gamma_t + Q \}$.
    In particular, 
    $\overline{T}^{(Q)} = O\rbr{Q \ln^{d+1} Q}$ for SE kernels and
    $\overline{T}^{(Q)} = \widetilde{O}\rbr{Q^{\tfrac{2\nu + d}{2\nu}} }$ for Mat\'ern kernels.
\end{restatable}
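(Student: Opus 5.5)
We sum the instantaneous bound of Lemma~\ref{lem:GeneralRegretUpperBound} over $t\in[T]$ and split the indices according to the set $\cT=\{t\in[T]\mid \sigma^2_{t-Q-1}(\*x_t)\ge\lambda^2/Q\}$ of Lemma~\ref{lemInflatedEllipticalPotentialLemmaRevised}. Since $\cT$ depends on the random inputs, we write $\min\{2B,c_t\sigma_{t-Q-1}(\*x_t)\}\le 2B\,\mathbbm{1}[t\in\cT]+c_t\sigma_{t-Q-1}(\*x_t)\mathbbm{1}[t\in\cT^c]$ \emph{inside} the expectation. This is exactly why the $\EE[\min\{\cdot\}]$ form of Lemma~\ref{lem:GeneralRegretUpperBound} is needed. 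We obtain
\begin{align*}
\EE[R_T]\le \frac{2+p}{p}\,\EE\Bigl[2B|\cT|+c_T\sum_{t\in\cT^c}\sigma_{t-Q-1}(\*x_t)\Bigr]+\sum_{t=1}^T\Bigl(\frac{B}{N_t}+2Bp_t^{(f)}+2Bp_t^{(g)}\Bigr),
\end{align*}
where $c_t\le c_T$ by monotonicity of $\beta_t$ and $\zeta_t$ in $t$. With $1/N_t=p_t^{(f)}=p_t^{(g)}=1/t^2$, the last sum is at most $5B\sum_t t^{-2}=O(1)$.

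For the first term, Lemma~\ref{lemInflatedEllipticalPotentialLemmaRevised} gives $|\cT|\le 6Q\gamma_{|\cT|}+Q$ deterministically. Hence $|\cT|\le\overline{T}^{(Q)}$ by the definition of $\overline{T}^{(Q)}$, taking the maximum over integers satisfying $t\le 6Q\gamma_t+Q$. Finiteness of this maximum follows from sublinearity of $\gamma_t$.

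For the second term, Cauchy--Schwarz and the second claim of Lemma~\ref{lemInflatedEllipticalPotentialLemmaRevised} give
\[
\sum_{t\in\cT^c}\sigma_{t-Q-1}(\*x_t)\le\sqrt{T\cdot 8\lambda^2\gamma_T}.
\]
Moreover, $c_T=\beta_T^{1/2}(1/T^2)\,(2\zeta_T^{1/2}(1/T^2)+1)$. Here $\beta_T^{1/2}=O(\sqrt{\gamma_T+\ln T})$ and $\zeta_T=O(d\ln(L_kdrT)+\ln T)=O(\ln T)$, so $c_T=O(\sqrt{\gamma_T\ln T})$, using $\gamma_T=\Omega(\ln T)$ or absorbing it. Combining gives $O(\gamma_T\sqrt{T\ln T})$, hence $\EE[R_T]=O(\overline{T}^{(Q)}+\gamma_T\sqrt{T\ln T})$.

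For the simple regret, we follow the standard argument. On the event where the confidence bound $|f-\mu_T|\le\beta_T^{1/2}\sigma_T$ holds, the lower-confidence-bound recommendation $\hat{\*x}_T$ satisfies
\[
f(\hat{\*x}_T)\ge\max_{\*x}\{\mu_T(\*x)-\beta_T^{1/2}\sigma_T(\*x)\}\ge\frac1T\sum_t\bigl(\mu_T(\*x_t)-\beta_T^{1/2}\sigma_T(\*x_t)\bigr).
\]
Relating $\mu_T(\*x_t)-\beta_T^{1/2}\sigma_T(\*x_t)\ge f(\*x_t)-2\beta_T^{1/2}\sigma_T(\*x_t)$ and using $\sigma_T(\*x_t)\le\sigma_{t-Q-1}(\*x_t)$ gives
\[
r_T\le\frac{R_T}{T}+\frac{2\beta_T^{1/2}}{T}\sum_t\min\{1,\sigma_{t-Q-1}(\*x_t)\}.
\]
We split the last sum over $\cT,\cT^c$ exactly as before. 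The failure event has probability $1/T^2$ and contributes $O(B/T^2)$. Note that the statement's $\beta_T^{(1/2)}(p_T^{(g)})$ is with $1/T^2$, consistent with this.

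Finally, for the explicit rates we solve $t\le 6Q\gamma_t+Q$. For SE kernels, $\gamma_t=O(\ln^{d+1}t)$, so $t\lesssim Q\ln^{d+1}t$, which gives $t=O(Q\ln^{d+1}Q)$; we verify this by plugging in $t=CQ\ln^{d+1}Q$ and using $\ln t=O(\ln Q)$. For Matérn kernels, $t\lesssim Qt^{d/(2\nu+d)}$ up to logarithmic factors, so $t^{2\nu/(2\nu+d)}\lesssim Q$, giving $t=\widetilde{O}(Q^{(2\nu+d)/(2\nu)})$.

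I expect the main obstacle to be the careful handling of the random set $\cT$ inside the expectation, and ensuring that $c_t$'s growth is absorbed uniformly via $c_T$. The fixed-point inversion for $\overline{T}^{(Q)}$ with logarithmic factors is routine but fiddly.
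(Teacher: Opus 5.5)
Your cumulative-regret argument is the paper's proof. You sum Lemma~\ref{lem:GeneralRegretUpperBound}, bound the summand by $2B$ on $\cT$ and by $c_T\sigma_{t-Q-1}(\*x_t)$ on $\cT^c$ inside the expectation, and use $|\cT|\le\overline{T}^{(Q)}$ deterministically. You then apply Cauchy--Schwarz with $8\lambda^2\gamma_T$ and use $\sum_t t^{-2}<\infty$. Your sketch of the $\overline{T}^{(Q)}$ rates is at the same level as the paper's. The gap is in the simple-regret step.

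In your inequality $r_T\le R_T/T+\frac{2\beta_T^{1/2}}{T}\sum_t\min\{1,\sigma_{t-Q-1}(\*x_t)\}$, the cap is vacuous, since already $\sigma\le 1$. More importantly, $\beta_T^{1/2}$ sits outside the cap. Splitting over $\cT$ therefore only gives $2\beta_T^{1/2}|\cT|/T\le 2\beta_T^{1/2}\,\overline{T}^{(Q)}/T$ with $\beta_T^{1/2}=\Theta(\sqrt{\gamma_T+\ln T})$, not $O(\overline{T}^{(Q)}/T)$. This is exactly the $c_T|\cT|$-type loss you were careful to avoid in the cumulative part, and it is a real loss. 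For example, if $\overline{T}^{(Q)}\approx T^{0.9}$, your bound exceeds the claimed one by a factor $\sqrt{\gamma_T}$.

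The cause is that you averaged over $t$ before capping. The paper caps first. Write ${\rm LCB}_T=\mu_T-\beta_T^{1/2}\sigma_T$. On the good event, ${\rm LCB}_T(\hat{\*x}_T)\ge{\rm LCB}_T(\*x_t)$, so $r_T\le f(\*x^*)-{\rm LCB}_T(\*x_t)$ holds for \emph{each} $t$, and also $r_T\le 2B$. Hence
\[
r_T\le\frac1T\sum_{t=1}^T\min\bigl\{2B,\,f(\*x^*)-{\rm LCB}_T(\*x_t)\bigr\}\le\frac{R_T}{T}+\frac1T\sum_{t=1}^T\min\bigl\{2B,\,2\beta_T^{1/2}\sigma_T(\*x_t)\bigr\},
\]
using $\min\{2B,a+b\}\le a+\min\{2B,b\}$ for $a\ge 0$. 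Now the $\cT$-part costs only $2B|\cT|$.

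Alternatively, do not relax $\sigma_T(\*x_t)$ to $\sigma_{t-Q-1}(\*x_t)$ at all. Since $\hat{\*x}_T$ uses the full posterior, $\sigma_T(\*x_t)\le\sigma_{t-1}(\*x_t)$. The standard sequential bound $\sum_t\sigma_{t-1}^2(\*x_t)\le 2\gamma_T/\ln(1+\lambda^{-2})$ together with Cauchy--Schwarz then makes this term $O\bigl(\sqrt{(\gamma_T+\ln T)\gamma_T/T}\bigr)$. That route has no $Q$-dependence and does not need $\cT$.
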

In the proof, roughly speaking, we show that $\EE\sbr{\sum_{t \in \cT} f(\*x^*) - f(\*x_t)} = O(B \EE[|\cT|]) = O\rbr{B \overline{T}^{(Q)}}$ and $\EE\sbr{\sum_{t \in \cT^c} f(\*x^*) - f(\*x_t)} = O\rbr{\gamma_T \sqrt{T \ln T}}$.
See Appendix~\ref{sec:proof_noisy} for the detailed proof.

Consequently, we obtain the following cumulative regret bound:
\begin{align*}
    \EE\sbr{R_T} &= 
    \begin{cases}
        O\rbr{Q \ln^{d+1} Q + \sqrt{T (\ln T)^{2d+3}}} & \text{for SE kernels}, \\
        \widetilde{O}\rbr{Q^{\tfrac{2\nu + d}{2\nu}} + T^{\tfrac{2\nu+3d}{4\nu + 2d}} } & \text{for Mat\'ern kernels}.
    \end{cases}
\end{align*}
Therefore, except for the additive factor on $Q$, our analysis achieves regret upper bounds similar to those for the sequential GP-TS \citep{Chowdhury2017-on}.
Compared with Eq.~\eqref{eq:CumualativeRegretBound_vakili}, our analysis replaces the multiplicative factor on $Q$ with the additive factor on $Q$.
This change is considered preferable in the literature \citep[see Section~4 of][]{Desautels2014-Parallelizing} since the $\gamma_T \sqrt{T \ln T}$ term is often dominant under the general assumption $T \gg Q$.

\subsection{Regret analysis for noiseless feedback}
\label{sec:NoiseFreeAnalysis}

We obtain the following lemma combining the proof by \citet{iwazaki2025gaussian} and Lemma~\ref{lem:VakiliLem3}:
\begin{lemma}[Informal]
    \label{lemPosteriorSTDUpperBoundNoiseless_informal}
    Based on the MIG bounds from \citep{vakili2021-information}, the following statements hold for any $T \in \NN$ and any input sequence $\*x_1, \ldots, \*x_T \in \cX$:
    \begin{itemize}
        \item For the SE kernels,
        \begin{align*}
            \min_{t \in [T]} \sigma_{t-Q-1}(\*x_t) &\leq 
            \sqrt{2 T / Q} \exp\rbr{-\widetilde{C}_{\rm SE} (T / Q)^{\frac{1}{d+1}}} \qquad \text{if } T\geq Q\overline{T}_{\rm SE}, \\
            \sum_{t=1}^T \min \cbr{2B, c_t \sigma_{t-Q-1}(\*x_t)} 
            &\leq 
             2BQ \overline{T}_{\rm SE}
             + \sqrt{2} c_T Q (d+1) \rbr{\frac{\widetilde{C}_{\rm SE}}{2}}^{-\frac{3d+3}{2}} \Gamma\rbr{\frac{3d + 3}{2}}.
        \end{align*}
        \item For the Mat\'ern kernels with $\nu > 1/2$,
        \begin{align*}
            \min_{t \in [T]} \sigma_{t-Q-1} (\*x_t) &\leq 
            \sqrt{2} \widetilde{C}_{\rm Mat}^{1/2} (T / Q)^{-\frac{\nu}{d}} \rbr{\ln  (T/Q)}^{\frac{\nu}{d}}  \qquad \text{if } T\geq Q\overline{T}_{\rm Mat},\\
            \sum_{t=1}^T \min \cbr{2B, c_t \sigma_{t-Q-1}(\*x_t)}
            &\leq 
            \begin{cases}
                2BQ \overline{T}_{\rm Mat}  + \sqrt{2} c_T \widetilde{C}_{\rm Mat}^{1/2} \frac{d}{d-\nu} T^{\frac{d-\nu}{d}} Q^{\frac{\nu}{d}} (\ln  (T/Q))^{\frac{\nu}{d}} & \text{if } d > \nu, \\
                2BQ \overline{T}_{\rm Mat} + \sqrt{2} c_T \widetilde{C}_{\rm Mat}^{1/2} Q (\ln  (T/Q))^{2} / 2 & \text{if } d = \nu, \\
                2BQ \overline{T}_{\rm Mat} + \sqrt{2} c_T \widetilde{C}_{\rm Mat}^{1/2} \frac{Q\Gamma(\frac{\nu}{d} + 1)}{\rbr{\frac{\nu}{d}-1}^{\frac{\nu}{d}+1}} & \text{if } d < \nu.
            \end{cases}
        \end{align*}
    \end{itemize}
    where $\widetilde{C}_{\rm SE}, \widetilde{C}_{\rm Mat}, \overline{T}_{\rm SE}$, and $\overline{T}_{\rm Mat}$ are constants with respect to $T$ and $Q$.
\end{lemma}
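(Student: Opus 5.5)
Our plan is to reduce the delayed problem to $Q+1$ interleaved sequential problems and then invoke the noiseless posterior-variance bounds of \citet{iwazaki2025gaussian} for each subsequence. For $j \in \{1, \dots, Q+1\}$, consider the subsequence $\mathcal{S}_j = \{ t \in [T] \mid t \equiv j \bmod (Q+1)\}$. For consecutive $s < t$ in $\mathcal{S}_j$ we have $s \leq t-Q-1$. Since posterior variance is monotonically non-increasing under data addition (also with $\lambda = 0$ and the convention of discarding points that make $\*K_t$ singular), we get $\sigma_{t-Q-1}(\*x_t) \leq \sigma^{(j)}_{t}(\*x_t)$. Here $\sigma^{(j)}_t$ denotes the posterior standard deviation conditioned only on the earlier points of $\mathcal{S}_j$. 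Each subsequence therefore behaves as a sequential noiseless GP bandit run of length $T_j = |\mathcal{S}_j| \approx T/(Q+1)$. This is exactly the mechanism behind Lemma~\ref{lem:VakiliLem3}, now used with noiseless sequential bounds in place of the $\sqrt{T\gamma}$ bound.

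For the $\min_t$ statements, we apply the noiseless results of \citet{iwazaki2025gaussian} to one subsequence (say $j=1$, with $T_1 \geq \lfloor T/(Q+1) \rfloor$, which is $\gtrsim T/Q$). For an arbitrary sequence of length $n$ these give $\min_{i \leq n} \sigma^{(1)}_{i-1}(\*x_i) \leq \sqrt{2} \cdot (\text{rate}(n))$, where the rate is $\sqrt{n}\exp(-\widetilde{C}_{\rm SE} n^{1/(d+1)})$ for SE and $\widetilde{C}_{\rm Mat}^{1/2} n^{-\nu/d}(\ln n)^{\nu/d}$ for Mat\'ern. That result is derived by bounding $\sum_i \sigma^2_{i-1}(\*x_i)$ through $\lambda^2\gamma_n(\lambda^2)$ with $\lambda^2$ tuned to $n$, using the explicit MIG dependence on $\lambda$ from \citet{vakili2021-information}, valid once $n \geq \overline{T}_{\rm SE}$ (resp.\ $\overline{T}_{\rm Mat}$). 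Substituting $n = T/Q$ (up to constants absorbed into $\widetilde{C}$) and using $\sigma_{t-Q-1}(\*x_t) \leq \sigma^{(1)}(\*x_t)$ yields the first line of each bullet under $T \geq Q\overline{T}$.

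For the sums, we write $\sum_{t=1}^T \min\{2B, c_t\sigma_{t-Q-1}(\*x_t)\} \leq \sum_{j}\sum_{t\in\mathcal{S}_j}\min\{2B, c_T\sigma^{(j)}(\*x_t)\}$, using that $c_t$ is nondecreasing. Within each subsequence, the cumulative-sum argument of \citet{iwazaki2025gaussian} applies. The first $\overline{T}$ terms are bounded by $2B$ each, which gives $2B\overline{T}$ per subsequence and $2BQ\overline{T}$ in total (absorbing $Q+1$ into constants). For the remaining terms we need a per-index bound. Sorting the values, the $n$-th smallest posterior standard deviation among the first $n'$ points is controlled by the $\min$ bound applied to a length-$n$ tail, so the sum is dominated by $\sum_{n \ge \overline{T}} \text{rate}(n)$. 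We bound this by an integral:
\begin{itemize}
\item for SE, $\int_0^\infty \sqrt{2x}\, e^{-\widetilde{C}_{\rm SE} x^{1/(d+1)}}\,dx$, which after the substitution $u = x^{1/(d+1)}$ gives the Gamma-function constant $(d+1)(\widetilde{C}_{\rm SE}/2)^{-(3d+3)/2}\Gamma(\frac{3d+3}{2})$ (the factor $1/2$ comes from the sorted-halving argument);
\item for Mat\'ern, $\int^{T/Q} x^{-\nu/d}(\ln x)^{\nu/d}dx$, which produces the three regimes: $(T/Q)^{(d-\nu)/d}$ times the log for $d>\nu$, $\ln^2$ for $d=\nu$, and a Gamma constant for $d<\nu$.
\end{itemize}
Multiplying by the $Q+1$ subsequences gives the stated $Q$ factors, for instance $Q (T/Q)^{(d-\nu)/d} = T^{(d-\nu)/d}Q^{\nu/d}$.

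The main obstacle we expect is the per-subsequence cumulative bound. The min-bound alone does not control a sum; we need the sorted-order argument of \citet{iwazaki2025gaussian}, namely that among any $n$ of the points, at least half have $\sigma$ at most the rate at $n/2$. This follows from applying the min bound to the worst-half subsequence and relies on monotonicity of the conditioning. We also have to handle the $\lambda=0$ convention carefully so that the variance monotonicity and the MIG-with-tuned-$\lambda$ comparison ($\sigma^2_{\lambda=0} \le \sigma^2_{\lambda}$ plus $\lambda^2$-terms) still hold.
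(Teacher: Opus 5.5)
Your proof is correct up to constants, which the informal statement permits. It uses the same two ingredients as the paper: the residue-class interleaving behind Lemma~\ref{lem:VakiliLem3}, and the tuned-regularizer counting plus repeated-picking argument of \citet{iwazaki2025gaussian}. You assemble them in a different order.

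The paper never reduces to sequential problems. It pools the classes into one delayed count $|\{t \in [T] : \sigma_{t-Q-1}(\*x_t;\lambda^2) > \lambda\}| \leq 4Q\gamma_{\lceil T/Q\rceil}(\lambda^2)$ and chooses $\lambda_T$ so that the right-hand side is at most $T-1$. It then runs a single global picking pass over $[T]$ with $\lambda_n$ tuned to $n/Q$. This gives $2BQ\overline{T} + c_T\sum_{n \geq Q\overline{T}}\lambda_n$, and the factor $Q$ comes from the change of variables $t \mapsto t/Q$.

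You instead dominate each $\sigma_{t-Q-1}(\*x_t)$ by the sequential posterior of its class. You then apply the sequential noiseless lemma as a black box: to one class for the min bound, and to each of the $Q+1$ classes for the sum. Your route is more modular. The subset-monotonicity needed in the picking step is the standard sequential one; the paper's global picking tacitly needs its count bound for arbitrary subsets of a delayed sequence, which holds but is left implicit. Your grouping mod $Q+1$ is also the right one for this delay model. With the paper's $t \bmod Q$, the previous class member $t-Q$ is not in $\cD_{t-Q-1}$, so the per-class domination really does require $Q+1$ classes. In exchange, the paper's pooled pass keeps the bookkeeping in one place and produces the displayed form with threshold $T \geq Q\overline{T}$ (after the $Q \to Q+1$ fix). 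In your version, class lengths $\lceil T/(Q+1)\rceil$ shift the threshold and the exponent constant by $d$-dependent factors.

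Three bookkeeping corrections follow.
\begin{itemize}
\item Your sorted-order step needs no halving. The $n$-th largest value is at most the rate at $n$, by applying the min bound to the $n$ largest points, each conditioned only on its own predecessors. Your ``$n$-th smallest / length-$n$ tail'' sentence is garbled on this point.
\item The quantity controlled by $\lambda^2\gamma_n(\lambda^2)$ is the truncated sum, $\sum_i \min\{\lambda^2, \sigma_{i-1}^2(\*x_i;\lambda^2)\} \leq 4\lambda^2\gamma_n(\lambda^2)$. The untruncated $\sum_i \sigma_{i-1}^2(\*x_i)$ is not bounded this way.
\item The $\frac{1}{2}$ in $(\widetilde{C}_{\rm SE}/2)^{-\frac{3d+3}{2}}$ does not come from halving, which would only rescale the exponent by $2^{-1/(d+1)}$. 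It comes from $\sigma \leq \lambda_n = (\lambda_n^2)^{1/2}$ with $\lambda_n^2 = (2n/Q)\exp(-\widetilde{C}_{\rm SE}(n/Q)^{\frac{1}{d+1}})$, or $2n\exp(-\widetilde{C}_{\rm SE}n^{\frac{1}{d+1}})$ per class in your version. For the same reason, the argument actually gives $\min_t \sigma_{t-Q-1}(\*x_t) \leq \sqrt{2T/Q}\exp(-\frac{1}{2}\widetilde{C}_{\rm SE}(T/Q)^{\frac{1}{d+1}})$. The displayed SE min bound, and the corresponding line of the paper's proof, drop this $\frac{1}{2}$. You should not claim to recover that line literally with exponent $\widetilde{C}_{\rm SE}$.
\end{itemize}
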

See Appendix~\ref{sec:proof_noiseless} for the formal version and its proof.

Hence, by combining Lemmas~\ref{lem:GeneralRegretUpperBound} and \ref{lemPosteriorSTDUpperBoundNoiseless_informal}, we derive the following result:
\begin{theorem}
    \label{thm:CumulativeRegretUpperBoundNoiseless}
    Suppose Assumptions~\ref{assump:function} and \ref{assump:discretization} hold.
    Set $1 / N_t = p_t^{(g)} = 1 / t^2$ and other variables as in Lemma~\ref{lem:GeneralRegretUpperBound}.
    Then, the following inequality holds:
    \begin{align*}
        \EE\sbr{R_T}
        &=
        \begin{cases}
            O\rbr{Q \ln^{\frac{1}{2}} T} & \text{for SE kernels}, \\
            \widetilde{O}\rbr{Q + T^{\tfrac{d-\nu}{d}} Q^{\tfrac{\nu}{d}}} & \text{for Mat\'ern kernels with $d > \nu$}, \\
            O\rbr{Q \ln^{\frac{5}{2}} T } & \text{for Mat\'ern kernels with $d = \nu$}, \\
            O\rbr{Q \ln^{\frac{1}{2}} T } & \text{for Mat\'ern kernels with $d < \nu$},
        \end{cases}
    \end{align*}
    where $\widetilde{O}$ suppresses the polylogarithmic factors for $T$ and $Q$.
\end{theorem}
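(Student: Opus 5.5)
The plan is to sum the per-round bound of Lemma~\ref{lem:GeneralRegretUpperBound} (noiseless case (ii)) over $t\in[T]$ and then control the resulting posterior-standard-deviation sum with Lemma~\ref{lemPosteriorSTDUpperBoundNoiseless_informal}. In case (ii) we have $\lambda = p_t^{(f)} = 0$ and $\beta_t^{1/2} = B$. With $1/N_t = p_t^{(g)} = 1/t^2$ this gives
\begin{align*}
\EE[R_T] \le \frac{2+p}{p}\,\EE\Bigl[\sum_{t=1}^T \min\{2B, c_t\sigma_{t-Q-1}(\*x_t)\}\Bigr] + \sum_{t=1}^T \frac{3B}{t^2}.
\end{align*}
The last sum is $O(B)$, so it is absorbed into every claimed rate, which are all at least $\Omega(Q)$.

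For the main term, I use that the bounds in Lemma~\ref{lemPosteriorSTDUpperBoundNoiseless_informal} hold for \emph{any} input sequence. In particular they hold pathwise for the random sequence produced by GP-BTS, so I can insert them inside the expectation. The constant $c_t = B(2\zeta_t^{1/2}(t^{-2}) + 1)$ is nondecreasing in $t$, and it satisfies $c_t = O(\sqrt{\ln t})$ because
\begin{align*}
\zeta_t(t^{-2}) = 2\ln\bigl(2(L_k d r t^2)^d t^2/p\bigr) = O(d\ln t).
\end{align*}
To be safe I replace $c_t$ by $c_T$ before applying the lemma. The formal version in the appendix should already be stated with $c_T$ as an upper bound for all $c_t$, $t\le T$, which is how the informal statement is phrased.

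Substituting then gives each case directly.
\begin{itemize}
\item \textbf{SE.} The sum is at most $2BQ\overline{T}_{\rm SE} + O(c_T Q) = O(Q\sqrt{\ln T})$, since $\overline{T}_{\rm SE}$ and the Gamma-function constant do not depend on $T$ or $Q$.
\item \textbf{Mat\'ern, $d>\nu$.} The sum is $O(Q) + O\bigl(\sqrt{\ln T}\,T^{(d-\nu)/d}Q^{\nu/d}(\ln(T/Q))^{\nu/d}\bigr) = \widetilde{O}(Q + T^{(d-\nu)/d}Q^{\nu/d})$.
\item \textbf{Mat\'ern, $d=\nu$.} The sum is $O(Q) + O(\sqrt{\ln T}\,Q\ln^2 T) = O(Q\ln^{5/2}T)$, using $\ln(T/Q)\le \ln T$.
\item \textbf{Mat\'ern, $d<\nu$.} The sum is $O(Q) + O(\sqrt{\ln T}\,Q) = O(Q\ln^{1/2}T)$.
\end{itemize}

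The main obstacle is not this assembly, which is routine, but verifying the hypotheses of the lemmas. First, Lemma~\ref{lemPosteriorSTDUpperBoundNoiseless_informal} is stated for $\lambda = 0$ with the convention that points making $\*K_t$ singular are dropped. I must check that the $\sigma_{t-Q-1}$ appearing in Lemma~\ref{lem:GeneralRegretUpperBound} is computed under the same convention; dropping a point leaves the posterior variance unchanged, so it should be. Second, the Mat\'ern case needs $\nu>1/2$, which is implied by the $\nu>1$ required for Assumption~\ref{assump:discretization}. Third, if $T < Q\overline{T}$, the trivial bound $2BT \le 2BQ\overline{T} = O(Q)$ already suffices, so the threshold conditions in the lemma cause no difficulty.
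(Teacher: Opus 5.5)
Your proposal is correct and follows the paper's own argument. The paper sums the noiseless case (ii) of Lemma~\ref{lem:GeneralRegretUpperBound}, absorbs the $3B/t^2$ terms into a constant, and applies the pathwise bound of Lemma~\ref{lemPosteriorSTDUpperBoundNoiseless} with the nondecreasing $c_t = O(\sqrt{\ln t})$ to obtain each kernel-specific rate.
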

As with the sequential case \citep{iwazaki2025gaussian}, we can obtain a much better regret upper bound compared with the noisy case.
In particular, for SE kernels and Mat\'ern kernels with $\nu \geq d$, we can obtain a polylogarithmic upper bound with respect to $T$.
Our regret bounds are much tighter than the known results in \citep{lyu2019efficient}.

Note that the $\ln^{\frac{1}{2}} T$ factor comes from $c_T$ and does not stem from the parallelization.
This greater dependence on $T$ is a problem of the GP-TS-based method compared with GP-UCB-based methods, as discussed in \citep{Chowdhury2017-on}.

Furthermore, modifying the proof of Lemma~\ref{lemPosteriorSTDUpperBoundNoiseless_informal}, we obtain the following simple regret bounds:
\begin{restatable}{corollary}{corSimpleRegretUpperBoundNoiseless}
    \label{cor:SimpleRegretUpperBoundNoiseless}
    Suppose Assumptions~\ref{assump:function} and \ref{assump:discretization} hold.
    Set $1 / N_t = p_t^{(g)} = 1 / t^2$  and other variables as in Lemma~\ref{lem:GeneralRegretUpperBound}.
    In addition, define $\hat{\*x}_t = \argmax_{\*x \in \{\*x_1, \dots, \*x_T \}} f(\*x)$.
    Then, the following inequality holds:
    \begin{align*}
        \EE\sbr{r_T}
        &=
        \begin{cases}
            O\rbr{Q T^{-1} \ln^{\frac{1}{2}} T} & \text{for SE kernels}, \\
            \widetilde{O}\rbr{QT^{-1} + T^{-\tfrac{\nu}{d}} Q^{\tfrac{\nu}{d}}} & \text{for Mat\'ern kernels with $d > \nu$}, \\
            O\rbr{Q T^{-1} \ln^{\frac{5}{2}} T } & \text{for Mat\'ern kernels with $d = \nu$}, \\
            O\rbr{Q T^{-1} \ln^{\frac{1}{2}} T } & \text{for Mat\'ern kernels with $d < \nu$},
        \end{cases}
    \end{align*}
    where $\widetilde{O}$ suppresses the polylogarithmic factors for $T$ and $Q$.
\end{restatable}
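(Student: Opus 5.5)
We will bound the simple regret of the recommendation $\hat{\*x}_T = \argmax_{\*x \in \{\*x_1, \dots, \*x_T\}} f(\*x)$ by an average of instantaneous regrets. Since $f(\hat{\*x}_T) \geq f(\*x_t)$ for every $t \in [T]$, we have, deterministically,
\begin{align*}
    r_T = f(\*x^*) - f(\hat{\*x}_T) \leq \frac{1}{T} \sum_{t=1}^T \bigl( f(\*x^*) - f(\*x_t) \bigr) = \frac{R_T}{T}.
\end{align*}
Taking expectations gives $\EE[r_T] \leq \EE[R_T]/T$. Combining this with Theorem~\ref{thm:CumulativeRegretUpperBoundNoiseless} and dividing each case by $T$ yields the SE bound $O(QT^{-1}\ln^{1/2}T)$, the Mat\'ern $d>\nu$ bound $\widetilde{O}(QT^{-1} + T^{-\nu/d}Q^{\nu/d})$, and the $d=\nu$ and $d<\nu$ bounds $O(QT^{-1}\ln^{5/2}T)$ and $O(QT^{-1}\ln^{1/2}T)$. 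This already gives every stated rate, so the corollary follows from the theorem. The statement's phrase ``modifying the proof'' suggests the authors instead intend a direct argument, which we describe next. It would give sharper rates, though those are not needed here.

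For completeness, we record how the direct route would run. Start from Lemma~\ref{lem:GeneralRegretUpperBound} and bound $\min_t \EE[f(\*x^*) - f(\*x_t)]$. This quantity is not directly the expected simple regret, because the minimizing index depends on the randomness. The $\max$ over $f(\*x_t)$ inside the expectation does dominate each single term, so $\EE[r_T] \leq \min_t \EE[f(\*x^*)-f(\*x_t)]$ holds. Then one would bound $\min_t \EE[\min\{2B, c_t\sigma_{t-Q-1}(\*x_t)\}]$ using the ``$\min_t \sigma_{t-Q-1}(\*x_t)$'' parts of Lemma~\ref{lemPosteriorSTDUpperBoundNoiseless_informal}.

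This direct route has a difficulty. Exchanging $\min_t$ with the expectation requires the per-path bound on $\min_t\sigma_{t-Q-1}(\*x_t)$ to transfer to a bound on $\min_t$ of the expectations, and that transfer does not hold in general. Because the averaging argument above avoids this issue and already delivers the stated rates, we rely on it.

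The only remaining checks are the additive terms $B/N_t + 2Bp_t^{(g)} = O(1/t^2)$, which are already absorbed into $\EE[R_T]$ in the theorem. We must also confirm that dividing the $\widetilde{O}$ terms by $T$ is legitimate. It is, because all constants are independent of $T$ and $Q$.
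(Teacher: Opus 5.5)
Your proof is correct and is essentially the paper's own argument: the paper also bounds $\EE[r_T] \leq \EE\bigl[\min_{t \in [T]} f(\*x^*) - f(\*x_t)\bigr] \leq \EE[R_T]/T$ and then invokes Theorem~\ref{thm:CumulativeRegretUpperBoundNoiseless}. Your discussion of the ``direct route'' is not needed for the stated rates, and, as you note yourself, it would not go through as sketched, because a per-path bound on $\min_t$ does not yield a bound on $\min_t$ of the expectations.
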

See Appendix~\ref{sec:proof_noiseless} for the proof.

\begin{remark}[Simple regret upper bound]
    \label{remark:simple_regret_noiseless}
    Although we obtained polynomial convergence of the simple regret for both kernels, the dependence on $T$ in these upper bounds is worse than in \citep{iwazaki2025gaussian}.
    This degradation arises from the property of GP-TS, where we cannot obtain the upper bound like $f(\*x^*) - f(\*x_t) \leq 2 \beta_t^{1/2} \sigma_{t-1}(\*x_t)$, which can be obtained by GP-UCB.
    Furthermore, even if we can obtain the simple regret bound with the order of $\min_{t \in [T]} \sigma_{t-Q-1}(\*x_t)$ in Lemma~\ref{lemPosteriorSTDUpperBoundNoiseless_informal}, the upper bounds result in the same rate as when $T / Q$ times function evaluations can be performed sequentially.
    Thus, obtaining tighter simple regret bounds for GP-TS-based methods and determining whether we can further tighten the simple regret bound for parallel GP bandit methods are important directions for future work.
\end{remark}

\section{Numerical experiments}
\label{sec:experiment}

We conducted numerical experiments for synthetic functions that satisfy the theoretical settings using scikit-learn \citep{scikit-learn}.
We set $\cX = \{0, 0.1, \dots, 0.9 \}^d$ with $d = 3$.
As in \citep{Chowdhury2017-on}, we generate sample paths from a GP, and set the posterior mean learned from data at $\cX$ as the objective function and $B^2 = \*f^\top \*K^{-1} \*f$, where $\*f = \bigl( f(\*x) \bigr)_{\*x \in \cX}$ and $\*K = \bigl( k(\*x_i, \*x_j) \bigr)_{\*x_i, \*x_j \in \cX}$.
We use the SE kernel with $\ell = 0.4$ and the Mat\'ern kernel with $\ell = 0.4$ and $\nu = 5 / 2$.
In addition, we set $\delta = 0.1$ and $\beta_t$ to its theoretical value based on the actual mutual information.
We conducted the experiments for the noiseless setting and noisy setting with Gaussian noise $\varepsilon_i \sim \cN(0, \lambda^2)$, where $\lambda = 0.1$, for all $i \in [T]$.

Figure~\ref{fig:exp} shows the mean and standard error of cumulative regrets for 16 random trials for GP sample paths generation and the randomness of the algorithm.
Throughout the experiments, the dependence on $Q$ seems to be mild, particularly for $Q \leq 4$.
These results align with our theoretical results, which imply that the dependence on $Q$ is not dominant when $T$ is sufficiently large.
Furthermore, in the noiseless setting, we can see that the cumulative regret for the SE kernel saturates around 100 iterations, which matches our Theorem~\ref{thm:CumulativeRegretUpperBoundNoiseless} that shows $O(Q + \ln^{\frac{1}{2}} T)$ cumulative regret upper bound.
For the Mat\'ern kernel, in the noiseless setting, since $d = 3 > \nu = 5/2$, the cumulative regret upper bound implied by Theorem~\ref{thm:CumulativeRegretUpperBoundNoiseless} is $\widetilde{O}\rbr{T^{\frac{1}{6}} Q^{\frac{5}{6}}}$.
Indeed, the cumulative regret exhibits sublinear growth, but its dependence on $Q$ is more moderate than the theoretical upper bound would suggest.

\begin{figure}
    \centering
    \includegraphics[width=0.49\linewidth]{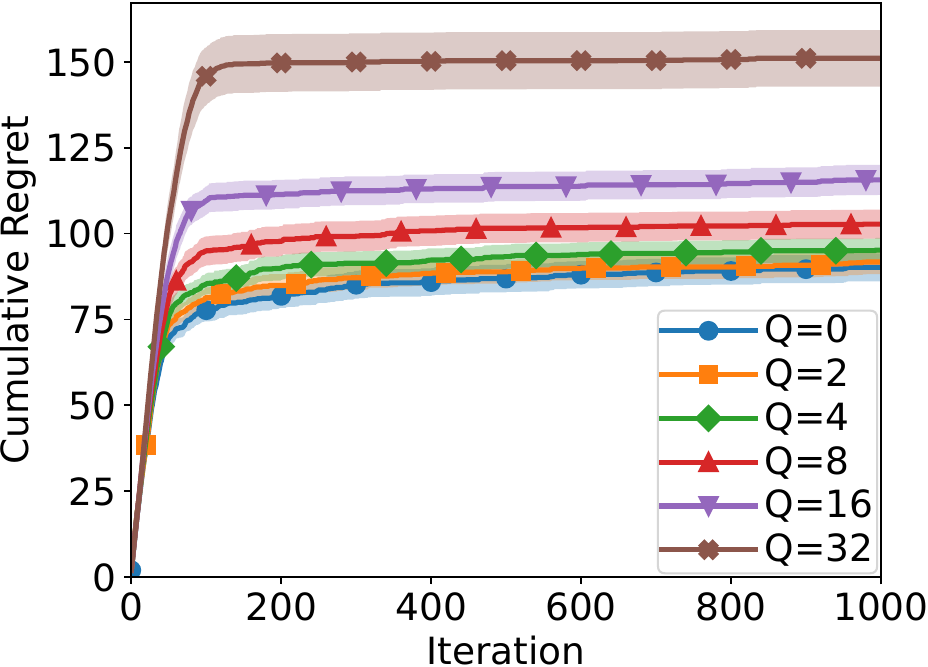}
    \includegraphics[width=0.49\linewidth]{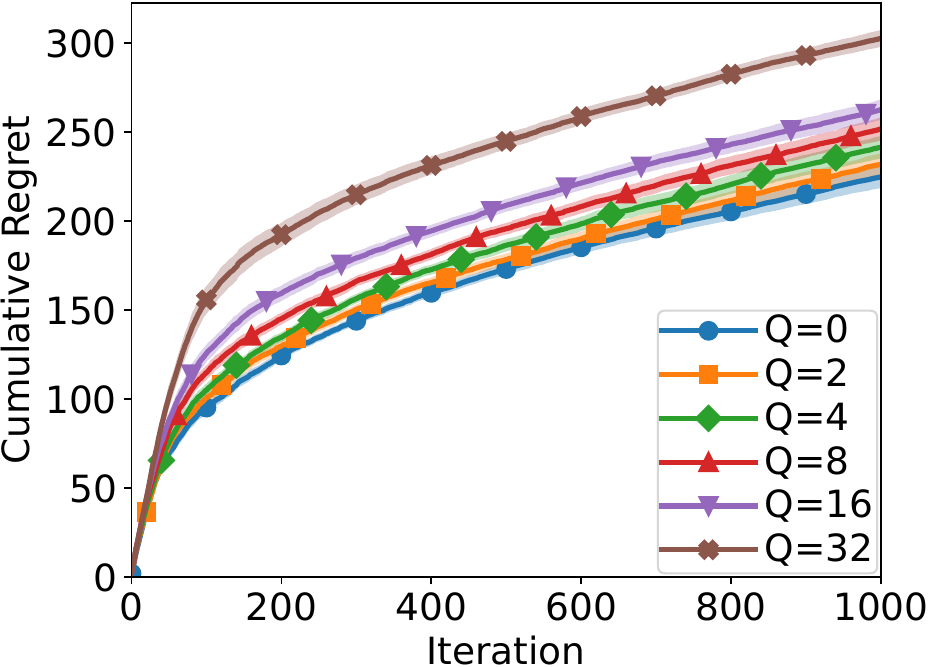}
    
    \includegraphics[width=0.49\linewidth]{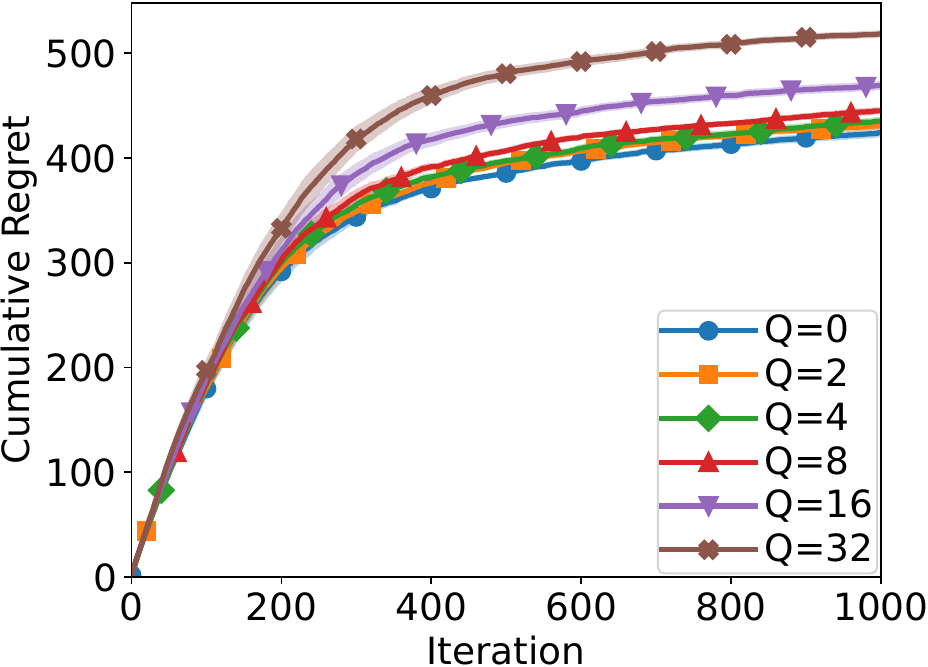}
    \includegraphics[width=0.49\linewidth]{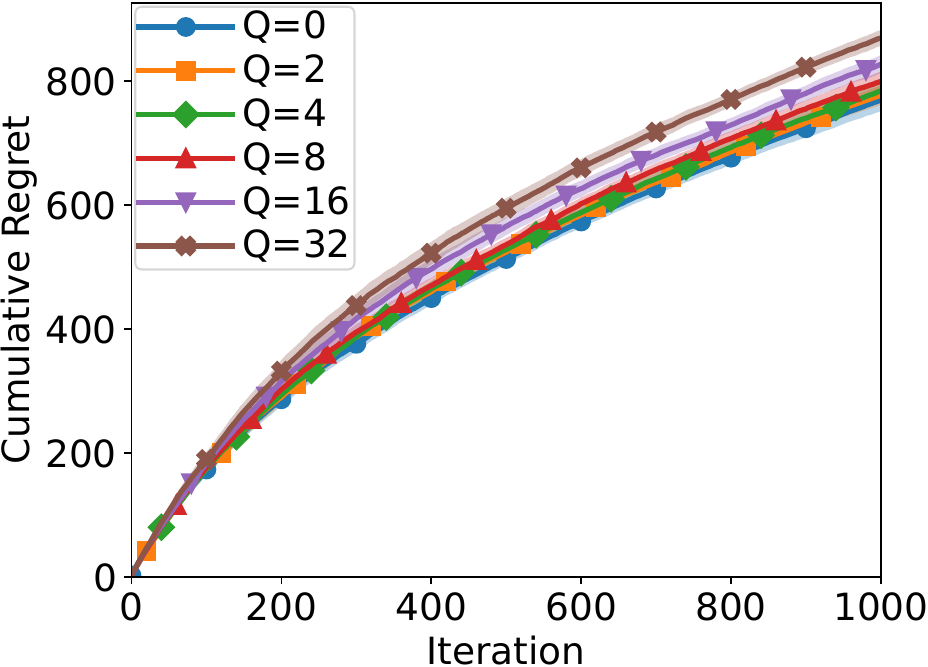}
    \caption{
    Synthetic function experiment results.
    The vertical axis shows the mean and standard error of cumulative regrets.
    The horizontal axis shows iterations.
    The left and right columns show the results in noiseless and noisy settings, respectively.
    The top and bottom rows show the results for the SE and Mat\'ern kernels, respectively.
    In each plot, we show the result of GP-BTS with $Q \in \{0, 2, 4, 8, 16, 32 \}$ shown in Algorithm~\ref{alg:GP-BTS}.
    }
    \label{fig:exp}
\end{figure}

\section{Conclusion and future work}
\label{sec:conclusion}

In this paper, we presented an improved regret analysis for GP-BTS in the simple delayed-feedback setting.
By developing Lemma~\ref{lemInflatedEllipticalPotentialLemmaRevised}, we decoupled the delay parameter $Q$ from the main regret term, converting the previously known multiplicative degradation into an additive one.
Consequently, our analysis theoretically guarantees that GP-BTS achieves cumulative regret bounds comparable to those of the sequential case without relying on the practically inefficient initial US phase \citep{Desautels2014-Parallelizing}.
Furthermore, we established regret upper bounds for the noiseless setting following \citep{iwazaki2025gaussian}.

\paragraph{Discussion and future work.}
While our results tighten the regret bounds for GP-BTS, several important avenues remain for future work:

\begin{itemize}
    \item \textbf{Simple regret bound in noiseless setting.} As discussed in Remark~\ref{remark:simple_regret_noiseless}, determining the possibility of improving our simple regret bounds in the noiseless setting is crucial.
    \item \textbf{Adaptation to GP-BUCB.} Our current proof technique cannot directly tighten the regret bounds of GP-BUCB, since it uses the confidence parameter $\beta_T = O(Q \gamma_T)$, which is already inflated by $Q$ \citep{Desautels2014-Parallelizing}.
    Furthermore, unlike GP-BTS, GP-BUCB may not select diverse points in a batch if we use $\beta_t = \Theta(\gamma_t)$, which is not scaled by $Q$, although we conjecture that such an algorithm can achieve the same regret upper bound as ours.
    Thus, to develop a meaningful UCB-based batched method achieving the same regret upper bound as ours, we need to design a new UCB-based method that can select diverse points in a batch without scaling $\beta_t$ by $Q$.
    \item \textbf{High-probability bounds.} Since we present expected regret bounds, Markov's inequality readily indicates high-probability bounds with $1/\delta$ dependence.
    For the noisy case, obtaining a $\ln(1/\delta)$ dependence is relatively straightforward using the Azuma-Hoeffding inequality as with \citep{Chowdhury2017-on}.
    However, deriving a $\ln(1/\delta)$ dependence in the noiseless setting remains an open problem.
    This is because the standard concentration inequalities incur an $O(\sqrt{T})$ dependence, which is dominant in the noiseless case.
    \item \textbf{Bayesian setting adaptation.} 
    Tightening the regret bounds in the Bayesian setting \citep{Kandasamy2018-Parallelised,nava2022diversified,sugiura2026randomized} is a promising direction.
    \item \textbf{Discretization error for GP-TS.} 
    Our analysis for GP-BTS incurs discretization error as with GP-TS \citep{Chowdhury2017-on}, which results in $\ln T$ degradation in the noisy setting and can be non-negligible in the noiseless setting.
    Carefully addressing the need for discretization to avoid this degradation is an important future direction.
    \item \textbf{Optimality with respect to $Q$.}
    Although we have reduced the dependence on $Q$, the optimal dependence remains unknown.
    The additive $\Theta(Q)$ term is unavoidable, since the first $Q$ evaluations can incur $\Theta(Q)$ regret in the worst case.
    On the other hand, it may be possible to eliminate the additional factors, such as $O(\ln^{d+1} Q)$ and $O(Q^{d/(2\nu)})$ for the SE and Mat\'ern kernels, respectively, in the noisy setting, and the multiplicative $Q^{\nu/d}$ factor for the Mat\'ern kernel with $d > \nu$ in the noiseless setting.
    We leave a characterization of the optimal dependence on $Q$ for future work.
\end{itemize}

\bibliography{ref}
\bibliographystyle{plainnat}
\appendix
\section{Auxiliary lemmas}
\label{sec:lemmas}

This section lists the auxiliary lemmas for the proof.

We use the following lemma to obtain the upper bound of the discretization error.
\begin{lemma}[Lemma~5.1 in \citep{freitas2012exponential}]
    Suppose that Assumption~\ref{assump:discretization} holds.
    Then, any $g \in \cH_k$ is Lipschitz continuous with respect to $\|g\|_{\cH_k} L_k$.
    \label{lem:RKHS_lipschitz}
\end{lemma}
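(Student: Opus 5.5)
The statement to be proved is Lemma~\ref{lem:RKHS_lipschitz}: every $g \in \cH_k$ is Lipschitz with constant $\|g\|_{\cH_k} L_k$. I would use the reproducing property together with a derivative-reproducing argument. Fix $\*x, \*x^\prime \in \cX$. Then
\begin{align*}
    g(\*x) - g(\*x^\prime) = \langle g, k(\cdot, \*x) - k(\cdot, \*x^\prime) \rangle_{\cH_k}.
\end{align*}
Rather than bounding $\|k(\cdot,\*x) - k(\cdot,\*x^\prime)\|_{\cH_k}$ directly, which would give a square-root modulus, I would move from $\*x^\prime$ to $\*x$ along the axis-parallel path that changes one coordinate at a time. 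This path gives the $\ell_1$ norm that appears in Assumption~\ref{assump:discretization}. Along the $j$-th segment I would write the increment as $\int \partial_j g(\*z)\, dz_j$.

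The key step is the standard fact (e.g., Steinwart--Christmann, Cor.~4.36) that, since $k$ has continuous mixed second derivatives $\partial^2 k/\partial u_j \partial v_j$, the partial derivative $\partial_{v_j} k(\cdot, \*z)$ lies in $\cH_k$. It reproduces derivatives, $\partial_j g(\*z) = \langle g, \partial_{v_j} k(\cdot,\*z)\rangle_{\cH_k}$, and its squared norm equals $\partial^2 k(\*u,\*v)/\partial u_j\partial v_j|_{\*u=\*v=\*z}$. Cauchy--Schwarz then gives $|\partial_j g(\*z)| \le \|g\|_{\cH_k} L_k$ uniformly in $\*z$ and $j$. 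Summing over the segments yields
\begin{align*}
    |g(\*x) - g(\*x^\prime)| \le \|g\|_{\cH_k} L_k \|\*x - \*x^\prime\|_1 .
\end{align*}

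The main obstacle is technical. The axis-parallel path may leave $\cX$, so I would interpret the assumption on the kernel as holding on a convex neighbourhood such as $[0,r]^d$. Alternatively, I would note that the result is quoted from \citep{freitas2012exponential} under that convention. I would also justify differentiating under the inner product, i.e.\ that the difference quotients of $k(\cdot,\*z)$ converge in $\cH_k$. This follows from continuity of the mixed derivatives via the norm identity
\begin{align*}
    \|k(\cdot,\*z+h\*e_j)-k(\cdot,\*z)\|_{\cH_k}^2 = k(\*z+h\*e_j,\*z+h\*e_j) - 2k(\*z+h\*e_j,\*z) + k(\*z,\*z),
\end{align*}
combined with a second-order Taylor expansion of $k$.
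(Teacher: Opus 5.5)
Your proposal is correct and is essentially the standard argument behind this result. The paper gives no proof of its own and simply imports Lemma~5.1 of \citet{freitas2012exponential}, whose proof has the same structure: the derivative-reproducing identity $\partial_j g(\*z) = \langle g, \partial_{v_j} k(\cdot,\*z)\rangle_{\cH_k}$, then Cauchy--Schwarz to get $|\partial_j g| \le \|g\|_{\cH_k} L_k$, then coordinatewise integration to obtain the $\ell_1$-Lipschitz bound used in the discretization step of Lemma~\ref{lem:GeneralRegretUpperBound}. Your caveat is the right one and is implicit in the quoted statement: the derivative condition must hold on a convex superset of $\cX$ such as $[0,r]^d$, with $g$ extended to it without increasing its RKHS norm.
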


The following lemmas provide the confidence intervals for both noisy and noiseless settings:
\begin{lemma}[Proposition 1 in \citep{vakili2021-optimal}, Corollary 3.11 in \citep{kanagawa2018gaussian}, Corollary 5.6 in \citep{kanagawa2025gaussian}, or Lemma 6 in \citep{freitas2012exponential}]
    Suppose that Assumption~\ref{assump:function} holds and $f(\*x_t)$ can be obtained for all $t \in \NN$.
    Then, by setting $\lambda^2 \geq 0$, the following holds:
    \begin{align}
        \forall t \in \NN, \forall \*x \in \cX, |f(\*x) - \mu_{t-1}(\*x)| \leq B \sigma_{t-1}(\*x),
    \end{align}
    for any input sequence $(\*x_t)_{t \in \NN}$.
    \label{lem:CI_noiseless}
\end{lemma}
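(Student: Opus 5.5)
The plan is to use the reproducing property together with Cauchy--Schwarz in $\cH_k$, exploiting that in this lemma the observations are exact, $y_i = f(\*x_i)$. Write $\*A_{t-1} = \*K_{t-1} + \lambda^2 \*I_{t-1}$ and $\*a(\*x) = \*A_{t-1}^{-1} \*k_{t-1}(\*x) \in \RR^{t-1}$. The case with no data is immediate: $|f(\*x)| = |\langle f, k(\cdot, \*x) \rangle_{\cH_k}| \leq B \sqrt{k(\*x, \*x)} = B \sigma_{-1}(\*x)$. So assume $t-1 \geq 1$, and in the noiseless case $\lambda = 0$ rely on the paper's convention that a point is not added when it would make the kernel matrix singular, so that $\*A_{t-1}$ is invertible. (For $\lambda = 0$, $t-1$ below then means the number of retained points.)

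First I express the error as an RKHS inner product. Since $\mu_{t-1}(\*x) = \*a(\*x)^\top \*y_{t-1}$ and $y_i = f(\*x_i) = \langle f, k(\cdot, \*x_i) \rangle_{\cH_k}$, we get
\begin{align*}
    f(\*x) - \mu_{t-1}(\*x) = \Bigl\langle f, \, k(\cdot, \*x) - \sum_{i=1}^{t-1} a_i(\*x) k(\cdot, \*x_i) \Bigr\rangle_{\cH_k}.
\end{align*}
Cauchy--Schwarz and Assumption~\ref{assump:function} then bound $|f(\*x) - \mu_{t-1}(\*x)|$ by $B$ times the $\cH_k$-norm of $h_{\*x} = k(\cdot, \*x) - \sum_i a_i(\*x) k(\cdot, \*x_i)$.

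It remains to show $\|h_{\*x}\|_{\cH_k}^2 \leq \sigma_{t-1}^2(\*x)$. Expanding with the reproducing property, and abbreviating $\*k = \*k_{t-1}(\*x)$, $\*A = \*A_{t-1}$, $\*K = \*K_{t-1}$,
\begin{align*}
    \|h_{\*x}\|_{\cH_k}^2 = k(\*x, \*x) - 2 \*k^\top \*A^{-1} \*k + \*k^\top \*A^{-1} \*K \*A^{-1} \*k .
\end{align*}
Substituting $\*K = \*A - \lambda^2 \*I$ turns the last term into $\*k^\top \*A^{-1} \*k - \lambda^2 \|\*A^{-1} \*k\|_2^2$. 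Hence
\begin{align*}
    \|h_{\*x}\|_{\cH_k}^2 = \sigma_{t-1}^2(\*x) - \lambda^2 \|\*A^{-1} \*k\|_2^2 \leq \sigma_{t-1}^2(\*x),
\end{align*}
with equality when $\lambda = 0$. This completes the argument, and it holds for every $t$, every $\*x$, and any (even adaptively chosen) input sequence, since no probability is involved.

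The only delicate point is the noiseless case $\lambda = 0$, where invertibility of $\*K_{t-1}$ is needed. I handle it through the skip rule in Section~\ref{sec:GPR}. Dropping a point whose feature $k(\cdot, \*x_t)$ already lies in the span of the retained ones leaves both $\mu_{t-1}$ and $\sigma_{t-1}$ unchanged as functions. Hence the bound computed on the retained subset is exactly the claimed one.
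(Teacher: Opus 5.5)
Your proof is correct and is essentially the standard argument: the paper gives no proof of its own and only cites \citet{vakili2021-optimal} and \citet{kanagawa2018gaussian}, whose results rest on the worst-case-error characterization $\sigma_{t-1}^2(\*x) = \sup_{\|g\|_{\cH_k} \leq 1} \bigl(g(\*x) - \sum_i a_i(\*x) g(\*x_i)\bigr)^2 + \lambda^2 \|\*a(\*x)\|_2^2$, which is exactly what you derive by hand via the reproducing property and Cauchy--Schwarz. Your treatment of $\lambda = 0$ through the skip rule is also sound; since $\mu_{t-1}$ and $\sigma_{t-1}$ are by definition computed from the retained points, whose kernel matrix is invertible, your closing remark about dropped points is not even needed.
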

\begin{lemma}[Adapted from Theorem 3.11 in \citep{abbasi2013online}]
    Suppose that Assumptions~\ref{assump:function} and \ref{assump:noisy} hold.
    Fix $\delta \in (0, 1)$.
    Then, by setting $\lambda^2 > 0$, the following holds:
    \begin{align}
        \Pr\left(\forall t \in \NN, \forall \*x \in \cX, |f(\*x) - \mu_{t-1}(\*x_t)| \leq \beta_t^{1/2} \sigma_{t-1}(\*x_t) \right) \geq 1 - \delta,
    \end{align}
    where $\beta_t^{1/2} = B + \frac{R}{\lambda} \sqrt{2\gamma_t + 2\ln (1 / \delta)}$.
    \label{lem:CI_noisy}
\end{lemma}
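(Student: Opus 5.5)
We would prove the statement with the standard route: split the kernel ridge regression error into a bias term and a noise term, and control the noise term with the self-normalized martingale bound of \citet{abbasi2013online}, which is where $\gamma_t$ enters. We read the intended event as $|f(\*x) - \mu_{t-1}(\*x)| \leq \beta_t^{1/2} \sigma_{t-1}(\*x)$ for all $\*x \in \cX$ and all $t$.

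\textbf{Setup.} Let $\phi(\*x) = k(\*x, \cdot) \in \cH_k$ be the canonical feature map, and let $\Phi_{t-1}: \cH_k \to \RR^{t-1}$ stack $\langle \phi(\*x_i), \cdot \rangle$ for $i \le t-1$. Define $V_{t-1} = \Phi_{t-1}^* \Phi_{t-1} + \lambda^2 I$. The push-through identity gives
\[
\mu_{t-1}(\*x) = \langle \phi(\*x), V_{t-1}^{-1} \Phi_{t-1}^* \*y_{t-1} \rangle, \qquad \sigma_{t-1}^2(\*x) = \lambda^2 \| \phi(\*x) \|_{V_{t-1}^{-1}}^2 .
\]
Substitute $\*y_{t-1} = \Phi_{t-1} f + \*\epsilon_{t-1}$. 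Then
\[
\mu_{t-1}(\*x) - f(\*x) = -\lambda^2 \langle \phi(\*x), V_{t-1}^{-1} f \rangle + \langle \phi(\*x), V_{t-1}^{-1} S_{t-1} \rangle, \qquad S_{t-1} = \sum_{i<t} \epsilon_i \phi(\*x_i).
\]

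\textbf{Bias term.} By Cauchy--Schwarz in the $V_{t-1}^{-1}$ geometry, and using $\lambda^2 V_{t-1}^{-1} \preceq I$, it is at most $\lambda \| \phi(\*x) \|_{V_{t-1}^{-1}} \, \lambda \| f \|_{\cH_k} \le B \sigma_{t-1}(\*x)$.

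\textbf{Noise term.} It is at most $\| \phi(\*x) \|_{V_{t-1}^{-1}} \| S_{t-1} \|_{V_{t-1}^{-1}} = (\sigma_{t-1}(\*x) / \lambda) \| S_{t-1} \|_{V_{t-1}^{-1}}$. This bound holds deterministically and simultaneously for all $\*x$, so no union bound over $\cX$ is needed.

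\textbf{Key step.} We need, with probability at least $1-\delta$ and uniformly in $t$,
\[
\| S_{t-1} \|_{V_{t-1}^{-1}}^2 \le 2R^2 \ln\!\left( \det(\*I + \lambda^{-2} \*K_{t-1})^{1/2} / \delta \right).
\]
By Definition~\ref{def:MIG} the right-hand side is at most $2R^2(\gamma_{t-1} + \ln(1/\delta)) \le 2R^2(\gamma_t + \ln(1/\delta))$. Combining with the bias bound yields exactly $\beta_t^{1/2} = B + \frac{R}{\lambda}\sqrt{2\gamma_t + 2\ln(1/\delta)}$.

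We would prove the key step by the method of mixtures.
\begin{enumerate}
\item Because $\*x_i$ is $\cF_{i-1}$-measurable and $\epsilon_i$ is conditionally $R$-sub-Gaussian (Assumption~\ref{assump:noisy}), the process $M_t(h) = \exp\bigl( \langle h, S_t \rangle / R - \tfrac12 \| \Phi_t h \|^2 \bigr)$ is a nonnegative supermartingale for each fixed $h$.
\item Integrate $h$ against a Gaussian measure with covariance $\lambda^{-2} I$. To avoid infinite-dimensional issues, we work in $\RR^{t}$ with the Gram matrix: mix over $\*u = \Phi_t h \sim \cN(0, \lambda^{-2} \*K_t)$, or equivalently against a GP prior. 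Completing the square then gives $\exp\bigl( \frac{1}{2R^2} \| S_t \|_{V_t^{-1}}^2 \bigr) \det(\*I + \lambda^{-2}\*K_t)^{-1/2}$.
\item Apply Ville's maximal inequality (a stopping-time argument) to get the bound uniformly in $t$.
\end{enumerate}

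\textbf{Main obstacle.} The delicate part is the mixture step in a possibly infinite-dimensional RKHS. Specifically, we must check that the GP-based mixture $\bar M_t$ is still a supermartingale under the adaptive design, and that it has the claimed closed form. We would first try the finite-dimensional argument on $\mathrm{span}\{\phi(\*x_1), \dots, \phi(\*x_T)\}$, then pass to the limit $T\to\infty$ by monotone convergence of the events. Everything else is routine linear algebra.
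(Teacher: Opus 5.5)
Your argument is correct and is essentially the standard proof of the result that the paper only cites (Theorem~3.11 of \citet{abbasi2013online}; the paper itself gives no proof). The bias/noise split, the identity $\sigma_{t-1}(\*x) = \lambda\|\phi(\*x)\|_{V_{t-1}^{-1}}$, and the method-of-mixtures self-normalized bound together give exactly $\beta_t^{1/2} = B + \frac{R}{\lambda}\sqrt{2\gamma_t + 2\ln(1/\delta)}$. Your reading of the event, with $\*x$ in place of $\*x_t$, is also the one the paper later uses for $E^{(f)}_t$.

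Two small fixes are needed.

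\textbf{Bias bound.} The second factor should be $\lambda\|f\|_{V_{t-1}^{-1}} \le \|f\|_{\cH_k}$. As written, $\lambda\|f\|_{\cH_k}$ leaves a spurious factor $\lambda$, and the inequality then fails for $\lambda > 1$.

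\textbf{Finite-dimensional fallback.} Drop the plan of mixing over $\mathrm{span}\{\phi(\*x_1), \dots, \phi(\*x_T)\}$. That subspace depends on future, adaptively chosen queries, so the resulting mixture is not $(\cF_t)$-adapted and Ville's inequality does not apply. The Gram-matrix/GP-prior mixture you already propose works directly and needs no limiting argument. The reason is that $\cN(0, \lambda^{-2}\*K_t)$ restricts to $\cN(0, \lambda^{-2}\*K_{t-1})$ on its first $t-1$ coordinates, and the conditional law of the new coordinate depends only on the $\cF_{t-1}$-measurable $\*x_t$, which gives the supermartingale property.
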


We use the following useful lemmas for noisy settings:
%
%
\begin{restatable}[Extended from Proposition 1 of \citet{Desautels2014-Parallelizing}]{lemma}{lemPoteriorVarianceInequality}
    \label{lemPoteriorVarianceInequality}
    Fix $\lambda > 0$, any dataset $\cD_{(1)}$, and $\cD_{(2)} = \{ (\*x_{(2), i}, y_{(2), i}) \}_{i=1}^N$ for some $N \in \NN_+$.
    Define 
    $\sigma_{(1)}^2 (\*x)$ and $\sigma_{(1 \cup 2)}^2 (\*x)$ as the posterior variance functions computed from $\cD_{(1)}$ and $\cD_{(1)} \cup \cD_{(2)}$, respectively.
    Furthermore, define $k_{(1)}(\*x, \*x^\prime)$ as the covariance function computed from $\cD_{(1)}$.
    Then, the following holds:
    \begin{align}
        \frac{\sigma_{(1)}^2(\*x) }{\sigma_{(1 \cup 2)}^2(\*x)}
        &= 
        \frac{{\rm det}\rbr{\*\Sigma_{(1)}(\*X_{(2)}) + \lambda^2 \*I_N}}{{\rm det}\rbr{\*R + \lambda^2 \*I_N} }
        \leq 1 + \lambda^{-2} \sum_{i=1}^N \sigma_{(1)}^2 (\*x_{(2), i}),
    \end{align}
    where
    \begin{itemize}
        \item $\*\Sigma_{(1)}(\*X_{(2)}) \in \RR^{N \times N}$ denotes the posterior covariance matrix whose $(i, j)$-th element is $k_{(1)}(\*x_{(2), i}, \*x_{(2), j})$,
        \item $\*k_{(1)}(\*x, \*X_{(2)}) \in \RR^N$ denotes the posterior covariance vector whose $i$-th element is $k_{(1)}(\*x, \*x_{(2), i})$,
    \end{itemize}
    and 
    \begin{align*}
        \*R = \*\Sigma_{(1)}(\*X_{(2)}) - \frac{\*k_{(1)}(\*x, \*X_{(2)}) \*k_{(1)}(\*x, \*X_{(2)})^\top }{\sigma_{(1)}^2 (\*x)}.
    \end{align*}
\end{restatable}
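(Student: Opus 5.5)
The plan is to prove the equality by Gaussian conditioning and two Schur-complement evaluations of one determinant. The inequality should then follow from the matrix determinant lemma applied to the rank-one update relating $\*R + \lambda^2 \*I_N$ and $\*\Sigma_{(1)}(\*X_{(2)}) + \lambda^2 \*I_N$.

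\textbf{Reduction to conditioning.} The posterior variance depends only on inputs, and Bayesian updating is sequential. So $\sigma_{(1\cup 2)}^2$ is the posterior variance obtained from the prior $\cG\cP(\mu_{(1)}, k_{(1)})$ after observing $\cD_{(2)}$ with noise variance $\lambda^2$. Writing $\*A = \*\Sigma_{(1)}(\*X_{(2)}) + \lambda^2 \*I_N$ and $\*k = \*k_{(1)}(\*x, \*X_{(2)})$, this gives
\begin{align*}
    \sigma_{(1\cup 2)}^2(\*x) = \sigma_{(1)}^2(\*x) - \*k^\top \*A^{-1} \*k .
\end{align*}
Here $\*A$ is invertible because $\lambda > 0$. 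I will assume $\sigma_{(1)}^2(\*x) > 0$ so that $\*R$ is defined.

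\textbf{The equality.} Consider the joint matrix $\*M = \begin{pmatrix} \sigma_{(1)}^2(\*x) & \*k^\top \\ \*k & \*A \end{pmatrix}$ and compute $\det \*M$ with the Schur complement in each of its two blocks.
\begin{itemize}
    \item Eliminating the $\*A$ block gives $\det \*M = \det(\*A)\, \sigma_{(1\cup 2)}^2(\*x)$.
    \item Eliminating the scalar block gives $\det \*M = \sigma_{(1)}^2(\*x) \det\bigl(\*A - \*k\*k^\top / \sigma_{(1)}^2(\*x)\bigr) = \sigma_{(1)}^2(\*x) \det(\*R + \lambda^2 \*I_N)$.
\end{itemize}
Equating the two expressions yields the claimed ratio of determinants.

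\textbf{The inequality.} Set $\*B = \*R + \lambda^2 \*I_N$ and $\*u = \*k / \sigma_{(1)}(\*x)$, so that $\*A = \*B + \*u\*u^\top$. By the matrix determinant lemma,
\begin{align*}
    \frac{\det \*A}{\det \*B} = 1 + \*u^\top \*B^{-1} \*u .
\end{align*}
I expect the key step to be showing $\*R \succeq 0$. This holds because $\*R$ is the Schur complement of the positive semidefinite posterior covariance matrix of $(\*x, \*X_{(2)})$ under $k_{(1)}$; equivalently, it is the posterior covariance of $\*X_{(2)}$ after a noiseless observation at $\*x$. Hence $\*B \succeq \lambda^2 \*I_N$, and
\begin{align*}
    \*u^\top \*B^{-1} \*u \leq \lambda^{-2} \|\*u\|^2 = \lambda^{-2} \sum_{i=1}^N \frac{k_{(1)}(\*x, \*x_{(2),i})^2}{\sigma_{(1)}^2(\*x)} .
\end{align*}

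\textbf{Final bound.} Since $k_{(1)}$ is a positive semidefinite kernel, Cauchy--Schwarz gives $k_{(1)}(\*x, \*x_{(2),i})^2 \leq \sigma_{(1)}^2(\*x)\, \sigma_{(1)}^2(\*x_{(2),i})$. Each summand is therefore at most $\sigma_{(1)}^2(\*x_{(2),i})$, which gives $1 + \lambda^{-2}\sum_{i=1}^N \sigma_{(1)}^2(\*x_{(2),i})$ as the upper bound.
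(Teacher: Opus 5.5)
Your proof is correct. Its second half is the paper's argument: the matrix determinant lemma with base matrix $\*R + \lambda^2 \*I_N$ and rank-one term built from $\*k_{(1)}(\*x, \*X_{(2)})$, the bound $(\*R + \lambda^2 \*I_N)^{-1} \preceq \lambda^{-2} \*I_N$ from $\*R \succeq 0$, and Cauchy--Schwarz for the posterior kernel $k_{(1)}$.

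The route to the equality is different. The paper argues information-theoretically. It adapts Proposition~1 of \citet{Desautels2014-Parallelizing} to write the variance ratio as $\exp\cbr{2 I\rbr{f(\*x); \*y_{(2)} \mid \cD_{(1)}}}$. It then expands this mutual information as a difference of Gaussian entropies of $\*y_{(2)}$ given $\cD_{(1)}$, with and without an additional noiseless observation of $f(\*x)$, which yields the two determinants.

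Your evaluation of $\det \*M$ through its two Schur complements is the linear-algebraic content of that mutual-information symmetry, so the two arguments are equivalent. What yours buys:
\begin{itemize}
\item It is self-contained: it needs only the sequential-update formula for $\sigma_{(1 \cup 2)}^2$, not an external proposition.
\item It makes the nondegeneracy assumption $\sigma_{(1)}^2(\*x) > 0$ explicit.
\end{itemize}
What the paper's phrasing buys is a direct link from the ratio to conditional mutual information, the quantity the MIG controls elsewhere in the analysis.

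You also justify $\*R \succeq 0$ as a Schur complement of the positive semidefinite posterior covariance matrix at $(\*x, \*X_{(2)})$. The paper leaves this implicit in its reading of $\*R + \lambda^2 \*I_N$ as a conditional covariance. One small reordering: this fact is also what guarantees $\det\rbr{\*R + \lambda^2 \*I_N} > 0$ when you divide in the equality step, so state it before that step.
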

\begin{proof}
    By adapting Proposition~1 of \citet{Desautels2014-Parallelizing} to our simple delayed setting, we can derive the following equation:
    \begin{align}
        \frac{\sigma_{(1)}^2 (\*x) }{\sigma_{(1 \cup 2)}^2(\*x)}
        &= 
        \exp\cbr{
        2 I\rbr{f(\*x) ; \*y_{(2)} \mid \cD_{(1)}}
        },
    \end{align}
    where
    $\*y_{(2)} = \rbr{y_{(2), 1}, \dots, y_{(2), N}} \in \RR^N$
    and $I(\*Z_1; \*Z_2 \mid \cD)$ is the conditional Shannon mutual information between $\*Z_1$ and $\*Z_2$ given $\cD$.
    From the definition of the mutual information of a multivariate Gaussian distribution, we have
    \begin{align}
        2 I\rbr{f(\*x) ; \*y_{(2)} \mid \cD_{(1)}}
        &= 2 \sbr{H(\*y_{(2)} \mid \cD_{(1)}) - H(\*y_{(2)} \mid \cD_{(1)} \cup \{ \rbr{\*x, f(\*x)} \})} \\
        &= \ln\rbr{ \rbr{2 \pi e}^N {\rm det}\rbr{\*\Sigma_{(1)}(\*X_{(2)}) + \lambda^2 \*I_N} } \\
        &\ - \ln\rbr{ \rbr{2 \pi e}^N {\rm det}\rbr{\*\Sigma_{(1)}(\*X_{(2)}) + \lambda^2 \*I_N - \frac{\*k_{(1)}(\*x, \*X_{(2)}) \*k_{(1)}(\*x, \*X_{(2)})^\top }{\sigma_{(1)}^2 (\*x)}} } \\
        &= \ln \frac{{\rm det}\rbr{\*\Sigma_{(1)}(\*X_{(2)}) + \lambda^2 \*I_N}}{{\rm det}\rbr{\*R + \lambda^2 \*I_N} }.
    \end{align}
    Hence, we obtain
    \begin{align}
        \frac{\sigma_{(1)}^2 (\*x) }{\sigma_{(1 \cup 2)}^2(\*x)}
        &= 
        \exp\cbr{\rbr{ 
        \ln \frac{{\rm det}\rbr{\*\Sigma_{(1)}(\*X_{(2)}) + \lambda^2 \*I_N}}{{\rm det}\rbr{\*R + \lambda^2 \*I_N} }
        }} \\
        &= \frac{{\rm det}\rbr{\*\Sigma_{(1)}(\*X_{(2)}) + \lambda^2 \*I_N}}{{\rm det}\rbr{\*R + \lambda^2 \*I_N} }.
    \end{align}

    Furthermore, by using the matrix determinant lemma ${\rm det}(\*A + \*a \*b^\top) = {\rm det}(\*A) \rbr{1 + \*b^\top \*A^{-1} \*a}$ for any positive definite matrix $\*A$ and any vectors $\*a$ and $\*b$ as $\*A = \*R + \lambda^2 \*I_N$, $\*a = \*k_{(1)}(\*x, \*X_{(2)})$, and $\*b = \frac{\*k_{(1)}(\*x, \*X_{(2)})}{\sigma_{(1)}^2 (\*x)}$, we have
    \begin{align}
        \frac{\sigma_{(1)}^2 (\*x) }{\sigma_{(1 \cup 2)}^2(\*x)}
        &= 1 + \frac{\*k_{(1)}(\*x, \*X_{(2)})^\top}{\sigma_{(1)}^2 (\*x)} \rbr{\*R + \lambda^2 \*I_N}^{-1} \*k_{(1)}(\*x, \*X_{(2)}).
    \end{align}
    Since $\*R$ is positive semidefinite, we see 
    $\*a^\top \rbr{\lambda^2 \*I_N}^{-1} \*a \geq \*a^\top \rbr{\*R + \lambda^2 \*I_N}^{-1} \*a$ for any vector $\*a$.
    Therefore, we can obtain
    \begin{align}
        \frac{\sigma_{(1)}^2 (\*x) }{\sigma_{(1 \cup 2)}^2(\*x)}
        &\leq 1 + \frac{\sum_{i=1}^N k_{(1)}^2(\*x, \*x_{(2), i})}{\lambda^2 \sigma_{(1)}^2 (\*x)}.
    \end{align}
    Since $k_{(1)}^2(\*x, \*x_{(2), i}) \leq \sigma_{(1)}^2 (\*x) \sigma_{(1)}^2 (\*x_{(2), i})$, we further obtain
    \begin{align}
        \frac{\sigma_{(1)}^2 (\*x) }{\sigma_{(1 \cup 2)}^2(\*x)}
        &\leq 1 + \lambda^{-2} \sum_{i=1}^N \sigma_{(1)}^2 (\*x_{(2), i}).
    \end{align}
\end{proof}


\section{Proof}
\label{sec:proof}

\subsection{Proof for Lemma~\ref{lem:GeneralRegretUpperBound}}
\label{sec:proof_GeneralRegretUpperBoundNoisy}

First, we show the following lemma, modified from Lemma 10 in \citep{Chowdhury2017-on}:
\begin{lemma}
    \label{lem:GP-TS-InstanteneousRegret}
    Assume the same premise as in Theorem~\ref{TheoCumulativeRegretBoundNoisy}.
    Furthermore, define the event $E^{(f)}_t$ as follows:
    \begin{align}
        E^{(f)}_t = \{\forall \*x \in \cX_t, |f(\*x) - \mu_{t-Q-1}(\*x)| \leq \beta_t^{1/2}\bigl(p_t^{(f)}\bigr) \sigma_{t-Q-1}(\*x) \}
    \end{align}
    where $\cX_t \subset \cX$ is a finite subset of an input domain.
    Then, for all $t \in [T]$ and $\cD_{t-Q-1}$ such that $E^{(f)}_t$ is true, the following inequality holds:
    \begin{align}
        \EE\left[ f([\*x^*]_t) - f(\*x_t) \mid \cD_{t-Q-1} \right]
        &\leq 
        \frac{2 + p}{p} \EE\left[ \min \left\{2B, c_t \sigma_{t-Q-1} (\*x_t)\right\} \mid \cD_{t-Q-1} \right] + 2B p_t^{(g)},
    \end{align}
    where 
    $[\*x]_t = \argmin_{\*x^\prime \in \cX_t} \|\*x - \*x^\prime \|_1$,
    $p = 1 / (4 e \sqrt{\pi})$,
    and $c_t = \beta_t^{1/2}\bigl(p_t^{(f)}\bigr) \left(2 \zeta_t^{1/2}\bigl(p_t^{(g)}\bigr) + 1 \right)$.
\end{lemma}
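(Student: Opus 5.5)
We adapt the argument of Lemma~10 in \citep{Chowdhury2017-on}, replacing every posterior quantity by its delayed counterpart $\mu_{t-Q-1},\sigma_{t-Q-1}$. Throughout, we condition on $\cD_{t-Q-1}$ satisfying $E^{(f)}_t$. Conditionally on $\cD_{t-Q-1}$, the only randomness in $\*x_t$ is the sample path $g_t\sim\cG\cP(\mu_{t-Q-1},\beta_t k_{t-Q-1})$.

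\textbf{Concentration and anti-concentration of $g_t$.} Define the event
\[
E^{(g)}_t=\{\forall \*x\in\cX_t,\ |g_t(\*x)-\mu_{t-Q-1}(\*x)|\le \beta_t^{1/2}\zeta_t^{1/2}\sigma_{t-Q-1}(\*x)\}.
\]
A Gaussian tail bound with a union bound over $|\cX_t|\le (L_kdrN_t)^d$ gives $\Pr(E^{(g)}_t\mid\cD_{t-Q-1})\ge 1-p_t^{(g)}$. The slack $1/p$ inside $\zeta_t$ only makes the bound conservative.

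The Gaussian anti-concentration $\Pr(Z>1)\ge p$ gives
\[
\Pr\bigl(g_t([\*x^*]_t)>f([\*x^*]_t)\mid\cD_{t-Q-1}\bigr)\ge p
\]
under $E^{(f)}_t$. This uses $f([\*x^*]_t)-\mu\le\beta_t^{1/2}\sigma\le \beta_t^{1/2}\sigma\cdot 1$, i.e., $Z$ standardized with the inflated variance.

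\textbf{Saturated and unsaturated points.} Let $S_t=\{\*x\in\cX_t: f([\*x^*]_t)-f(\*x)>c_t\sigma_{t-Q-1}(\*x)\}$ be the saturated set. By construction $[\*x^*]_t\notin S_t$. On $E^{(f)}_t\cap E^{(g)}_t$, every $\*x\in S_t$ satisfies
\[
g_t(\*x)\le f(\*x)+c_t\sigma_{t-Q-1}(\*x)<f([\*x^*]_t).
\]
Hence
\[
\Pr(\*x_t\notin S_t\mid\cD)\ge \Pr\bigl(g_t([\*x^*]_t)>f([\*x^*]_t)\bigr)-\Pr\bigl((E^{(g)}_t)^c\bigr)\ge p-p_t^{(g)}.
\]

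Let $\bar{\*x}_t=\argmin_{\*x\notin S_t}\sigma_{t-Q-1}(\*x)$; this is $\cD_{t-Q-1}$-measurable. On $E^{(f)}_t\cap E^{(g)}_t$, using $g_t(\*x_t)\ge g_t(\bar{\*x}_t)$, we get
\[
f([\*x^*]_t)-f(\*x_t)\le c_t\sigma(\bar{\*x}_t)+c_t\sigma(\*x_t),
\]
following Chowdhury–Gopalan. Moreover, $f([\*x^*]_t)-f(\*x_t)\le 2B$ always holds by Assumption~\ref{assump:function} and $k\le1$. Hence on $E^{(g)}_t$,
\[
f([\*x^*]_t)-f(\*x_t)\le \min\{2B,c_t\sigma(\bar{\*x}_t)\}+\min\{2B,c_t\sigma(\*x_t)\}.
\]
The complement of $E^{(g)}_t$ contributes at most $2Bp_t^{(g)}$.

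\textbf{Main obstacle: bounding $\min\{2B,c_t\sigma(\bar{\*x}_t)\}$ inside the expectation.} This is the modification relative to \citep{vakili2021-scalable}. Since $\sigma(\*x_t)\ge\sigma(\bar{\*x}_t)$ whenever $\*x_t\notin S_t$, and $u\mapsto\min\{2B,c_tu\}$ is monotone,
\[
\EE\bigl[\min\{2B,c_t\sigma(\*x_t)\}\mid\cD\bigr]\ge \min\{2B,c_t\sigma(\bar{\*x}_t)\}\,\Pr(\*x_t\notin S_t)\ge (p-p_t^{(g)})\min\{2B,c_t\sigma(\bar{\*x}_t)\}.
\]
The factor $p-p_t^{(g)}$ is problematic, because to obtain the constant $(2+p)/p$ we need it to be at least $p$. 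I would handle this by running the anti-concentration argument on $E^{(g)}_t$ with the enlarged $\zeta_t$, which contains $1/p$ precisely so that $p_t^{(g)}$ is absorbed. Concretely, I would use $p_t^{(g)}\le p/2$-type slack, or bound $\Pr(\*x_t\notin S_t)\ge p/2$ and then re-tune constants. This is the step where the constant bookkeeping is most delicate.

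Combining the pieces gives
\[
\EE[f([\*x^*]_t)-f(\*x_t)\mid\cD]\le\Bigl(\tfrac{2}{p}+1\Bigr)\EE\bigl[\min\{2B,c_t\sigma(\*x_t)\}\mid\cD\bigr]+2Bp_t^{(g)},
\]
which is the claim.
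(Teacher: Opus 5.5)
Your route is the paper's: the Chowdhury--Gopalan saturated-set argument run with the delayed posterior $\mu_{t-Q-1},\sigma_{t-Q-1}$, the $\cD_{t-Q-1}$-measurable minimizer $\bar{\*x}_t$, the truncation at $2B$ kept inside the expectation, and the $2Bp_t^{(g)}$ charge for $\overline{E^{(g)}_t}$. However, the step you flag as the main obstacle is left unresolved, and it needs a concrete computation.

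Your bound $\Pr(\*x_t\notin S_t\mid\cD_{t-Q-1})\ge p-p_t^{(g)}$ is vacuous for small $t$. With $p=1/(4e\sqrt{\pi})\approx 0.052$ and $p_t^{(g)}=1/t^2$, it is negative for $t\le 4$, yet the lemma is claimed for every $t\in[T]$. So an assumption of the form $p_t^{(g)}\le p/2$ is not available.

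The factor $2/p$ inside $\zeta_t$ is not conservative slack; it is exactly what closes this step. Given $\cD_{t-Q-1}$, the quantity $(g_t(\*x)-\mu_{t-Q-1}(\*x))/(\beta_t^{1/2}\sigma_{t-Q-1}(\*x))$ is standard normal (for $\sigma_{t-Q-1}(\*x)>0$). The tail bound $\Pr(|Z|>a)\le e^{-a^2/2}$ and a union bound over $|\cX_t|\le (L_kdrN_t)^d$ then give $\Pr(\overline{E^{(g)}_t}\mid\cD_{t-Q-1})\le (L_kdrN_t)^d e^{-\zeta_t/2}=\frac{p}{2}p_t^{(g)}$. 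Hence $\Pr(\*x_t\notin S_t\mid\cD_{t-Q-1})\ge p-\frac{p}{2}p_t^{(g)}\ge \frac{p}{2}$ uniformly in $t$. This gives $\min\{2B,c_t\sigma_{t-Q-1}(\bar{\*x}_t)\}\le \frac{2}{p}\EE[\min\{2B,c_t\sigma_{t-Q-1}(\*x_t)\}\mid\cD_{t-Q-1}]$, and therefore exactly the factor $1+2/p=(2+p)/p$. Your remark that a lower bound of $p$ is needed for $(2+p)/p$ is a bookkeeping slip: a lower bound $p$ would give $(1+p)/p$. No re-tuning is needed once you have $p/2$.

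Second, the deterministic bound $f([\*x^*]_t)-f(\*x_t)\le c_t\sigma_{t-Q-1}(\bar{\*x}_t)+c_t\sigma_{t-Q-1}(\*x_t)$ does not follow from your saturated set with threshold $c_t$. Decompose into $[f([\*x^*]_t)-f(\bar{\*x}_t)]+[f(\bar{\*x}_t)-g_t(\bar{\*x}_t)]+[g_t(\bar{\*x}_t)-g_t(\*x_t)]+[g_t(\*x_t)-f(\*x_t)]$. On $E^{(f)}_t\cap E^{(g)}_t$, the first bracket costs your threshold $c_t\sigma_{t-Q-1}(\bar{\*x}_t)$, and the second costs a further $\beta_t^{1/2}(\zeta_t^{1/2}+1)\sigma_{t-Q-1}(\bar{\*x}_t)$. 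So the coefficient of $\sigma_{t-Q-1}(\bar{\*x}_t)$ is $c_t+\beta_t^{1/2}(\zeta_t^{1/2}+1)$; Chowdhury--Gopalan's own bound likewise carries twice the saturation threshold there. Carrying this through (bounding it by $2c_t$) gives $(4+p)/p$ rather than $(2+p)/p$.

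The paper instead uses the smaller threshold $\beta_t^{1/2}(\zeta_t^{1/2}+1)$. That is all you need to get $g_t(\*x)<f([\*x^*]_t)$ for $\*x\in S_t$, and the coefficient becomes $2\beta_t^{1/2}(\zeta_t^{1/2}+1)=c_t+\beta_t^{1/2}$. (The paper's display bounds the first bracket by $\beta_t^{1/2}\zeta_t^{1/2}\sigma_{t-Q-1}(\bar{\*x}_t)$ and so lands exactly on $c_t$. The extra $\beta_t^{1/2}$ only changes the numerical constant in $c_t$ and is harmless for every rate.)

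Finally, state anti-concentration with $\ge$ rather than $>$. If $\sigma_{t-Q-1}([\*x^*]_t)=0$, which can occur when the lemma is reused in the noiseless setting, the strict event has probability zero, while $\{g_t([\*x^*]_t)\ge f([\*x^*]_t)\}$ has probability one. The strict inequality $g_t(\*x)<f([\*x^*]_t)$ on $S_t$ already excludes ties.
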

\begin{proof}
    First, we show that, for any $\cD_{t-Q-1}$ such that $E^{(f)}_t$ is true, 
    \begin{align}
        \Pr\left( \*x_t \in \cX_t \backslash \cS_t \mid \cD_{t-Q-1} \right) \geq \frac{p}{2},
    \end{align}
    where $\cS_t$ is called saturation set \citep{Chowdhury2017-on} and is defined as follows:
    \begin{align}
        \cS_t = \cbr{\*x \in \cX_t \biggm| f([\*x^*]_t) - f(\*x) > 
        \beta_t^{1/2}\bigl(p_t^{(f)}\bigr) \left(\zeta_t^{1/2}\bigl(p_t^{(g)}\bigr) + 1 \right) \sigma_{t-Q-1} (\*x) }.
    \end{align}

    To show the above inequality, we use the following facts.
    For any $\cD_{t-Q-1}$ such that $E^{(f)}_t$ is true and $\sigma_{t-Q-1} ([\*x^*]_t) = 0$, the probability $\Pr\rbr{g_t([\*x^*]_t) = f([\*x^*]_t) = \mu_{t-Q-1}([\*x^*]_t) \mid \cD_{t-Q-1}} = 1$.
    Furthermore, the following inequality holds for any $\cD_{t-Q-1}$ such that $E^{(f)}_t$ is true and $\sigma_{t-Q-1} ([\*x^*]_t) > 0$:
    \begin{align}
        &\Pr\left( g_t([\*x^*]_t) \geq f([\*x^*]_t) \mid \cD_{t-Q-1} \right) \\
        &\geq \Pr\left( g_t([\*x^*]_t) \geq \mu_{t - Q - 1}([\*x^*]_t) + \beta_t^{1/2}\bigl(p_t^{(f)}\bigr) \sigma_{t-Q-1}([\*x^*]_t) \mid \cD_{t-Q-1} \right) \\
        &= \Pr\left( \frac{g_t([\*x^*]_t) - \mu_{t - Q - 1}([\*x^*]_t)}{\beta_t^{1/2}\bigl(p_t^{(f)}\bigr) \sigma_{t-Q-1} ([\*x^*]_t)} \geq 1 \biggm| \cD_{t-Q-1} \right) \\
        &\geq p,
    \end{align}
    where the first inequality holds since $E^{(f)}_t$ is true and the second inequality holds due to $\frac{g_t([\*x^*]_t) - \mu_{t - Q - 1}([\*x^*]_t)}{\beta_t^{1/2}\bigl(p_t^{(f)}\bigr) \sigma_{t-Q-1} ([\*x^*]_t)} \mid \cD_{t-Q-1} \sim \cN(0, 1)$ and Gaussian anti-concentration inequality (e.g., Lemma 7 in \citep{Chowdhury2017-on} and and Lemma 5.1 in \citep{Srinivas2010-Gaussian}).
    Hence, $\Pr\rbr{g_t([\*x^*]_t) \geq f([\*x^*]_t) \mid \cD_{t-Q-1}} \geq p$ for any $\cD_{t-Q-1}$ such that $E^{(f)}_t$ is true.
    Furthermore, define the event $E^{(g)}_t$ as follows:
    \begin{align}
        E^{(g)}_t = \{\forall \*x \in \cX_t, |g_t(\*x) - \mu_{t-Q-1}(\*x)| 
        \leq \beta_t^{1/2}\bigl(p_t^{(f)}\bigr) \zeta_t^{1/2}\bigl(p_t^{(g)}\bigr) \sigma_{t-Q-1}(\*x) \}.
    \end{align}
    From Gaussian concentration inequality (e.g., Lemma 5 in \citep{Chowdhury2017-on} and and Lemma 5.1 in \citep{Srinivas2010-Gaussian}) and the definition of $\zeta_t (\delta) = 2 \ln\bigl( 2 (L_k drN_t)^d / \bigl(p \delta \bigr) \bigr)$, we have $\Pr(E^{(g)}_t \mid \cD_{t-Q-1}) \geq 1 - \frac{p}{2} p_t^{(g)}$ for any $\cD_{t-Q-1}$.
    Therefore, for any $\cD_{t-Q-1}$ such that $E^{(f)}_t$ is true,
    \begin{align}
        &\Pr\left(\exists \*x \in \cS_t, f([\*x^*]_t) < g_t(\*x) \mid \cD_{t-Q-1} \right) \\
        &\leq 
        \Pr\left(\exists \*x \in \cS_t, 
        f(\*x) + \beta_t^{1/2}\bigl(p_t^{(f)}\bigr) \left(\zeta_t^{1/2}\bigl(p_t^{(g)}\bigr) + 1 \right)
        \sigma_{t-Q-1} (\*x) < g_t(\*x) \mid \cD_{t-Q-1} \right) \\
        &\leq \frac{p}{2} p_t^{(g)},
    \end{align}
    which holds due to the definition of $\cS_t$ and the probability bound for $E^{(g)}_t$.

    Obviously, $\cX_t \backslash \cS_t$ is not empty since $[\*x^*]_t$ belongs to $\cX_t \backslash \cS_t$.
    If $g_t([\*x^*]_t) \geq g_t(\*x)$ for all $\*x \in \cS_t$, then $\*x_t \in \cX_t \backslash \cS_t$.
    Thus, for any $\cD_{t-Q-1}$ such that $E^{(f)}_t$ is true, we can obtain the lower bound as follows:
    \begin{align}
        &\Pr\left( \*x_t \in \cX_t \backslash \cS_t \mid \cD_{t-Q-1} \right) \\
        &\geq \Pr\left(\forall \*x \in \cS_t, g_t([\*x^*]_t) \geq g_t(\*x) \mid \cD_{t-Q-1} \right) \\
        &\geq \Pr\left(\forall \*x \in \cS_t, g_t([\*x^*]_t) \geq f([\*x^*]_t) \geq g_t(\*x) \mid \cD_{t-Q-1} \right) \\
        &\geq \Pr\left( g_t([\*x^*]_t) \geq f([\*x^*]_t) \mid \cD_{t-Q-1} \right) - \Pr\left(\exists \*x \in \cS_t, f([\*x^*]_t) < g_t(\*x) \mid \cD_{t-Q-1} \right) \\
        &\geq p - \frac{p}{2} p_t^{(g)} \geq \frac{p}{2}.
    \end{align}

    Define $\bar{\*x}_t = \argmin_{\*x \in \cX_t \backslash \cS_t} \sigma_{t - Q - 1}(\*x)$.
    Then, since $\bar{\*x}_t$ is deterministic given $\cD_{t-Q-1}$, we can see that, for any $\cD_{t-Q-1}$ such that $E^{(f)}_t$ is true,
    \begin{align}
        &\EE\left[ \min \left\{2B, c_t \sigma_{t-Q-1}(\*x_t) \right\} \mid \cD_{t-Q-1} \right] \\
        &\geq \EE\left[ \min \left\{2B, c_t \sigma_{t-Q-1}(\*x_t) \right\} \mid \cD_{t-Q-1}, \*x_t \in \cX_t \backslash \cS_t \right] \Pr\left( \*x_t \in \cX_t \backslash \cS_t \mid \cD_{t-Q-1} \right) \\
        &\geq \frac{p\min \left\{2B, c_t \sigma_{t-Q-1}(\bar{\*x}_t) \right\}}{2},
        \label{eq:deterministic_sigma_lower_bound}
    \end{align}
    where $c_t = \beta_t^{1/2}\bigl(p_t^{(f)}\bigr) \left(2 \zeta_t^{1/2}\bigl(p_t^{(g)}\bigr) + 1 \right)$.

    If $E^{(f)}_t$ and $E^{(g)}_t$ are true, then
    \begin{align}
        f([\*x^*]_t) - f(\*x_t)
        &= 
        f([\*x^*]_t) - f(\bar{\*x}_t)
        + f(\bar{\*x}_t) - g_t(\bar{\*x}_t)
        + g_t(\bar{\*x}_t) - g_t(\*x_t)
        + g_t(\*x_t) - f(\*x_t)
        \\
        &\leq 
        \beta_t^{1/2}\bigl(p_t^{(f)}\bigr) \zeta_t^{1/2}\bigl(p_t^{(g)}\bigr) \sigma_{t-Q-1} (\bar{\*x}_t)
        + f(\bar{\*x}_t) - g_t(\bar{\*x}_t) + g_t(\*x_t) - f(\*x_t) \\
        &\leq 
        c_t \sigma_{t-Q-1} (\bar{\*x}_t) + \beta_t^{1/2}\bigl(p_t^{(f)}\bigr) \left(\zeta_t^{1/2}\bigl(p_t^{(g)}\bigr) + 1 \right) \sigma_{t-Q-1} (\*x_t) \\
        &\leq 
        c_t \left(\sigma_{t-Q-1} (\bar{\*x}_t) + \sigma_{t-Q-1} (\*x_t)\right),
    \end{align}
    where the first inequality follows from $\bar{\*x}_t \in \cX_t \backslash \cS_t$ and $g_t(\bar{\*x}_t) - g_t(\*x_t) \leq 0$, and the second inequality holds since $E^{(f)}_t$ and $E^{(g)}_t$ are true.
    Furthermore, since $f([\*x^*]_t) - f(\*x_t) \leq 2B$ from 
    Assumption~\ref{assump:function}, we can see that
    \begin{align}
        f([\*x^*]_t) - f(\*x_t)
        &\leq \min\{2B, c_t \left(\sigma_{t-Q-1} (\bar{\*x}_t) + \sigma_{t-Q-1} (\*x_t)\right) \}\\
        &\leq \min \left\{2B, c_t \sigma_{t-Q-1} (\bar{\*x}_t) \right\} + \min \left\{2B, c_t \sigma_{t-Q-1} (\*x_t) \right\},
        \label{eq:deterministic_instanteneous_bound}
    \end{align}
    if $E^{(f)}_t$ and $E^{(g)}_t$ are true.
    This is because, the last upper bound $\min \left\{2B, c_t \sigma_{t-Q-1} (\bar{\*x}_t) \right\} + \min \left\{2B, c_t \sigma_{t-Q-1} (\*x_t) \right\}$ is $4B$, $2B + c_t \sigma_{t-Q-1} (\bar{\*x}_t)$, $2B + c_t \sigma_{t-Q-1} (\*x_t)$, or $c_t \left(\sigma_{t-Q-1} (\bar{\*x}_t) + \sigma_{t-Q-1} (\*x_t) \right)$.
    Hence, if $2B \leq c_t \left(\sigma_{t-Q-1} (\bar{\*x}_t) + \sigma_{t-Q-1} (\*x_t) \right)$, then $2B$ is less than the above four quantities, and vice versa.
    Thus, for any $\cD_{t-Q-1}$ such that $E^{(f)}_t$ is true, we can obtain the following bound:
    \begin{align}
        &\EE\left[ f([\*x^*]_t) - f(\*x_t) \mid \cD_{t-Q-1} \right] \\
        &\leq 
        \EE\left[ \min \left\{2B, c_t \sigma_{t-Q-1} (\bar{\*x}_t) \right\} \mid E^{(g)}_t, \cD_{t-Q-1} \right] \Pr(E^{(g)}_t \mid \cD_{t-Q-1}) \\
        &\qquad + \EE\left[ \min \left\{2B, c_t \sigma_{t-Q-1} (\*x_t) \right\} \mid E^{(g)}_t, \cD_{t-Q-1} \right] \Pr(E^{(g)}_t \mid \cD_{t-Q-1}) \\
        &\qquad + \EE\left[ f([\*x^*]_t) - f(\*x_t) \mid \overline{E^{(g)}_t}, \cD_{t-Q-1} \right] \Pr(\overline{E^{(g)}_t} \mid \cD_{t-Q-1}) \\
        &\leq 
        \EE\left[ \min \left\{2B, c_t \sigma_{t-Q-1} (\bar{\*x}_t)\right\} \mid \cD_{t-Q-1} \right] \\
        &\qquad + \EE\left[ \min \left\{2B, c_t \sigma_{t-Q-1} (\*x_t)\right\} \mid \cD_{t-Q-1} \right] + 2B p_t^{(g)} \\
        &\leq 
        \frac{2 + p}{p} \EE\left[ \min \left\{2B, c_t \sigma_{t-Q-1} (\*x_t)\right\} \mid \cD_{t-Q-1} \right] + 2B p_t^{(g)},
    \end{align}
    where $\overline{E^{(g)}_t}$ is a complementary event of $E^{(g)}_t$ and we use Eqs.~\eqref{eq:deterministic_sigma_lower_bound} and \eqref{eq:deterministic_instanteneous_bound}, $\max_{\*x \in \cX} |f(\*x)| \leq B$ from Assumption~\ref{assump:function}.
    %
\end{proof}

\lemGeneralRegretUpperBound*
\begin{proof}
    First, from Assumption~\ref{assump:discretization}, Lemma~\ref{lem:RKHS_lipschitz} and $\| f \|_{\cH_k} \leq B$ (Assumption~\ref{assump:function}), we can see that
    \begin{align}
        \max_{\*x \in \cX} |f(\*x) - f([\*x]_t)|
        &\leq BL_k \max_{\*x \in \cX} \| \*x - [\*x]_t \|_1
        \leq \frac{B}{N_t},
    \end{align}
    where $[\*x]_t = \argmin_{\*x^\prime \in \cX_t} \|\*x - \*x^\prime \|_1$.
    Furthermore, define the event $E^{(f)}_t$ as follows:
    \begin{align}
        E^{(f)}_t = \{\forall \*x \in \cX, |f(\*x) - \mu_{t-1}(\*x)| \leq \beta_t^{1/2}\bigl(p_t^{(f)}\bigr) \sigma_{t-1}(\*x) \}.
    \end{align}
    For the noisy setting, from Lemma~\ref{lem:CI_noisy}, $\Pr(E^{(f)}_t) \geq 1 - p_t^{(f)}$ and $\Pr\bigr(\overline{E^{(f)}_t} \bigr) \leq p_t^{(f)}$, where $\overline{E^{(f)}_t}$ is a complementary event of $E^{(f)}_t$.
    For the noiseless setting, from Lemma~\ref{lem:CI_noiseless}, $\Pr(E^{(f)}_t) = 1$ and $\Pr\bigr(\overline{E^{(f)}_t} \bigr) = 0$.
    Therefore, we can arrange the instantaneous regret as follows:
    \begin{align}
        &\EE\left[ f(\*x^*) - f(\*x_t) \right] \\
        &\leq 
        |f(\*x^*) - f([\*x^*]_t)| +
        \EE\left[ f([\*x^*]_t) - f(\*x_t) \right] \\
        &\leq 
        \frac{B}{N_t} 
        + \EE\left[ (f([\*x^*]_t) - f(\*x_t)) \mathbbm{1}\rbr{E^{(f)}_{t-Q}} \right] 
        + \EE\left[ (f([\*x^*]_t) - f(\*x_t)) \mathbbm{1}\rbr{\overline{E^{(f)}_{t-Q}}} \right] \\
        &\leq 
        \frac{B}{N_t} 
        + \EE\left[ (f([\*x^*]_t) - f(\*x_t)) \mathbbm{1}\rbr{E^{(f)}_{t-Q}} \right] 
        + 2B p_t^{(f)} \\
        &=
        \frac{B}{N_t} 
        + \EE_{\cD_{t-Q-1}} \sbr{\EE\bigl[ f([\*x^*]_t) - f(\*x_t) \mathbbm{1}\rbr{E^{(f)}_{t-Q}} \mid \cD_{t-Q-1} \bigr]} 
        + 2B p_t^{(f)}.
    \end{align}
    where we use 
    (i) $f$ and $\cX_t$ do not depend on the random quantities for the first line, 
    (ii) the property of the discretization and the decomposition of the expectation for the second line, 
    (iii) $\max_{\*x \in \cX} |f(\*x)| \leq B$ from Assumption~\ref{assump:function} and $\Pr(\overline{E^{(f)}_t}) \leq p_t^{(f)}$ for the third line, and
    (iv) tower property of expectation for the fourth line.

    Thus, by combining Lemma~\ref{lem:GP-TS-InstanteneousRegret}, we can see that
    \begin{align}
        &\EE\left[ f(\*x^*) - f(\*x_t) \right] \\
        &\leq 
        \frac{2 + p}{p} 
        \EE\sbr{ \min \left\{2B, c_t \sigma_{t-Q-1} (\*x_t)\right\} \mathbbm{1}\rbr{E^{(f)}_{t-Q}} }
        + \frac{B}{N_t} 
        + 2B p_t^{(f)}
        + 2B p_t^{(g)} \\
        &\leq 
        \frac{2 + p}{p} \EE\left[ \min \left\{2B, c_t \sigma_{t-Q-1} (\*x_t)\right\} \right]
        + \frac{B}{N_t} 
        + 2B p_t^{(f)}
        + 2B p_t^{(g)},
    \end{align}
    where $c_t = \beta_t^{1/2}\bigl(p_t^{(f)}\bigr) \left(2 \zeta_t^{1/2}\bigl(p_t^{(g)}\bigr) + 1 \right) $.
\end{proof}

\subsection{Proof for Lemma~\ref{lemInflatedEllipticalPotentialLemmaRevised}}
\label{sec:proof_lemInflatedEllipticalPotentialLemma}

\lemInflatedEllipticalPotentialLemmaRevised*
\begin{proof}
    First, we show $|\cT| \leq 6 Q \gamma_{|\cT|} (\lambda^2) + Q \leq 6 Q \gamma_{T} (\lambda^2) + Q$.
    Let $\widetilde{\cT} = \{t \in [T] \mid \sigma_{t - Q - 1}^2(\*x_t) \geq \lambda^2 / Q , t > Q\}$ to handle the initial rounds separately.
    As with the proof for the elliptical potential count lemma (Lemma D.9 in \citet{flynn2025-tighter}, Lemma 3.3 in \citet{iwazaki2025-improvedGPbandit}, Lemma 6 in \citet{iwazaki2025gaussian}), we can see that
    \begin{align}
        |\widetilde{\cT}|
        &= \sum_{t \in \widetilde{\cT}} h\rbr{ Q \lambda^{-2} \sigma_{t-Q-1}^2 (\*x_t) },
    \end{align}
    where we denote $h(\cdot) = \min\{1, \cdot \}$ for simplicity.
    Furthermore, by telescoping, we obtain
    \begin{align}
        |\widetilde{\cT}|
        &= \sum_{t \in \widetilde{\cT}} h\rbr{ Q \lambda^{-2} \sigma_{t-1}^2 (\*x_t) }
        + \sum_{t \in \widetilde{\cT}} \sum_{i=1}^Q \cbr{ h\rbr{ Q \lambda^{-2} \sigma_{t-i-1}^2 (\*x_t) } - h\rbr{ Q \lambda^{-2} \sigma_{t-i}^2 (\*x_t) }}.
        \label{eq:count_telescoping}
    \end{align}
    From the definition of the posterior variance function, we see
    \begin{align}
        \sigma_{t-i}^2(\*x_t) 
        &= \sigma_{t-i-1}^2(\*x_t) - \frac{k_{t-i-1}^2(\*x_{t-i}, \*x_t) }{\sigma_{t-i-1}^2(\*x_{t-i}) + \lambda^2} \\
        &\geq \sigma_{t-i-1}^2(\*x_t) - \frac{\sigma_{t-i-1}^2(\*x_{t-i}) \sigma_{t-i-1}^2(\*x_t) }{\sigma_{t-i-1}^2(\*x_{t-i}) + \lambda^2} \\
        &= \frac{\sigma_{t-i-1}^2(\*x_t) }{\lambda^{-2} \sigma_{t-i-1}^2(\*x_{t-i}) + 1}.
        \label{eq:post_var_onestep_LB}
    \end{align}
    Hence, for any $i \in [Q]$ and $t \in \widetilde{\cT}$, we have
    \begin{align}
        &h\rbr{ Q \lambda^{-2} \sigma_{t-i-1}^2 (\*x_t) } - h\rbr{ Q \lambda^{-2} \sigma_{t-i}^2 (\*x_t)} \\
        &\overset{(i)}{\leq} h\rbr{ Q \lambda^{-2} \sigma_{t-i-1}^2 (\*x_t) } - h\rbr{ Q \lambda^{-2} \frac{\sigma_{t-i-1}^2(\*x_t) }{\lambda^{-2} \sigma_{t-i-1}^2(\*x_{t-i}) + 1}} \\
        &\overset{(ii)}{\leq} \rbr{1 - \frac{1}{\lambda^{-2} \sigma_{t-i-1}^2(\*x_{t-i}) + 1}} h\rbr{ Q \lambda^{-2} \sigma_{t-i-1}^2 (\*x_t) } \\
        &\overset{(iii)}{\leq} \frac{\lambda^{-2} \sigma_{t-i-1}^2(\*x_{t-i})}{\lambda^{-2} \sigma_{t-i-1}^2(\*x_{t-i}) + 1} \\
        &\overset{(iv)}{\leq} \ln\rbr{1 + \lambda^{-2} \sigma_{t-i-1}^2(\*x_{t-i})}.
    \end{align}
    The inequalities follow from
    (i) Eq.~\eqref{eq:post_var_onestep_LB},
    (ii) $h(ab) \geq ah(b)$ for all $a \in [0, 1]$,
    (iii)~$h\rbr{ Q \lambda^{-2} \sigma_{t-i-1}^2 (\*x_t) } \leq 1$,
    and (iv) $\frac{a}{1 + a} \leq \ln(1 + a)$ for all $a \geq 0$.

    For the first term of Eq.~\eqref{eq:count_telescoping}, we see that
    \begin{align}
        \sum_{t \in \widetilde{\cT}} h\rbr{ Q \lambda^{-2} \sigma_{t-1}^2 (\*x_t) }
        &\leq Q \sum_{t \in \widetilde{\cT}} h\rbr{ \lambda^{-2} \sigma_{t-1}^2 (\*x_t) } \\
        &\leq 4Q \sum_{t \in \widetilde{\cT}} \frac{1}{2} \ln \rbr{1 + \lambda^{-2} \sigma_{t-1}^2 (\*x_t) } \\
        &\leq 4Q \gamma_{|\widetilde{\cT}|}(\lambda^2) \leq 4Q \gamma_{T}(\lambda^2),
    \end{align}
    where the first and second inequalities hold because $h(ab) \leq ah(b)$ for all $a \geq 1$ and $h(a) \leq 2 \ln(1 + a)$ for all $a \geq 0$, respectively.
    For the second term of Eq.~\eqref{eq:count_telescoping}, we have
    \begin{align}
        &\sum_{t \in \widetilde{\cT}} \sum_{i=1}^Q \cbr{ h\rbr{ Q \lambda^{-2} \sigma_{t-i-1}^2 (\*x_t) } - h\rbr{ Q \lambda^{-2} \sigma_{t-i}^2 (\*x_t) }} \\
        &= \sum_{i=1}^Q \sum_{t \in \widetilde{\cT}} \cbr{ h\rbr{ Q \lambda^{-2} \sigma_{t-i-1}^2 (\*x_t) } - h\rbr{ Q \lambda^{-2} \sigma_{t-i}^2 (\*x_t) }} \\
        &\leq \sum_{i=1}^Q \sum_{t \in \widetilde{\cT}} \ln\rbr{1 + \lambda^{-2} \sigma_{t-i-1}^2(\*x_{t-i})}.
    \end{align}
    From the monotonicity of the posterior variance, we can see that $\sum_{t \in \widetilde{\cT}} \ln\rbr{1 + \lambda^{-2} \sigma_{t-i-1}^2(\*x_{t-i})} \leq 2\gamma_{|\widetilde{\cT}|}(\lambda^2)$.
    Hence, we have
    \begin{align}
        \sum_{t \in \widetilde{\cT}} \sum_{i=1}^Q \cbr{ h\rbr{ Q \lambda^{-2} \sigma_{t-i-1}^2 (\*x_t) } - h\rbr{ Q \lambda^{-2} \sigma_{t-i}^2 (\*x_t) }} 
        &\leq 2Q \gamma_{|\widetilde{\cT}|}(\lambda^2) 
        \leq 2Q \gamma_{T}(\lambda^2).
    \end{align}
    Consequently, we obtain
    \begin{align}
        |\widetilde{\cT}| 
        &\leq 6Q \gamma_{|\widetilde{\cT}|}(\lambda^2) 
        \leq 6Q \gamma_{T}(\lambda^2),
    \end{align}
    and 
    \begin{align}
        |\cT| 
        &\leq 6Q \gamma_{|\cT|}(\lambda^2) + Q
        \leq 6Q \gamma_{T}(\lambda^2) + Q.
    \end{align}

    Second, we show $\sum_{t \in \cT^c} \sigma_{t - Q - 1}^2(\*x_t) \leq 8 \lambda^2 \gamma_T$.
    Let $\widetilde{\sigma}_{t-1}^2(\*x)$ be the posterior variance computed based on the dataset $\widetilde{\cD}_{t-1} = \cD_{t-Q-1} \cup \{ (\*x_i, y_i) \}_{i \in \cM_t}$, where $\cM_t = \cT^c \cap \{ t-Q, \dots, t-1\}$.
    If $\cM_t = \emptyset$, then $\sigma_{t-Q-1}^2(\*x_t) = \widetilde{\sigma}_{t-1}^2(\*x_t)$.
    Moreover, by applying Lemma~\ref{lemPoteriorVarianceInequality}, we have
    \begin{align}
        \sigma_{t-Q-1}^2(\*x_t)
        &\leq \rbr{1 + \lambda^{-2} \sum_{i \in \cM_t } \sigma_{t-Q-1}^2(\*x_i)} \widetilde{\sigma}_{t-1}^2(\*x_t) \\
        &\leq 2 \widetilde{\sigma}_{t-1}^2(\*x_t),
    \end{align}
    where the second inequality follows from $|\cM_t| \leq Q$ and
    $\sigma_{i-Q-1}^2(\*x_i) \leq \frac{\lambda^2}{Q}$
    due to $i < t$ and $i \in \cT^c$.
    Furthermore, define $\check{\sigma}_{t-1}^2 (\*x)$ as the posterior variance computed based on the dataset $\check{\cD}_{t-1} = \{ (\*x_i, y_i) \}_{i \in \rbr{\cT^c \cap [t-1]}}$.
    Since $\check{\cD}_{t-1} \subseteq \widetilde{\cD}_{t-1}$, we have $\widetilde{\sigma}_{t-1}^2(\*x) \leq \check{\sigma}_{t-1}^2 (\*x)$.
    Thus, we have
    \begin{align}
        \sum_{t \in \cT^c} \sigma_{t-Q-1}^2(\*x_t)
        &\leq 
        2 \sum_{t \in \cT^c} \widetilde{\sigma}_{t-1}^2(\*x_t) \\
        &=
        2 \lambda^2 \sum_{t \in \cT^c} \lambda^{-2} \widetilde{\sigma}_{t-1}^2(\*x_t) \\
        &= 
        2 \lambda^2 \sum_{t \in \cT^c} h\rbr{ \lambda^{-2} \widetilde{\sigma}_{t-1}^2(\*x_t) }\\
        &\leq 
        8 \lambda^2 \sum_{t \in \cT^c} \frac{1}{2} \ln \rbr{1 + \lambda^{-2} \widetilde{\sigma}_{t-1}^2(\*x_t) }\\
        &\leq 
        8 \lambda^2 \sum_{t \in \cT^c} \frac{1}{2} \ln \rbr{1 + \lambda^{-2} \check{\sigma}_{t-1}^2(\*x_t) }\\
        &\leq 8 \lambda^2 \gamma_T,
    \end{align}
    where the second equality holds because 
    $\widetilde{\sigma}_{t-1}^2(\*x_t) \leq \sigma_{t-Q-1}^2 (\*x_t) \leq \lambda^2 / Q \leq \lambda^2$ due to $t \in \cT^c$.
    The second, third, and fourth inequalities follow from
    $h(a) \leq 2 \ln(1 + a)$ for all $a \geq 0$,
    $\widetilde{\sigma}_{t-1}^2(\*x) \leq \check{\sigma}_{t-1}^2 (\*x)$,
    and the MIG upper bound, respectively.
\end{proof}

\subsection{Proof for noisy setting}
\label{sec:proof_noisy}

\TheoCumulativeRegretBoundNoisy*
\begin{proof}
    From Lemma~\ref{lemInflatedEllipticalPotentialLemmaRevised}, we can derive
    \begin{align}
        \sum_{t = 1}^T \min \left\{2B, c_t \sigma_{t-Q-1} (\*x_t)\right\}
        &\leq 2B |\cT| + c_T \sum_{t \in \cT^c} \sigma_{t-Q-1}(\*x_t) \\
        &\leq 2B |\cT| + 2 c_T \sqrt{2 \lambda^2 T \gamma_T},
    \end{align}
    where the second inequality holds from the Cauchy--Schwartz inequality.
    %
    %
    Furthermore, we see that $|\cT| \leq \overline{T}^{(Q)}$ since $|\cT| \leq 6Q \gamma_{|\cT|} + Q$ from Lemma~\ref{lemInflatedEllipticalPotentialLemmaRevised}.
    By substituting the above equation and $\sum_{t=1}^{\infty} 1/ t^2 = \pi^2 / 6$ into the sum of upper bounds derived in Lemma~\ref{lem:GeneralRegretUpperBound}, we can obtain the desired result.

    Next, we show the simple regret upper bound.
    Let the event $E^{(f)}_t$ be
    \begin{align}
        E^{(f)}_t = \{\forall t \in [T], \forall \*x \in \cX, |f(\*x) - \mu_{t-1}(\*x)| 
        \leq \beta_t^{1/2}\bigl(1 / t^2\bigr) \sigma_{t-1}(\*x) \}.
    \end{align}
    In addition, define ${\rm LCB}_t(\*x) \coloneqq \mu_{t-1}(\*x) - \beta_t^{1/2}\bigl(1 / t^2\bigr) \sigma_{t-1}(\*x)$.
    Then, since $\hat{\*x}_T = \argmax_{\*x \in \cX} {\rm LCB}_T(\*x)$, we can obtain the upper bound as follows:
    \begin{align}
        \EE[r_T]
        &= \EE\left[\min \left\{2B,f(\*x^*) - f(\hat{\*x}_T)\right\}\right] \\
        &\leq \EE\left[ \min \left\{2B, f(\*x^*) - f(\hat{\*x}_T) \right\} \mid E^{(f)}_T\right] \Pr\left( E^{(f)}_T \right) + 2B \Pr\left( {E^{(f)}_T}^c \right)\\
        &\leq \EE\left[ \min \left\{2B, f(\*x^*) - {\rm LCB}_T(\hat{\*x}_T) \right\} \mid E^{(f)}_T\right] \Pr\left( E^{(f)}_T \right) + \frac{2B}{T^2} \\
        &\leq \frac{1}{T} \sum_{t=1}^T \EE\left[ \min \left\{2B, f(\*x^*) - {\rm LCB}_T(\*x_t) \right\} \mid E^{(f)}_T\right] \Pr\left( E^{(f)}_T \right) + \frac{2B}{T^2} \\
        &\leq \frac{1}{T} \sum_{t=1}^T \EE\left[\min \left\{2B, f(\*x^*) - f(\*x_t) + f(\*x_t) - {\rm LCB}_T(\*x_t) \right\} \mid E^{(f)}_T\right] \Pr\left( E^{(f)}_T \right) + \frac{2B}{T^2} \\
        &\leq \frac{1}{T} \sum_{t=1}^T \EE\left[f(\*x^*) - f(\*x_t) + \min \left\{2B, 2 \beta_T^{1/2}\bigl(1 / T^2\bigr) \sigma_{t-1}(\*x_t) \bigr) \right\} \mid E^{(f)}_T\right] \Pr\left( E^{(f)}_T \right) + \frac{2B}{T^2},
    \end{align}
    where 
    (i) the first inequality holds due to $\max_{\*x \in \cX} |f(\*x)| \leq B$,
    (ii) the second inequality holds due to the definition of $E^{(f)}_T$ and $\Pr\left( {E^{(f)}_T}^c \right) \leq 1/ T^2$ from Lemma~\ref{lem:CI_noisy},
    (iii) the third inequality holds since for all $t \in [T]$, ${\rm LCB}_t(\hat{\*x}_T) \geq {\rm LCB}_t(\*x_t)$,
    and (iv) the fourth inequality holds since $E^{(f)}_T$ is true and $\sigma_{t}(\*x) \geq \sigma_{T}(\*x)$ for all $t \leq T$ and $\*x \in \cX$.
    Hence, we can finally obtain
    \begin{align}
        \EE[r_T]
        &\leq 
        \frac{\EE[R_T] + 2 \EE\left[\sum_{t=1}^T \min\left\{2B, \beta_T^{1/2}\bigl(1 / T^2\bigr) \sigma_{t-1}(\*x_t) \right\} \right]}{T} 
        + \frac{2B}{T^2}.
    \end{align}
    Then, by the same proof as that of cumulative regret, we can obtain the desired order of the simple regret.

    Finally, we show the order of $\overline{T}^{(Q)}$.
    For the case of SE kernels, it suffices to show
    \begin{align}
        &Q \ln^{d+1} Q = \Omega\rbr{ Q \ln^{d+1}\rbr{Q \ln^{d+1} Q} } \\
        &\Leftrightarrow
        C \ln^{d+1} Q > \ln^{d+1}\rbr{Q} + \rbr{(d + 1) \ln \ln Q}^{d+1}
    \end{align}
    where $C$ is some absolute constant.
    Since we can choose an absolute constant $c$ such that $c \ln^{d+1}(Q) > \rbr{(d + 1) \ln \ln Q}^{d+1}$, we can choose $C = 1 + c$.
    For the case of Mat\'ern kernels, by setting $\overline{T}^{(Q)} = \widetilde{O}\rbr{Q^{\frac{2\nu + d}{2\nu}} }$ and we see that $Q \gamma_{\overline{T}^{(Q)}} = \widetilde{O}\rbr{Q^{\frac{2\nu + d}{2\nu}} }$.
    Therefore, we can see that $\overline{T}^{(Q)}$ and $Q \gamma_{\overline{T}^{(Q)}}$ have the same order with respect to $Q$, which is the desired result.
\end{proof}

\subsection{Proof for noiseless setting}
\label{sec:proof_noiseless}

\begin{lemma}[Formal version of Lemma~\ref{lemPosteriorSTDUpperBoundNoiseless_informal}, Posterior standard deviation upper bound for SE and Mat\'ern kernel in the noiseless and batched setting]
    \label{lemPosteriorSTDUpperBoundNoiseless}
    Fix any compact input domain $\cX \subset \RR^d$, and kernel function $k: \cX \times \cX \rightarrow \RR$ that satisfies $k(\*x, \*x) \leq 1$ for all $\*x \in \cX$. 
    Furthermore, let $C_{\rm SE}$, $C_{\rm Mat}$, $\underline{\lambda}_{\rm SE}$, $\underline{\lambda}_{\rm Mat} > 0$, $\underline{T}_{\rm SE}$, $\underline{T}_{\rm Mat} \geq 2$ be the constants that satisfies $\forall \lambda \in (0, \underline{\lambda}_{\rm SE}], \forall t \geq \underline{T}_{\rm SE}, \gamma_t(\lambda^2) \leq C_{\rm SE} (\ln  (t/\lambda^2))^{d+1}$
    and $\forall \lambda \in (0, \underline{\lambda}_{\rm Mat}], \forall t \geq \underline{T}_{\rm Mat}, \gamma_t(\lambda^2) \leq C_{\rm Mat} (t/\lambda^2)^{\frac{d}{2\nu+d}}(\ln  (t/\lambda^2))^{\frac{2\nu}{2\nu+d}}$ for SE kernels and Mat\'ern kernels, respectively.
    Let $B, Q \geq 0$ and $\{ c_t \}_{t \in \NN}$ be monotonically non-decreasing sequence.
    Then, the following statements hold for any $T \in \NN$ and any input sequence $\*x_1, \ldots, \*x_T \in \cX$:
    \begin{itemize}
        \item For the SE kernels,
        \begin{align*}
            \min_{t \in [T]} \sigma_{t-Q-1}(\*x_t) &\leq 
            \begin{cases}
                1 ~~&\mathrm{if}~~ T < Q\overline{T}_{\rm SE}, \\
                \sqrt{2 T / Q} \exp\rbr{-\widetilde{C}_{\rm SE} (T / Q)^{\frac{1}{d+1}}} ~~&\mathrm{if}~~ T\geq Q \overline{T}_{\rm SE},
            \end{cases} \\
             \sum_{t=1}^T \min \cbr{2B, c_t \sigma_{t-Q-1}(\*x_t)} 
             &\leq 
             2BQ \overline{T}_{\rm SE} 
             + \sqrt{2} c_T Q (d+1) \rbr{\frac{\widetilde{C}_{\rm SE}}{2}}^{-\frac{3d+3}{2}} \Gamma\rbr{\frac{3d + 3}{2}},
        \end{align*}
        where $\widetilde{C}_{\rm SE} = (8C_{\rm SE})^{-\frac{1}{d+1}}$ and $\overline{T}_{\rm SE} = \max\{\underline{T}_{\rm SE}, \underline{T}^{(\lambda)}_{\rm SE}, \lceil (d+1)^{d+1}/\widetilde{C}_{\rm SE}^{d+1} \rceil + 1\}$ with $\underline{T}^{(\lambda)}_{\rm SE} = \min\{T \in \NN \mid \forall t \geq T, 2 t\exp(-\widetilde{C}_{\rm SE} t^{\frac{1}{d+1}}) \leq \underline{\lambda}_{\rm SE}^2\}$.
        \item For the Mat\'ern kernels with $\nu > 1/2$,
        \begin{align*}
            \min_{t \in [T]} \sigma_{t-Q-1} (\*x_t) &\leq \begin{cases}
                1 ~~&\mathrm{if}~~ T < Q \overline{T}_{\rm Mat}, \\
                \sqrt{2} \widetilde{C}_{\rm Mat}^{1/2} (T / Q)^{-\frac{\nu}{d}} \rbr{\ln  (T/Q)}^{\frac{\nu}{d}}  ~~&\mathrm{if}~~ T\geq Q\overline{T}_{\rm Mat},
            \end{cases} \\
            \sum_{t=1}^T \min \cbr{2B, c_t \sigma_{t-Q-1}(\*x_t)}
            &\leq \begin{cases}
                2BQ \overline{T}_{\rm Mat}  + \sqrt{2} c_T \widetilde{C}_{\rm Mat}^{1/2} \frac{d}{d-\nu} T^{\frac{d-\nu}{d}} Q^{\frac{\nu}{d}} (\ln  (T/Q))^{\frac{\nu}{d}} ~~&\mathrm{if}~~d > \nu, \\
                2BQ \overline{T}_{\rm Mat} + \sqrt{2} c_T \widetilde{C}_{\rm Mat}^{1/2} Q (\ln  (T/Q))^{2} / 2 ~~&\mathrm{if}~~d = \nu, \\
                2BQ \overline{T}_{\rm Mat} + \sqrt{2} c_T \widetilde{C}_{\rm Mat}^{1/2} \frac{Q\Gamma(\frac{\nu}{d} + 1)}{\rbr{\frac{\nu}{d}-1}^{\frac{\nu}{d}+1}} ~~&\mathrm{if}~~d < \nu,
            \end{cases}
        \end{align*}
        where $\widetilde{C}_{\rm Mat} = \max\cbr{1, \rbr{2 + \frac{2\nu}{d}}^{\frac{2\nu}{d}} (8C_{\rm Mat})^{1+\frac{2\nu}{d}}}$ and $\overline{T}_{\rm Mat} = \max\{4, \underline{T}_{\rm Mat}, \underline{T}^{(\lambda)}_{\rm Mat}\}$ with $\underline{T}^{(\lambda)}_{\rm Mat} = \min\{T \in \NN \mid \forall t \geq T, 2 \widetilde{C}_{\rm Mat} t^{-\frac{2\nu}{d}} \rbr{\ln  t}^{\frac{2\nu}{d}} \leq \underline{\lambda}_{\rm Mat}^2\}$.
    \end{itemize}
\end{lemma}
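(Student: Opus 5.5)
The plan is to reduce the delayed setting to $Q+1$ interleaved sequential noiseless problems and then apply the argument of \citet{iwazaki2025gaussian} to each subsequence. Fix a residue $j \in \{1,\dots,Q+1\}$ and consider the subsequence $t_j(s) = j + (s-1)(Q+1)$, $s = 1,2,\dots$. Within this subsequence, the dataset $\cD_{t-Q-1}$ at time $t = t_j(s)$ contains all earlier points of the subsequence, namely $t_j(1),\dots,t_j(s-1)$. By monotonicity of the posterior variance in the dataset (in the noiseless case, using the convention that non-invertible points are skipped), we get $\sigma_{t_j(s)-Q-1}(\*x_{t_j(s)}) \le \check\sigma^{(j)}_{s-1}(\*x_{t_j(s)})$. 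Here $\check\sigma^{(j)}_{s-1}$ is the posterior standard deviation computed from the subsequence alone. Each subsequence has length $T_j \ge \lfloor T/(Q+1)\rfloor$, which is of order $T/Q$. The min-bound then follows: $\min_t \sigma_{t-Q-1}(\*x_t) \le \min_s \check\sigma^{(j)}_{s-1}(\*x_{t_j(s)})$, so it suffices to bound the minimum posterior standard deviation of a sequential noiseless sequence of length about $T/Q$. For $T < Q\overline{T}$ we simply use $\sigma \le 1$.

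For the sequential bound I follow Lemma~6 and its consequences in \citet{iwazaki2025gaussian}. Choose an auxiliary noise level $\lambda_n^2$ depending on the length $n \approx T/Q$; for SE take $\lambda_n^2 = 2n\exp(-\widetilde{C}_{\rm SE} n^{1/(d+1)})$. The conditions $n \ge \underline{T}^{(\lambda)}_{\rm SE}$ and $n \ge \underline{T}_{\rm SE}$ make the MIG bound $\gamma_n(\lambda_n^2)$ applicable. Apply the elliptical potential count argument: the number of steps with $\check\sigma^2_{s-1} \ge \lambda_n^2$ is at most about $4\gamma_n(\lambda_n^2)$. The noiseless posterior variance is dominated by the noisy one with $\lambda_n^2$, and we use $h(a) \le 2\ln(1+a)$. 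The choice $\widetilde{C}_{\rm SE} = (8C_{\rm SE})^{-1/(d+1)}$ together with $n \ge (d+1)^{d+1}/\widetilde{C}^{d+1}_{\rm SE}$ ensures $4\gamma_n(\lambda_n^2) < n$, since $\ln(n/\lambda_n^2) \le 2\widetilde{C}_{\rm SE}n^{1/(d+1)}$. Hence some step has $\check\sigma^2 < \lambda_n^2$, i.e.\ $\sigma \le \lambda_n$, which is the claimed $\sqrt{2T/Q}\exp(\cdot)$ form up to the $Q$ versus $Q+1$ bookkeeping. The Mat\'ern case is identical with $\lambda_n^2 = 2\widetilde{C}_{\rm Mat} n^{-2\nu/d}(\ln n)^{2\nu/d}$. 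There one verifies that $8C_{\rm Mat}(n/\lambda_n^2)^{d/(2\nu+d)}\ln^{2\nu/(2\nu+d)}(n/\lambda_n^2) \le n$ given the stated $\widetilde{C}_{\rm Mat}$; this is routine algebra.

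For the cumulative sum I apply the min-bound not to the whole sequence but to sorted values. Within subsequence $j$, sort the values $\check\sigma^{(j)}_{s-1}(\*x_{t_j(s)})$ in decreasing order, $\sigma_{(1)}\ge\sigma_{(2)}\ge\cdots$. The key observation, as in \citet{iwazaki2025gaussian}, is that the $m$ indices achieving the top $m$ values form a sequential noiseless subsequence of length $m$. Each of their posteriors (restricted to that sub-subsequence) is at least as large, so the min-bound applied to length $m$ gives $\sigma_{(m)} \le \sqrt{2m}\exp(-\widetilde{C}m^{1/(d+1)})$ for $m \ge \overline{T}$. The first $\overline{T}$ terms of each subsequence are charged $2B$, giving $2B(Q+1)\overline{T}$; I expect to absorb $Q+1$ into $Q$ by slight care, or accept a constant. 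The remaining terms are bounded by $c_T\sum_m \sqrt{2m}\exp(-\widetilde{C}m^{1/(d+1)})$. Comparing with an integral and substituting $u = m^{1/(d+1)}$ yields $(d+1)\Gamma(\tfrac{3d+3}{2})(\widetilde{C}/2)^{-(3d+3)/2}$-type constants; multiplying by $Q$ subsequences gives the SE claim.

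For Mat\'ern the analogous sum $\sum_{m\le T/Q} m^{-\nu/d}(\ln m)^{\nu/d}$ produces three regimes. For $d>\nu$ it gives $(T/Q)^{1-\nu/d}$, which times $Q$ equals $T^{(d-\nu)/d}Q^{\nu/d}$. For $d=\nu$ it gives $\ln^2$, and for $d<\nu$ it gives a convergent sum evaluated via the Gamma integral. The main obstacle is making the sorting argument rigorous in the delayed setting. One must check that restricting to the top-$m$ indices within a single residue class really yields posteriors that dominate the original delayed ones. This holds because we use only points strictly earlier in the same class, all of which are contained in $\cD_{t-Q-1}$, and variance is monotone under removing data. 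If needed, I will alternatively argue directly on the residue classes without sorting, via a count lemma at threshold $\lambda_m^2$.
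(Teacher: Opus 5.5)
Your proof is correct in substance and reaches the same orders, but it is organized differently from the paper's. The paper never splits the run into separate sequential problems. It proves one count bound for the whole delayed sequence, $|\{t\in[T] : \sigma_{t-Q-1}(\*x_t;\lambda^2)>\lambda\}|\le 4Q\gamma_{\lceil T/Q\rceil}(\lambda^2)$, and uses residue classes only inside that inequality. It then takes a global auxiliary level depending on $t/Q$, e.g.\ $\lambda_t^2=(2t/Q)\exp(-\widetilde{C}_{\rm SE}(t/Q)^{1/(d+1)})$, so that $4Q\gamma_{\lceil t/Q\rceil}(\lambda_t^2)\le t-1$. Finally it peels the entire sequence as in \citet{iwazaki2025gaussian}: the $m$-th largest value is at most $\lambda_m$ for $m\ge Q\overline{T}_{\rm SE}$, and the factor $Q$ comes from the change of variables in $\int\sqrt{t/Q}\,\exp(-\frac12\widetilde{C}_{\rm SE}(t/Q)^{1/(d+1)})\,{\rm d}t$. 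You instead split first into residue classes that are genuinely sequential. You apply the sequential count-and-peel argument to each class as a black box, with the unscaled $\lambda_m$, and get the factor from the number of classes. For Mat\'ern with $d>\nu$, concavity of $n\mapsto n^{(d-\nu)/d}$ turns the class lengths into $T^{(d-\nu)/d}(Q+1)^{\nu/d}$. Your version is more modular. It also avoids having to check, during peeling, that a sub-sequence of a delayed sequence is still dominated by a delayed sequence. The paper's version gives one threshold and one sum for the whole run.

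On the bookkeeping, your $Q+1$ classes are the right ones, since $\*x_{t-Q-1}\in\cD_{t-Q-1}$. The paper's classes $\{t : t \bmod Q=q\}$ lack this property, because the in-class predecessor $\*x_{t-Q}$ is not yet observed. So a correct version of the paper's argument also yields $Q+1$ and $T/(Q+1)$. Do not, however, expect to absorb $Q+1$ into $Q$ exactly. In the sum bounds it costs at most a factor $2$ when $Q\ge1$. In the SE min-bound, though, $\lfloor T/(Q+1)\rfloor$ sits inside $\exp(-c(\cdot)^{1/(d+1)})$, so it changes the constant in the exponent, and the validity threshold becomes $T\gtrsim(Q+1)\overline{T}_{\rm SE}$.

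There are also two arithmetic slips:
\begin{itemize}
\item With $\lambda_n^2=2n\exp(-\widetilde{C}_{\rm SE}n^{1/(d+1)})$ one has $\ln(n/\lambda_n^2)=\widetilde{C}_{\rm SE}n^{1/(d+1)}-\ln 2$. You need this sharp form, since your bound $2\widetilde{C}_{\rm SE}n^{1/(d+1)}$ only yields $4\gamma_n(\lambda_n^2)\le 2^d n$.
\item $\lambda_n=\sqrt{2n}\exp(-\frac12\widetilde{C}_{\rm SE}n^{1/(d+1)})$, and this $\frac12$ is what produces the $(\widetilde{C}_{\rm SE}/2)^{-(3d+3)/2}$ constant. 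The stated SE min-bound omits this $\frac12$ as well, but only the version with it is what either argument actually proves.
\end{itemize}
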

\begin{proof}
    In this proof, we describe the noise to calculate the posterior standard deviation explicitly as in $\sigma_{t-Q-1}(\*x_t; \lambda^2)$.
    Therefore, the posterior standard deviation written in this lemma is $\sigma_{t-1}(\*x_t) = \sigma_{t-1}(\*x_t ; 0)$.
    Note that if $\lambda \leq \tilde{\lambda}$, then $\sigma_{t-Q-1}(\*x_t; \lambda^2) \leq \sigma_{t-Q-1}(\*x_t; \tilde{\lambda}^2)$.
    Thus, to obtain the upper bound of posterior standard deviation in the noiseless setting, we can consider bounding as $\sigma_{t-Q-1}(\*x_t; 0) \leq \sigma_{t-Q-1}(\*x_t; \lambda^2_T)$ with some $\lambda^2_T > 0$.

    First, we show the variant of Lemma~\ref{lemInflatedEllipticalPotentialLemmaRevised} adapted from Lemma~\ref{lem:VakiliLem3}.
    Let $\lambda > 0$, $\cT = \cbr{t \in [T] \mid \sigma_{t-Q-1}(\*x_t) > \lambda }$ and $\cT_q = \cbr{t \in [T] \mid t \text{ mod } Q = q}$.
    Then, as with the proof of Lemma~\ref{lemInflatedEllipticalPotentialLemmaRevised}, for any $\lambda > 0$, we can obtain
    \begin{align}
        |\cT| 
        &\leq \sum_{t \in \cT} \min \{1, \lambda^{-2} \sigma_{t-Q-1}^2(\*x_t ; \lambda^2) \}\\
        &\leq \sum_{q=1}^Q \sum_{t \in \cT_q}
        \min \{1, \lambda^{-2} \sigma_{t-Q-1}^2(\*x_t ; \lambda^2) \}\\
        &\leq 4 \sum_{q=1}^Q \sum_{t \in \cT_q}\frac{1}{2} \ln \rbr{1 + \lambda^{-2} \sigma_{t-Q-1}^2(\*x_t ; \lambda^2)}\\
        &\leq 4 \sum_{q=1}^Q \gamma_{\lceil T / Q \rceil} (\lambda^2) \\
        &\leq 4 Q \gamma_{\lceil T / Q \rceil} (\lambda^2), \label{eq:EllipticalPotentialCountLargeVar}
    \end{align}
    where the second inequality holds because $\min\{1, a\} \leq 2 \ln(1 + a)$ for all $a \geq 0$.
    Then, we will control $\lambda_T$ to guarantee $|\cT^c| \geq 1$, that is, there are at least one index $t^\prime \in [T]$ such that $\sigma_{t^\prime-Q-1}(\*x_{t^\prime}) \leq \lambda_T$.

    For the SE kernels, we set $\lambda_t^2 = (2 t / Q) \exp(- \widetilde{C}_{\rm SE} (t/Q)^{\frac{1}{d+1}})$ and $\overline{T}_{\rm SE} \coloneqq \max\{\underline{T}_{\rm SE}, \underline{T}^{(\lambda)}_{\rm SE}, \lceil (d+1)^{d+1}/\widetilde{C}_{\rm SE}^{d+1} \rceil + 1\}$. 
    From the definition of $\lambda_t^2$, $\underline{T}_{\rm SE}$, and $\underline{T}_{\rm SE}^{(\lambda)}$, we have, for any $t \geq Q \overline{T}_{\rm SE}$, $\lambda_t^2 \leq \underline{\lambda}_{\rm SE}^2$ and
    \begin{align}
        \gamma_{\lceil t/Q \rceil}(\lambda_t^2) 
        &\leq C_{\rm SE} \sbr{\ln \rbr{ \left\lceil \frac{t}{Q} \right\rceil \lambda_t^{-2} }}^{d+1} \\
        &\leq C_{\rm SE} \sbr{\ln  \exp\rbr{\widetilde{C}_{\rm SE} (t/Q)^{\frac{1}{d+1}}}}^{d+1} \\
        &= \frac{C_{\rm SE} \widetilde{C}_{\rm SE}^{d+1} t}{Q},
    \end{align}
    where we use $\lceil t / Q \rceil \leq 2 t / Q$ for all $t \geq Q\overline{T}_{\rm SE} \geq Q$ for the second inequality.
    Furthermore,
    \begin{align}
        \frac{C_{\rm SE} \widetilde{C}_{\rm SE}^{d+1} t}{Q} \leq \frac{t-1}{4Q} 
        &\Leftrightarrow \widetilde{C}_{\rm SE}^{d+1} \leq \frac{t-1}{4 C_{\rm SE} t} \\
        &\Leftarrow \widetilde{C}_{\rm SE}^{d+1} \leq \frac{1}{8 C_{\rm SE}} \\
        &\Leftrightarrow \widetilde{C}_{\rm SE} \leq \rbr{\frac{1}{8 C_{\rm SE}}}^{\frac{1}{d+1}},
    \end{align}
    where the second line follows from the inequality $t - 1 \geq t/2$ for all $t \geq Q\overline{T}_{\rm SE} \geq 2$. 
    From the definition of $\widetilde{C}_{\rm SE}$, we conclude that 
    $\forall t \geq \overline{T}_{\rm SE}, \gamma_{\lceil t/Q \rceil}(\lambda_t^2) 
    \leq C_{\rm SE} \widetilde{C}_{\rm SE}^{d+1} t / Q
    \leq \frac{t-1}{4Q}$ 
    from the above inequalities.

    Since $|\cT| \leq 4Q \gamma_{\lceil T/Q \rceil}(\lambda_T^2) \leq T-1$ from Eq.~\eqref{eq:EllipticalPotentialCountLargeVar}, we can see that there exists $t^\prime \in [T]$ such that $\sigma_{t-Q-1}(\*x_t; \lambda_T^2) \leq \lambda_T$.
    Therefore, we can further obtain 
    $\min_{t \in [T]} \sigma_{t-Q-1}(\*x_t; 0) 
    \leq \min_{t \in [T]} \sigma_{t-Q-1}(\*x_t; \lambda_T^2) 
    \leq \lambda_T
    \leq \sqrt{ 2 T/Q } \exp(- \widetilde{C}_{\rm SE} (T/Q)^{\frac{1}
    {d+1}})$ for $T \geq Q \overline{T}_{\rm SE}$.
    For $T < Q \overline{T}_{\rm SE}$, it is obvious that $\sigma_{t-Q-1}(\*x_t; 0) \leq k(\*x_t, \*x_t) \leq 1$ from Assumption~\ref{assump:function}.

    Regarding the sum of the posterior standard deviations, by repeatedly picking the index $t^\prime$ that satisfies $\sigma_{t^\prime-Q-1}(\*x_{t^\prime}; \lambda_t^2) \leq \lambda_t$ for all $t \geq Q \overline{T}_{\rm SE}$ as with \citep{iwazaki2025gaussian}, we can see that
    \begin{align}
        \sum_{t=1}^T \min \cbr{2B, c_t\sigma_{t-Q-1}(\*x_t) }
        &= \sum_{t=1}^T \min \cbr{2B, c_t\sigma_{t-Q-1}(\*x_t ; 0) } \\
        &\leq 2BQ\overline{T}_{\rm SE} + c_T \sum_{t=Q\overline{T}_{\rm SE}}^{T} \lambda_t \\
        &\leq 2BQ \overline{T}_{\rm SE} + \sqrt{2} c_T \int_{Q\overline{T}_{\rm SE}-1}^T \sqrt{t/Q} \exp\rbr{-\frac{1}{2}\widetilde{C}_{\rm SE} (t/Q)^{\frac{1}{d+1}}} {\rm d}t\\
        &\leq 2BQ \overline{T}_{\rm SE} + \sqrt{2} c_T \int_{0}^T \sqrt{t/Q} \exp\rbr{-\frac{1}{2}\widetilde{C}_{\rm SE} (t/Q)^{\frac{1}{d+1}}} {\rm d}t,
    \end{align}
    where the second inequality follows from the fact that the function $g(t) \coloneqq (t/Q)\exp(-\widetilde{C}_{\rm SE} (t/Q)^{1/(d+1)})$ is non-increasing for $t \geq Q\overline{T}_{\rm SE} - 1$. 
    In fact, we have
    \begin{align}
        g^\prime(t) = \frac{1}{Q} \exp\rbr{-\widetilde{C}_{\rm SE} (t/Q)^{\frac{1}{d+1}}} \rbr{1 - \frac{\widetilde{C}_{\rm SE}}{d+1} (t/Q)^{\frac{1}{d+1}}},
    \end{align}
    which implies $g^\prime(t) \leq 0$ for $t \geq Q\overline{T}_{\rm SE} - 1 \geq Q (d+1)^{d+1}/\widetilde{C}_{\rm SE}^{d+1}$. 
    Let $u$ be $u = \widetilde{C}_{\rm SE} t^{\frac{1}{d+1}} / 2$.
    Then, we can obtain $t = \rbr{2u / \widetilde{C}_{\rm SE}}^{d+1}$ and ${\rm d}t / {\rm d}u = \rbr{2(d+1) / \widetilde{C}_{\rm SE}} \rbr{2u / \widetilde{C}_{\rm SE}}^{d}$.
    Therefore, we can obtain
    \begin{align}
        \int_{0}^T \sqrt{t/Q} \exp\rbr{-\frac{1}{2}\widetilde{C}_{\rm SE} (t/Q)^{\frac{1}{d+1}}} {\rm d}t
        &\leq Q \int_{0}^{\infty} \sqrt{t} \exp\rbr{-\frac{1}{2}\widetilde{C}_{\rm SE} t^{\frac{1}{d+1}}} {\rm d}t \\
        &= Q (d+1) \rbr{\frac{2}{\widetilde{C}_{\rm SE}}}^{\frac{3d+3}{2}} \int_{0}^{\infty} u^{\frac{3d+1}{2}} \exp(-u) {\rm d}u \\
        &= Q (d+1) \rbr{\frac{2}{\widetilde{C}_{\rm SE}}}^{\frac{3d+3}{2}} \Gamma\rbr{\frac{3d+2}{3}},
    \end{align}
    where the last equality uses the definition of the Gamma function.

    Next, for Mat\'ern kernels, we set $\lambda_t^2 = 2\widetilde{C}_{\rm Mat} Q^{\frac{2\nu}{d}} t^{-\frac{2\nu}{d}} (\ln  (t/Q))^{\frac{2\nu}{d}}$ and $\overline{T}_{\rm Mat} = \max\{4, \underline{T}_{\rm Mat}, \underline{T}^{(\lambda)}_{\rm Mat}\}$ with $\widetilde{C}_{\rm Mat} = \rbr{2 + \frac{2\nu}{d}}^{\frac{2\nu}{d}} (8C_{\rm Mat})^{1+\frac{2\nu}{d}}$. 
    Then since $\lceil t / Q \rceil \leq 2t / Q$ for all $t \geq Q \overline{T}_{\rm Mat} \geq Q$, we can obtain, for any $t \geq Q \overline{T}_{\rm Mat}$,
    \begin{align}
        \gamma_{\lceil t / Q \rceil }(\lambda_t^2) 
        &\leq C_{\rm Mat} \rbr{\frac{2t}{Q\lambda_t^2} }^{\frac{d}{2\nu+d}} \sbr{\ln \rbr{\frac{2t}{Q\lambda_t^2} }}^{\frac{2\nu}{2\nu+d}} \\
        &= C_{\rm Mat} \widetilde{C}_{\rm Mat}^{-\frac{d}{2\nu+d}} \frac{t}{Q} \rbr{\ln  \frac{t}{Q}}^{-\frac{2\nu}{2\nu+d}}  \sbr{\ln  \rbr{\widetilde{C}_{\rm Mat}^{-1} Q^{-\frac{2\nu + d}{d}} t^{\frac{2\nu + d}{d}} \rbr{\ln  \frac{t}{Q}}^{-\frac{2\nu}{d}} }}^{\frac{2\nu}{2\nu+d}} \\
        &= C_{\rm Mat} \widetilde{C}_{\rm Mat}^{-\frac{d}{2\nu+d}} \frac{t}{Q} \rbr{\ln  \frac{t}{Q}}^{-\frac{2\nu}{2\nu+d}}  \sbr{\ln  \rbr{\widetilde{C}_{\rm Mat}^{-1}} + \frac{d + 2\nu}{d} \rbr{\ln  \frac{t}{Q}} -\frac{2\nu}{d} (\ln  \ln  (t/Q)) }^{\frac{2\nu}{2\nu+d}} \\
        &\leq C_{\rm Mat} \widetilde{C}_{\rm Mat}^{-\frac{d}{2\nu+d}} \frac{t}{Q} \rbr{\ln  \frac{t}{Q}}^{-\frac{2\nu}{2\nu+d}}  \sbr{\frac{2d + 2\nu}{d} \rbr{\ln  \frac{t}{Q}}}^{\frac{2\nu}{2\nu+d}} \\
        &= C_{\rm Mat} \widetilde{C}_{\rm Mat}^{-\frac{d}{2\nu+d}} \frac{t}{Q} \rbr{\frac{2d + 2\nu}{d}}^{\frac{2\nu}{2\nu+d}}, 
    \end{align}
    where the fourth line holds because $\widetilde{C}_{\rm Mat} \geq 1 \Rightarrow \widetilde{C}_{\rm Mat} \geq Q/t \Leftrightarrow \ln  (\widetilde{C}_{\rm Mat}^{-1}) \leq \ln  (t/Q)$ for $t \geq Q$.
    Furthermore, 
    \begin{align}
        C_{\rm Mat} \widetilde{C}_{\rm Mat}^{-\frac{d}{2\nu+d}} \frac{t}{Q} \rbr{\frac{2d + 2\nu}{d}}^{\frac{2\nu}{2\nu+d}} \leq \frac{t-1}{4Q} 
        &\Leftrightarrow 4 C_{\rm Mat} \frac{t}{t-1} \rbr{\frac{2d + 2\nu}{d}}^{\frac{2\nu}{2\nu+d}} \leq \widetilde{C}_{\rm Mat}^{\frac{d}{2\nu+d}} \\
        &\Leftrightarrow \rbr{\frac{4 C_{\rm Mat} t}{t-1}}^{1 + \frac{2\nu}{d}} \rbr{2 + \frac{2\nu}{d}}^{\frac{2\nu}{d}} \leq \widetilde{C}_{\rm Mat} \\
        &\Leftarrow \rbr{8 C_{\rm Mat}}^{1 + \frac{2\nu}{d}} \rbr{2 + \frac{2\nu}{d}}^{\frac{2\nu}{d}} \leq \widetilde{C}_{\rm Mat}.
    \end{align}
    Combining the above inequalities, we can confirm $4Q \gamma_{\lceil t/ Q \rceil}(\lambda_t^2) \leq t-1$ for all $t \geq Q \overline{T}_{\rm Mat}$. 
    Therefore, as with the case of SE kernels, from Eq.~\eqref{eq:EllipticalPotentialCountLargeVar}, we can obtain 
    $\min_{t \in [T]} \sigma_{t-Q-1}(\*x_t; 0) 
    \leq \min_{t \in [T]} \sigma_{t-Q-1}(\*x_t; \lambda_T^2) 
    \leq \lambda_T
    \leq  \sqrt{2} \widetilde{C}_{\rm Mat}^{1/2} (T/Q)^{-\frac{\nu}{d}} (\ln  (T/Q))^{\frac{\nu}{d}}
    $ for $T \geq Q \overline{T}_{\rm Mat}$.

    Regarding the sum of posterior deviations, by repeatedly picking the index $t^\prime$ that satisfies $\sigma_{t^\prime-Q-1}(\*x_{t^\prime}; \lambda_t^2) \leq \lambda_t$ for all $t \geq Q \overline{T}_{\rm Mat}$ as with \citep{iwazaki2025gaussian}, we can see that
    \begin{align}
        \sum_{t=1}^T \min \cbr{2B, c_t \sigma_{t-Q-1}(\*x_t)}
        &= \sum_{t=1}^T \min \cbr{2B, c_t \sigma_{t-Q-1}(\*x_t; 0)} \\
        &\leq 2BQ\overline{T}_{\rm Mat} + c_T \sum_{t=Q\overline{T}_{\rm Mat}}^{T} \lambda_t \\
        &\leq 2BQ\overline{T}_{\rm Mat} + \sqrt{2} c_T \widetilde{C}_{\rm Mat}^{1/2} \int_{Q\overline{T}_{\rm Mat} - 1}^{T} \rbr{\frac{t}{Q}}^{-\frac{\nu}{d}} \rbr{\ln  \frac{t}{Q}}^{\frac{\nu}{d}} {\rm d}t \\
        &\leq 2BQ\overline{T}_{\rm Mat} + \sqrt{2} c_T \widetilde{C}_{\rm Mat}^{1/2} \int_{Q}^{T} \rbr{\frac{t}{Q}}^{-\frac{\nu}{d}} \rbr{\ln  \frac{t}{Q}}^{\frac{\nu}{d}} {\rm d}t,
    \end{align}
    where the second inequality follows from the fact that the function $g(t) \coloneqq (t/Q)^{-\frac{2\nu}{d}} (\ln  (t/Q))^{\frac{2\nu}{d}}$ is non-increasing for $t \geq Q\overline{T}_{\rm Mat} - 1 \geq 3Q > eQ$. Indeed, we have
    \begin{align}
        g^\prime(t) = \frac{1}{Q} \frac{2\nu}{d} (t/Q)^{-\frac{2\nu}{d}-1} (\ln  (t/Q))^{\frac{2\nu}{d}}\rbr{(\ln  (t/Q))^{-1} - 1},
    \end{align}
    which implies $g^\prime(t) \leq 0$ for $t \geq eQ$.
    The desired results are obtained by bounding the quantity 
    $\int_{Q}^{T} (t/Q)^{-\frac{\nu}{d}} (\ln  (t/Q))^{\frac{\nu}{d}} {\rm d}t
    = Q \int_{1}^{T/Q} t^{-\frac{\nu}{d}} (\ln  t)^{\frac{\nu}{d}} {\rm d}t
    $
    from above.
    When $d > \nu$, we have
    \begin{align}
        Q\int_{1}^{T/Q} t^{-\frac{\nu}{d}} (\ln  t)^{\frac{\nu}{d}} {\rm d}t
        &\leq Q (\ln  (T/Q))^{\frac{\nu}{d}} \int_1^{T/Q} t^{-\frac{\nu}{d}} {\rm d}t \\
        &= Q (\ln  (T/Q))^{\frac{\nu}{d}} \sbr{\frac{d}{d-\nu} t^{\frac{d-\nu}{d}}}_1^{T/Q} \\
        &\leq \frac{d}{d-\nu} T^{\frac{d-\nu}{d}} Q^{\frac{\nu}{d}} (\ln  (T/Q))^{\frac{\nu}{d}}.
    \end{align}
    When $d = \nu$,
    \begin{equation}
        Q\int_{1}^{T/Q} t^{-1} (\ln  t) \,{\rm d}t 
        = Q \left[ \frac{1}{2}(\ln  t)^2 \right]_1^{T/Q} 
        = \frac{Q}{2}(\ln  (T/Q))^2
    \end{equation}
    When $d < \nu$, we have
    \begin{align}
        Q\int_{1}^{T/Q} t^{-\frac{\nu}{d}} (\ln  t)^{\frac{\nu}{d}} {\rm d}t
        &= Q \int_{0}^{\ln  (T/Q)} e^{-\rbr{\frac{\nu}{d}-1}u} u^{\frac{\nu}{d}} {\rm d}u ~~~(\because u = \ln  t) \\
        &\leq Q \int_{0}^{\infty} e^{-\rbr{\frac{\nu}{d}-1}u} u^{\frac{\nu}{d}} {\rm d}u \\
        &= \frac{Q \Gamma(\frac{\nu}{d} + 1)}{\rbr{\frac{\nu}{d}-1}^{\frac{\nu}{d}+1}},
    \end{align}
    where the last equality follows from the standard property of the Gamma function: $\int_{0}^{\infty} e^{-\lambda u} u^b {\rm d}u = \Gamma(b+1)/\lambda^{b+1}$ for any $\lambda > 0$ and $b > -1$. 
\end{proof}

\paragraph{Note.}
We can obtain a result similar to Lemma~\ref{lemPosteriorSTDUpperBoundNoiseless_informal} also by using Lemma~\ref{lemInflatedEllipticalPotentialLemmaRevised} instead of Eq.~\eqref{eq:EllipticalPotentialCountLargeVar}, but in such a case, the order of the cumulative regret upper bound with respect to $Q$ becomes worse from $\Theta(Q)$ to $\omega(Q)$.
Here, we consider the case of SE kernels, but a similar discussion holds for Mat\'ern kernels.
If we use Lemma~\ref{lemInflatedEllipticalPotentialLemmaRevised} instead of Eq.~\eqref{eq:EllipticalPotentialCountLargeVar}, the necessary condition with respect to $T \geq Q \underline{T}_{\rm SE}$ is changed.
Specifically, if we naively use Lemma~\ref{lemInflatedEllipticalPotentialLemmaRevised}, then the necessary condition with respect to $T \geq Q\underline{T}_{\rm SE}$ can be tightened as $T \geq \underline{T}_{\rm SE}$.
This is because, to apply the MIG upper bound based on Eq.~\eqref{eq:EllipticalPotentialCountLargeVar}, $T / Q \geq \underline{T}_{\rm SE}$ is required, although only $T \geq \underline{T}_{\rm SE}$ is required for the case of Lemma~\ref{lemInflatedEllipticalPotentialLemmaRevised}.
On the other hand, the necessary condition with respect to $T \geq Q \underline{T}_{\rm SE}^{(\lambda)} \Leftrightarrow \lambda_T^2 \leq \underline{\lambda}_{\rm SE}$ is degraded.
This is because, if we naively use Lemma~\ref{lemInflatedEllipticalPotentialLemmaRevised}, then $\lambda_t^2$ is set as $\lambda_t^2 = 2 t \exp(- \widetilde{C}_{\rm SE} (t/Q)^{\frac{1}{d+1}})$, where the $1 / Q$ factor is removed.
Thus, the necessary condition with respect to $\lambda_T^2 \leq \underline{\lambda}_{\rm SE}$ becomes worse from $T = \Omega(Q)$ to $T = \omega(Q)$.
Thus, we employ the proof using Eq.~\eqref{eq:EllipticalPotentialCountLargeVar} to obtain the $O(Q)$ upper bound.

\corSimpleRegretUpperBoundNoiseless*
\begin{proof}
    From the settings of $1/N_t$ and $p_t^{(g)}$, the dominant term of the simple regret can be written as
    \begin{align}
        \EE[r_T]
        &\leq \EE \sbr{\min_{t \in [T]} f(\*x^*) - f(\*x_t)} \\
        &\leq \frac{1}{T} \EE \sbr{R_T}.
    \end{align}
    Thus, from Theorem~\ref{thm:CumulativeRegretUpperBoundNoiseless}, we can obtain the desired result.
\end{proof}

\end{document}

%% file: macros.tex
\def\*#1{\boldsymbol{#1}}

\theoremstyle{plain}

\newtheorem{theorem}{Theorem}[section]

\newtheorem{lemma}[theorem]{Lemma}

\theoremstyle{definition}
\newtheorem{definition}[theorem]{Definition}
\newtheorem{assumption}[theorem]{Assumption}
\theoremstyle{remark}
\newtheorem{remark}[theorem]{Remark}

\DeclareMathOperator*{\argmin}{arg\,min}
\DeclareMathOperator*{\argmax}{arg\,max}

\newcommand{\lw}[1]{\smash{\lower2.ex\hbox{#1}}}

\newcommand{\rbr}[1]{\left(#1\right)}
\newcommand{\sbr}[1]{\left[#1\right]}
\newcommand{\cbr}[1]{\left\{#1\right\}}

\newcommand{\RR}{\mathbb{R}}
\newcommand{\NN}{\mathbb{N}}

\newcommand{\EE}{\mathbb{E}}

\newcommand{\cD}{{\cal D}}

\newcommand{\cF}{{\cal F}}
\newcommand{\cG}{{\cal G}}
\newcommand{\cH}{{\cal H}}

\newcommand{\cM}{{\cal M}}
\newcommand{\cN}{{\cal N}}

\newcommand{\cP}{{\cal P}}

\newcommand{\cS}{{\cal S}}
\newcommand{\cT}{{\cal T}}

\newcommand{\cX}{{\cal X}}

%% file: main.bbl
\begin{thebibliography}{41}
\providecommand{\natexlab}[1]{#1}
\providecommand{\url}[1]{\texttt{#1}}
\expandafter\ifx\csname urlstyle\endcsname\relax
  \providecommand{\doi}[1]{doi: #1}\else
  \providecommand{\doi}{doi: \begingroup \urlstyle{rm}\Url}\fi

\bibitem[Abbasi-Yadkori(2013)]{abbasi2013online}
Yasin Abbasi-Yadkori.
\newblock \emph{Online learning for linearly parametrized control problems}.
\newblock PhD thesis, University of Alberta, 2013.

\bibitem[Bull(2011)]{bull2011convergence}
Adam~D Bull.
\newblock Convergence rates of efficient global optimization algorithms.
\newblock \emph{Journal of Machine Learning Research}, 12\penalty0 (10), 2011.

\bibitem[Cai and Scarlett(2021)]{cai2021on}
Xu~Cai and Jonathan Scarlett.
\newblock On lower bounds for standard and robust {G}aussian process bandit optimization.
\newblock In \emph{Proceedings of the 38th International Conference on Machine Learning}, volume 139, pages 1216--1226. PMLR, 2021.

\bibitem[Calandriello et~al.(2019)Calandriello, Carratino, Lazaric, Valko, and Rosasco]{calandriello2019gaussian}
Daniele Calandriello, Luigi Carratino, Alessandro Lazaric, Michal Valko, and Lorenzo Rosasco.
\newblock Gaussian process optimization with adaptive sketching: Scalable and no regret.
\newblock In \emph{Proceedings of the 32nd Conference on Learning Theory}, volume~99, pages 533--557. PMLR, 2019.

\bibitem[Chowdhury and Gopalan(2017)]{Chowdhury2017-on}
Sayak~Ray Chowdhury and Aditya Gopalan.
\newblock On kernelized multi-armed bandits.
\newblock In \emph{Proceedings of the 34th International Conference on Machine Learning}, volume~70, pages 844--853. PMLR, 2017.

\bibitem[Chowdhury and Gopalan(2019)]{chowdhury2019batch}
Sayak~Ray Chowdhury and Aditya Gopalan.
\newblock On batch {B}ayesian optimization.
\newblock \emph{arXiv:1911.01032}, 2019.

\bibitem[Contal et~al.(2013)Contal, Buffoni, Robicquet, and Vayatis]{Contal2013-Parallel}
Emile Contal, David Buffoni, Alexandre Robicquet, and Nicolas Vayatis.
\newblock Parallel {G}aussian process optimization with upper confidence bound and pure exploration.
\newblock In \emph{Proceedings of the 2013th European Conference on Machine Learning and Knowledge Discovery in Databases}, pages 225--240. Springer-Verlag, 2013.

\bibitem[De~Freitas et~al.(2012)De~Freitas, Smola, and Zoghi]{freitas2012exponential}
Nando De~Freitas, Alex~J. Smola, and Masrour Zoghi.
\newblock Exponential regret bounds for {G}aussian process bandits with deterministic observations.
\newblock In \emph{Proceedings of the 29th International Conference on Machine Learning}, page 955–962. Omnipress, 2012.

\bibitem[Desautels et~al.(2014)Desautels, Krause, and Burdick]{Desautels2014-Parallelizing}
Thomas Desautels, Andreas Krause, and Joel~W. Burdick.
\newblock Parallelizing exploration-exploitation tradeoffs in {Gaussian} process bandit optimization.
\newblock \emph{Journal of Machine Learning Research}, 15:\penalty0 4053--4103, 2014.

\bibitem[Flynn and Reeb(2025)]{flynn2025-tighter}
Hamish Flynn and David Reeb.
\newblock Tighter confidence bounds for sequential kernel regression.
\newblock In \emph{Proceedings of The 28th International Conference on Artificial Intelligence and Statistics}, volume 258, pages 3844--3852. PMLR, 2025.

\bibitem[Hern{\'a}ndez-Lobato et~al.(2017)Hern{\'a}ndez-Lobato, Requeima, Pyzer-Knapp, and Aspuru-Guzik]{hernandez-lobato2017parallel}
Jos{\'e}~Miguel Hern{\'a}ndez-Lobato, James Requeima, Edward~O. Pyzer-Knapp, and Al{\'a}n Aspuru-Guzik.
\newblock Parallel and distributed {T}hompson sampling for large-scale accelerated exploration of chemical space.
\newblock In \emph{Proceedings of the 34th International Conference on Machine Learning}, volume~70, pages 1470--1479. PMLR, 2017.

\bibitem[Iwazaki(2025{\natexlab{a}})]{iwazaki2025gaussian}
Shogo Iwazaki.
\newblock Gaussian process upper confidence bound achieves nearly-optimal regret in noise-free {G}aussian process bandits.
\newblock In \emph{Advances in Neural Information Processing Systems}, volume~38, pages 65863--65886. Curran Associates, Inc., 2025{\natexlab{a}}.

\bibitem[Iwazaki(2025{\natexlab{b}})]{iwazaki2025improved}
Shogo Iwazaki.
\newblock Improved regret bounds for {G}aussian process upper confidence bound in {B}ayesian optimization.
\newblock In \emph{Advances in Neural Information Processing Systems}, volume~38, pages 96922--96964. Curran Associates, Inc., 2025{\natexlab{b}}.

\bibitem[Iwazaki(2026)]{iwazaki2026tighter}
Shogo Iwazaki.
\newblock Tighter regret lower bound for {G}aussian process bandits with squared exponential kernel in hypersphere.
\newblock In \emph{Proceedings of the 43rd International Conference on Machine Learning}. PMLR, 2026.
\newblock To appear.

\bibitem[Iwazaki and Takeno(2025)]{iwazaki2025-improvedGPbandit}
Shogo Iwazaki and Shion Takeno.
\newblock Improved regret analysis in {G}aussian process bandits: Optimality for noiseless reward, {RKHS} norm, and non-stationary variance.
\newblock In \emph{Proceedings of the 42nd International Conference on Machine Learning}, volume 267, pages 26642--26672. PMLR, 2025.

\bibitem[Janz(2021)]{janz_2021}
David Janz.
\newblock \emph{Sequential decision making with feature-linear models}.
\newblock PhD thesis, University of Cambridge, 2021.

\bibitem[Janz et~al.(2020)Janz, Burt, and Gonzalez]{janz2020-bandit}
David Janz, David Burt, and Javier Gonzalez.
\newblock Bandit optimisation of functions in the {Matérn kernel RKHS}.
\newblock In \emph{Proceedings of the 23rd International Conference on Artificial Intelligence and Statistics}, volume 108, pages 2486--2495. PMLR, 2020.

\bibitem[Kanagawa et~al.(2018)Kanagawa, Hennig, Sejdinovic, and Sriperumbudur]{kanagawa2018gaussian}
Motonobu Kanagawa, Philipp Hennig, Dino Sejdinovic, and Bharath~K Sriperumbudur.
\newblock {G}aussian processes and kernel methods: A review on connections and equivalences.
\newblock \emph{arXiv:1807.02582}, 2018.

\bibitem[Kanagawa et~al.(2025)Kanagawa, Hennig, Sejdinovic, and Sriperumbudur]{kanagawa2025gaussian}
Motonobu Kanagawa, Philipp Hennig, Dino Sejdinovic, and Bharath~K. Sriperumbudur.
\newblock Gaussian processes and reproducing kernels: Connections and equivalences.
\newblock \emph{arXiv:2506.17366}, 2025.

\bibitem[Kandasamy et~al.(2018)Kandasamy, Krishnamurthy, Schneider, and P{\'o}czos]{Kandasamy2018-Parallelised}
Kirthevasan Kandasamy, Akshay Krishnamurthy, Jeff Schneider, and Barnabas P{\'o}czos.
\newblock Parallelised {B}ayesian optimisation via {T}hompson sampling.
\newblock In \emph{Proceedings of the 21st International Conference on Artificial Intelligence and Statistics}, volume~84, pages 133--142. PMLR, 2018.

\bibitem[Kim and Sanz-Alonso(2025)]{kim2025enhancing}
Hwanwoo Kim and Daniel Sanz-Alonso.
\newblock Enhancing {G}aussian process surrogates for optimization and posterior approximation via random exploration.
\newblock \emph{SIAM/ASA Journal on Uncertainty Quantification}, 13\penalty0 (3):\penalty0 1054--1084, 2025.

\bibitem[Li and Scarlett(2022)]{li2022gaussian}
Zihan Li and Jonathan Scarlett.
\newblock {G}aussian process bandit optimization with few batches.
\newblock In \emph{Proceedings of The 25th International Conference on Artificial Intelligence and Statistics}, volume 151, pages 92--107. PMLR, 2022.

\bibitem[Li and Scarlett(2024)]{li2024regret}
Zihan Li and Jonathan Scarlett.
\newblock Regret bounds for noise-free cascaded kernelized bandits.
\newblock \emph{Transactions on Machine Learning Research}, 2024.
\newblock URL \url{https://openreview.net/forum?id=oCfamUtecN}.

\bibitem[Lyu et~al.(2019)Lyu, Yuan, and Tsang]{lyu2019efficient}
Yueming Lyu, Yuan Yuan, and Ivor~W Tsang.
\newblock Efficient batch black-box optimization with deterministic regret bounds.
\newblock \emph{arXiv:1905.10041}, 2019.

\bibitem[Ma et~al.(2026)Ma, Chen, and Scarlett]{ma2026batched}
Chenkai Ma, Keqin Chen, and Jonathan Scarlett.
\newblock Batched kernelized bandits: Refinements and extensions.
\newblock \emph{arXiv:2603.12627}, 2026.

\bibitem[Nava et~al.(2022)Nava, Mutny, and Krause]{nava2022diversified}
Elvis Nava, Mojmir Mutny, and Andreas Krause.
\newblock Diversified sampling for batched {B}ayesian optimization with determinantal point processes.
\newblock In \emph{Proceedings of The 25th International Conference on Artificial Intelligence and Statistics}, volume 151, pages 7031--7054, 2022.

\bibitem[Pedregosa et~al.(2011)Pedregosa, Varoquaux, Gramfort, Michel, Thirion, Grisel, Blondel, Müller, Nothman, Louppe, Prettenhofer, Weiss, Dubourg, Vanderplas, Passos, Cournapeau, Brucher, Perrot, and Édouard Duchesnay]{scikit-learn}
Fabian Pedregosa, Gaël Varoquaux, Alexandre Gramfort, Vincent Michel, Bertrand Thirion, Olivier Grisel, Mathieu Blondel, Andreas Müller, Joel Nothman, Gilles Louppe, Peter Prettenhofer, Ron Weiss, Vincent Dubourg, Jake Vanderplas, Alexandre Passos, David Cournapeau, Matthieu Brucher, Matthieu Perrot, and Édouard Duchesnay.
\newblock Scikit-learn: Machine learning in {P}ython.
\newblock \emph{Journal of Machine Learning Research}, 12:\penalty0 2825--2830, 2011.

\bibitem[Rasmussen and Williams(2005)]{Rasmussen2005-Gaussian}
Carl~Edward Rasmussen and Christopher K.~I. Williams.
\newblock \emph{Gaussian Processes for Machine Learning (Adaptive Computation and Machine Learning)}.
\newblock The MIT Press, 2005.

\bibitem[Russo and Van~Roy(2014)]{Russo2014-learning}
Daniel Russo and Benjamin Van~Roy.
\newblock Learning to optimize via posterior sampling.
\newblock \emph{Mathematics of Operations Research}, 39\penalty0 (4):\penalty0 1221--1243, 2014.

\bibitem[Salgia et~al.(2024)Salgia, Vakili, and Zhao]{salgia2024random}
Sudeep Salgia, Sattar Vakili, and Qing Zhao.
\newblock Random exploration in {B}ayesian optimization: Order-optimal regret and computational efficiency.
\newblock In \emph{Proceedings of the 41st International Conference on Machine Learning}, volume 235, pages 43112--43141. PMLR, 2024.

\bibitem[Scarlett et~al.(2017)Scarlett, Bogunovic, and Cevher]{scarlett2017lower}
Jonathan Scarlett, Ilija Bogunovic, and Volkan Cevher.
\newblock Lower bounds on regret for noisy {G}aussian process bandit optimization.
\newblock In \emph{Proceedings of the 30th Conference on Learning Theory}, volume~65, pages 1723--1742. PMLR, 2017.

\bibitem[Srinivas et~al.(2010)Srinivas, Krause, Kakade, and Seeger]{Srinivas2010-Gaussian}
Niranjan Srinivas, Andreas Krause, Sham~M. Kakade, and Matthias~W. Seeger.
\newblock Gaussian process optimization in the bandit setting: No regret and experimental design.
\newblock In \emph{Proceedings of the 27th International Conference on Machine Learning}, pages 1015--1022. Omnipress, 2010.

\bibitem[Sugiura et~al.(2026)Sugiura, Takeuchi, and Takeno]{sugiura2026randomized}
Shuhei Sugiura, Ichiro Takeuchi, and Shion Takeno.
\newblock Randomized kriging believer for parallel {B}ayesian optimization with regret bounds.
\newblock \emph{arXiv:2603.01470}, 2026.

\bibitem[Vakili(2022)]{vakili2022open}
Sattar Vakili.
\newblock Open problem: {R}egret bounds for noise-free kernel-based bandits.
\newblock In \emph{Proceedings of the 35th Conference on Learning Theory}, volume 178, pages 5624--5629. PMLR, 2022.

\bibitem[Vakili et~al.(2021{\natexlab{a}})Vakili, Bouziani, Jalali, Bernacchia, and Shiu]{vakili2021-optimal}
Sattar Vakili, Nacime Bouziani, Sepehr Jalali, Alberto Bernacchia, and Da-shan Shiu.
\newblock Optimal order simple regret for {G}aussian process bandits.
\newblock In \emph{Advances in Neural Information Processing Systems}, volume~34, pages 21202--21215. Curran Associates, Inc., 2021{\natexlab{a}}.

\bibitem[Vakili et~al.(2021{\natexlab{b}})Vakili, Khezeli, and Picheny]{vakili2021-information}
Sattar Vakili, Kia Khezeli, and Victor Picheny.
\newblock On information gain and regret bounds in {G}aussian process bandits.
\newblock In \emph{Proceedings of The 24th International Conference on Artificial Intelligence and Statistics}, volume 130, pages 82--90. PMLR, 2021{\natexlab{b}}.

\bibitem[Vakili et~al.(2021{\natexlab{c}})Vakili, Moss, Artemev, Dutordoir, and Picheny]{vakili2021-scalable}
Sattar Vakili, Henry Moss, Artem Artemev, Vincent Dutordoir, and Victor Picheny.
\newblock Scalable {T}hompson sampling using sparse {G}aussian process models.
\newblock In \emph{Advances in Neural Information Processing Systems}, volume~34, pages 5631--5643. Curran Associates, Inc., 2021{\natexlab{c}}.

\bibitem[Vakili et~al.(2022)Vakili, Scarlett, Shiu, and Bernacchia]{vakili2022improved}
Sattar Vakili, Jonathan Scarlett, Da-Shan Shiu, and Alberto Bernacchia.
\newblock Improved convergence rates for sparse approximation methods in kernel-based learning.
\newblock In \emph{Proceedings of the 39th International Conference on Machine Learning}, volume 162, pages 21960--21983. PMLR, 2022.

\bibitem[Vakili et~al.(2023)Vakili, Ahmed, Bernacchia, and Pike-Burke]{vakili2023delayed}
Sattar Vakili, Danyal Ahmed, Alberto Bernacchia, and Ciara Pike-Burke.
\newblock Delayed feedback in kernel bandits.
\newblock In \emph{Proceedings of the 40th International Conference on Machine Learning}, volume 202, pages 34779--34792. PMLR, 2023.

\bibitem[Valko et~al.(2013)Valko, Korda, Munos, Flaounas, and Cristianini]{valko2013finite}
Michal Valko, Nathan Korda, R\'{e}mi Munos, Ilias Flaounas, and Nello Cristianini.
\newblock Finite-time analysis of kernelised contextual bandits.
\newblock In \emph{Proceedings of the 29th Conference on Uncertainty in Artificial Intelligence}, UAI'13, page 654–663. AUAI Press, 2013.

\bibitem[Verma et~al.(2022)Verma, Dai, and Low]{verma2022bayesian}
Arun Verma, Zhongxiang Dai, and Bryan Kian~Hsiang Low.
\newblock {B}ayesian optimization under stochastic delayed feedback.
\newblock In \emph{Proceedings of the 39th International Conference on Machine Learning}, volume 162, pages 22145--22167. PMLR, 2022.

\end{thebibliography}
